\newtheorem{thm}{Theorem}
\newtheorem{lemma}{Lemma}
\newtheorem{defn}{Definition}
\newtheorem{prop}{Proposition}
\theoremstyle{remark}
\newtheorem*{remark}{Remark}
\def\eqref#1{equation~\ref{#1}}
\def\1{\bm{1}}
\DeclareMathAlphabet{\mathsfit}{\encodingdefault}{\sfdefault}{m}{sl}
\SetMathAlphabet{\mathsfit}{bold}{\encodingdefault}{\sfdefault}{bx}{n}
\pgfplotsset{compat=newest}
\lstdefinelanguage{alta}{
  basicstyle=\fontsize{8}{9}\selectfont\ttfamily,
  numberstyle=\scriptsize\ttfamily,
  keywordstyle=\bfseries\color{dmpurple500},
  keywordstyle=[2]\bfseries\color{dmblue500},
  commentstyle=\color{dmgray600},
  identifierstyle=\color{black},
  stringstyle=\color{gdmgreen600},
  alsoletter={@=<>!},
  morekeywords={def,and,or,not,nor,if,else,elif,return,lambda,for,in,range,len,int,list,max,break},
  keywords=[2]{var,numeric\_var,qkv,v_relative,START,BUCKETS,NUM\_POS,program\_spec,halt\_spec},
  morecomment=[l]{\#},
  morestring=[b]",
}
\title{ALTA: Compiler-Based Analysis of Transformers}
\author{Peter Shaw$^1$,
James Cohan$^2$,
Jacob Eisenstein$^1$,
\bf Kenton Lee$^1$,
Jonathan Berant$^1$,\\
\bf Kristina Toutanova$^1$ \AND
\addr $^1$Google DeepMind, $^2$Google}
\begin{document}

\maketitle

\begin{abstract}

We propose a new programming language called ALTA and a compiler that can map ALTA programs to Transformer weights. ALTA is inspired by RASP, a language proposed by \citet{weiss2021thinking}, and Tracr~\citep{lindner2024tracr}, a compiler from RASP programs to Transformer weights. ALTA complements and extends this prior work, offering the ability to express loops and to compile programs to Universal Transformers, among other advantages.
 ALTA allows us to constructively show how Transformers can represent length-invariant algorithms for computing parity and addition, as well as a solution to the SCAN benchmark of compositional generalization tasks, without requiring intermediate scratchpad decoding steps.
 We also propose tools to analyze cases where the expressibility of an algorithm is established, but end-to-end training on a given training set fails to induce behavior consistent with the desired algorithm. To this end, we explore training from ALTA execution traces as a more fine-grained supervision signal. This enables additional experiments and theoretical analyses relating the learnability of various algorithms to data availability and modeling decisions, such as positional encodings.
We make the ALTA framework --- language specification, symbolic interpreter, and weight compiler --- available to the community to enable further applications and insights.\footnote{Code is available at \url{https://github.com/google-deepmind/alta}.}

\end{abstract}

\section{Introduction}

There has been significant discussion and debate about the degree to which Transformers can perform compositional generalization and ``System 2'' reasoning, prompted by negative results on various evaluations for certain classes of Transformers \cite[e.g.,][]{dziri2023faith,qiu2023phenomenal,shaw2021compositional,wu2024reasoning,deletang2022neural,mitchell2023comparing,valmeekam2023planning}. Do such negative results reflect some mutable aspect of how such models were trained, or more fundamental architectural limitations? To better understand the conceptual limitations of Transformers, it would be useful to have an interpretable framework for understanding whether and how Transformers can represent and learn solutions to various tasks of interest. Such a framework could also potentially help elucidate a path towards improving these capabilities.

We present a new framework for compiling interpretable, symbolic programs to Transformer model weights. The framework is based on a new programming language called ALTA, \underline{A} \underline{L}anguage for \underline{T}ransformer \underline{A}nalysis. It includes an interpreter for symbolically executing ALTA programs, and a compiler for converting ALTA programs to Transformer model weights. 
ALTA is inspired by prior work that introduced a programming language for Transformers called RASP~\citep{weiss2021thinking}, and prior work that built a compiler from RASP programs to model weights called Tracr~\citep{lindner2024tracr}. ALTA complements and extends this prior work, with two key conceptual differences.

First, ALTA supports dynamic control flow operations such as loops. 
While \citet{zhou2023algorithms} showed how RASP programs can be executed within the context of an auto-regressive decoder to implement some forms of loops by leveraging scratchpads~\citep{nye2021show,wei2022chain}, ALTA can implicitly support such operations without relying on intermediate decoding steps. This is useful to study because such additional decoding steps can be computationally inefficient and are non-differentiable, typically necessitating additional supervision. %
ALTA accomplishes this by compiling to Transformers with layer-wise weight sharing.
From one perspective, Transformers with weight sharing are simply a special case of standard Transformers.
However, they have also been shown to have an inductive bias that improves performance on compositional tasks~\citep{csordas2021devil,ontanon2022making,yang2024looped}, which warrants further study.
ALTA also supports a conditional computation mechanism, enabling compilation of programs to Universal Transformers~\citep{dehghani2019universal}, thereby enabling new constructive expressivity results for this class of models.%

Second, ALTA represents the computation of the MLP sub-layer as a sparse set of \emph{transition rules}. We show that this enables compilation of complex programs to reasonably sized Transformers. In contrast, Tracr compiles functions over multiple variables expressed in RASP to dense lookup tables encoded in the MLP parameters, which can suffer from combinatorial explosion in the number of possible variable combinations. The ALTA compiler can leverage the sparsity expressed in the set of transition rules to reduce the number of MLP hidden dimensions required, in some cases by many orders of magnitude.
Additionally, representing the MLP sub-layer computation as a set of sparse transition rules supports new insights into the generalization potential of MLP layers, and therefore of Transformers, which we explore both theoretically and empirically.

We highlight two primary applications of this framework. First, we show new constructive expressivity results for Transformers and Universal Transformers. This includes showing how Transformers can implement length-invariant algorithms for computing parity and addition. We also demonstrate a shift-reduce parsing algorithm that solves the SCAN~\citep{lake2018generalization} benchmark of compositional generalization tasks. %
Second, we provide tools to analyze cases where the expressibility of an algorithm is established, but end-to-end training on a given training set fails to induce behavior consistent with the desired algorithm.
Specifically, we propose to use intermediate supervision from ALTA execution traces over a given training set as a learning signal. In some cases, we show this additional supervision is sufficient to learn the desired algorithm, but in other cases failures can highlight limitations of the underlying architecture due to, e.g., the type of positional encoding used.
To complement this empirical assessment, we also introduce the analytical notion of whether a program is \emph{minimal} with respect to a training set, based on whether certain components of a program could be removed without affecting the training set predictions. 
We prove that if a program is not minimal with respect to a training set, then the compiled model will contain parameters that can be freely changed without affecting any predictions on the training set, i.e. some parameters are under-specified by the training set, even with intermediate supervision.
We demonstrate cases where this analysis predicts that test set performance would be under-specified by a given training set, and show agreement with empirical results from training with intermediate supervision. 
We hope these tools can help provide insights to bridge the gap between expressibility and learnability of algorithms in Transformers.

\begin{figure}[!t]

\centering

    \includegraphics[width=0.85\columnwidth,keepaspectratio]{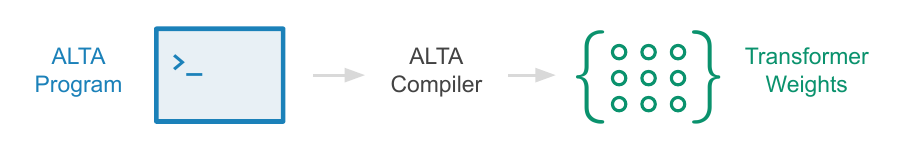}

\caption{
\textbf{Overview of ALTA.}
We propose a new programming language called ALTA, and a ``compiler'' that can map ALTA programs to Transformer weights. ALTA is inspired by RASP, a language proposed by \citet{weiss2021thinking}, and Tracr~\citep{lindner2024tracr}, a compiler from RASP programs to Transformer weights. ALTA complements and extends this prior work, offering the ability to express loops and to compile programs to Universal Transformers, among other advantages. 
}

\label{fig:overview}
\end{figure}

\clearpage

In summary, our main contributions are as follows:
\begin{itemize}
    \item We propose a new programming language called ALTA, and a compiler that can map ALTA programs to Transformer weights. ALTA supports loops and the ability to compile programs to Universal Transformers, among other advantages. (Section~\ref{sec:framework})
    \item We use ALTA to demonstrate new constructive expressivity results for Universal Transformers, showing how they can express length invariant algorithms for computing parity and addition, as well as a solution to SCAN. (Sections \ref{sec:tools}-\ref{sec:experiments})
    \item We show how the ALTA framework can be used to analyze the learnability of various algorithms with respect to a training set, both theoretically and empirically via trace supervision. (Sections \ref{sec:tools}-\ref{sec:experiments})
    \item We make the ALTA framework available to the community to support further applications and insights.
\end{itemize}

\newcommand{\Z}[3]{z^{#1}_{\langle#2,\texttt{#3}\rangle}}

\section{Proposed Framework}
\label{sec:framework}

Here we give an overview of how ALTA programs are specified, their computational model, and how they can be compiled to Transformers. More details on the ALTA program API is in Appendix \ref{sec:appendix_api}, and compilation details and examples are in Appendix \ref{sec:appendix_compiler}.

\subsection{Overview}

We give an example of an ALTA program in Figure~\ref{fig:parity_sequential}. An ALTA program specification includes three key components: a set of variables, a set of attention heads, and a ``MLP function''. We explain each of these below in the context of how they affect the execution of an ALTA program.
The computational model of an ALTA program aligns closely with the computational model of a Transformer~\citep{vaswani2017attention}. In this paper we focus on \emph{encoder-only} Transformers for simplicity. However, we note that ALTA programs can alternatively be executed in the context of a \emph{decoder-only} Transformer. This involves adding a causal attention mask and outer auto-regressive decoding loop, but does not otherwise affect the definition and compilation of ALTA programs (see Appendix~\ref{sec:appendix_decoder}).
We also focus on Transformers with layer-wise weight sharing, i.e. where all attention and MLP parameters are shared across layers. We also support Universal Transformers which have an input-dependent number of layers. %

\begin{figure}[t!]
\centering

\begin{lstlisting}[language=alta,numbers=none] 
vars = {
    # Initialize parity with input.
    "parity": var(range=2, input_init_fn=lambda x: x),
    # Whether parity has been updated.
    "done": var(range=2, position_init_fn=lambda x: x == 0),
    # Position of current element.
    "idx": var(range=NUM_POS, position_init_fn=lambda x: x),
    # Index of preceding element.
    "idx_left": var(range=NUM_POS, position_init_fn=lambda x: max(0, x - 1),
}

attention_heads = {
    # Values of `parity' and `done' for preceding element.
    "parity_left": qkv("idx_left", "idx", "parity")
    "done_left": qkv("idx_left", "idx", "done")
}

def ffn_fn(z):
  if not z["done"] and z["done_left"]:
    # Update parity based on parity of preceding element.
    z["parity"] = z["parity_left"] ^ z["parity"]
    z["done"] = 1

return program_spec(vars=vars, heads=attention_heads, ffn_fn=ffn_fn,
                    output="parity", halt_spec=halt_spec("done", 1),
                    input_range=2, position_range=NUM_POS)

\end{lstlisting}
\caption{\textbf{Example ALTA Program.} The parity program computes whether a given binary sequence contains an even or odd number of ``1'' tokens. For an input of length $N$, the \texttt{parity} variable of the final input element will equal the parity of the overall sequence after $N - 1$ layers, and computation will halt. The program specification contains all of the necessary information to compile the program to a Transformer.
}
\label{fig:parity_sequential}
\end{figure}

Notably, not all types of computation expressible by Transformers can be represented in ALTA, i.e., the range of the ALTA compiler is a relatively small subspace of all possible parameter values. For example, ALTA has limited support for numeric computation and does not support modeling of probabilistic output distributions. However, ALTA provides broad support for implementing various types of deterministic algorithms.

The ALTA framework includes an \emph{interpreter}, which symbolically executes a program, and a \emph{compiler} which compiles programs to Transformer weights. The input to an ALTA program $P \in \mathcal{P}$ is a sequence of integers inputs ($\in \mathcal{X}$) and the output is a sequence of integers of equal length ($\in \mathcal{Y}$).  The interpreter implements a function $I:\mathcal{P} \times \mathcal{X} \rightarrow \mathcal{Y}$. The interpreter is useful for development and understanding the computational model in an abstract way.
The compiler implements a function $C$ such that $\theta = C(P)$ where $T(\mathbf{x},\theta) \approx I(P,\mathbf{x})$ for all $\mathbf{x} \in \mathcal{X}$ and where $T$ denotes the output of a Transformer encoder. The equality holds up to the limits of numerical approximation for well formed programs.

\subsection{Variables} 
Similarly to \citet{lindner2024tracr}, we adopt the \emph{residual stream} view of Transformers as proposed by \citet{elhage2021mathematical}. In this view, the attention and MLP sub-layers within the Transformer read and write to the residual stream, which is represented by the activations between these sub-layers. While Transformers represent the residual stream for each element as a vector, our interpreter represents the residual stream for each element symbolically as a mapping of variables to values. The residual stream of the interpreter for the parity program of Figure~\ref{fig:parity_sequential} is shown in Figure~\ref{fig:interpreter_example}. %
The set of variables and their possible values are specified by the program. %
There are three kinds of variables in ALTA: categorical variables have bounded %
integer values, numerical variables have real-valued values, and set variables have sets of bounded integer values. Variables representing the output of attention heads can also take on a \emph{null} or \emph{undefined} value (see \S\ref{sec:framework-execution}). %
We establish a bijective mapping between variable assignments and activation vectors. Each possible value of a categorical variable is assigned a standard basis vector in the activation space, i.e., a one-hot encoding. Set variables are similarly represented, but with a multi-hot encoding. The scalar value of a numerical variable is directly represented in a single dimension. %
This mapping can be seen as establishing an approximate isomorphism with respect to the sub-layer operations of the interpreter and those of a compiled Transformer.

\begin{figure}[!t]
\begin{center}
\scalebox{0.8}{
\begin{tikzpicture}
\begin{scope}

\def\attentionspacing{0.58cm}
\def\ffnspacing{0.58cm}
\def\elementspacing{1.9cm}

\tikzstyle{dim_label} = [rectangle,font=\small]

\definecolor{change}{HTML}{c4ebff}
\definecolor{darkred}{HTML}{ff1414}

\tikzstyle{dim} = [rectangle,text centered,inner xsep=0pt, inner ysep=0pt,minimum width=0.35cm, minimum height=0.35cm,font=\small]
\tikzstyle{change} = [fill=change]
\tikzstyle{io} = [draw=darkred,thick]
\tikzstyle{labelarrow} = [thick,shorten >= 0.15cm]
\tikzstyle{element} = [draw=black,thick,inner sep=0.05cm]
\tikzstyle{attention} = [rectangle,text centered,draw=gray!90,fill=gray!10,rotate=90,minimum width=8.0cm,minimum height=0.6cm,font=\small]
\tikzstyle{attentionpad} = [minimum width=0.6cm]
\tikzstyle{ffn} = [rectangle,text centered,draw=gray!90,fill=gray!10,rotate=90,font=\small,minimum width=1.4cm,minimum height=0.6cm]
\tikzstyle{arrow} = [draw=gray!80,>=latex,shorten >= 0.05cm, shorten <= 0.05cm]

\node(0_0_0_node)[dim, change]{1};
\node(0_0_1_node)[dim, below=0.05cm of 0_0_0_node]{$\varnothing$};
\node(0_0_2_node)[dim, change, below=0.05cm of 0_0_1_node]{1};
\node(0_0_3_node)[dim, below=0.05cm of 0_0_2_node]{$\varnothing$};
\node(0_0_4_node)[dim, change, below=0.05cm of 0_0_3_node]{0};
\node(0_0_5_node)[dim, change, below=0.05cm of 0_0_4_node]{0};
\node(0_0_0_label)[dim_label, left=0.4cm of 0_0_0_node]{\texttt{parity}};
\draw[labelarrow] (0_0_0_label) -- (0_0_0_node);
\node(0_0_1_label)[dim_label, left=0.4cm of 0_0_1_node]{\texttt{parity\_left}};
\draw[labelarrow] (0_0_1_label) -- (0_0_1_node);
\node(0_0_2_label)[dim_label, left=0.4cm of 0_0_2_node]{\texttt{done}};
\draw[labelarrow] (0_0_2_label) -- (0_0_2_node);
\node(0_0_3_label)[dim_label, left=0.4cm of 0_0_3_node]{\texttt{done\_left}};
\draw[labelarrow] (0_0_3_label) -- (0_0_3_node);
\node(0_0_4_label)[dim_label, left=0.4cm of 0_0_4_node]{\texttt{idx}};
\draw[labelarrow] (0_0_4_label) -- (0_0_4_node);
\node(0_0_5_label)[dim_label, left=0.4cm of 0_0_5_node]{\texttt{idx\_left}};
\draw[labelarrow] (0_0_5_label) -- (0_0_5_node);
\node(0_0_border)[element,fit=(0_0_0_node)(0_0_5_node)]{};
\node(0_1_0_node)[dim, change, below=0.5cm of 0_0_5_node]{0};
\node(0_1_1_node)[dim, below=0.05cm of 0_1_0_node]{$\varnothing$};
\node(0_1_2_node)[dim, change, below=0.05cm of 0_1_1_node]{0};
\node(0_1_3_node)[dim, below=0.05cm of 0_1_2_node]{$\varnothing$};
\node(0_1_4_node)[dim, change, below=0.05cm of 0_1_3_node]{1};
\node(0_1_5_node)[dim, change, below=0.05cm of 0_1_4_node]{0};
\node(0_1_0_label)[dim_label, left=0.4cm of 0_1_0_node]{\texttt{parity}};
\draw[labelarrow] (0_1_0_label) -- (0_1_0_node);
\node(0_1_1_label)[dim_label, left=0.4cm of 0_1_1_node]{\texttt{parity\_left}};
\draw[labelarrow] (0_1_1_label) -- (0_1_1_node);
\node(0_1_2_label)[dim_label, left=0.4cm of 0_1_2_node]{\texttt{done}};
\draw[labelarrow] (0_1_2_label) -- (0_1_2_node);
\node(0_1_3_label)[dim_label, left=0.4cm of 0_1_3_node]{\texttt{done\_left}};
\draw[labelarrow] (0_1_3_label) -- (0_1_3_node);
\node(0_1_4_label)[dim_label, left=0.4cm of 0_1_4_node]{\texttt{idx}};
\draw[labelarrow] (0_1_4_label) -- (0_1_4_node);
\node(0_1_5_label)[dim_label, left=0.4cm of 0_1_5_node]{\texttt{idx\_left}};
\draw[labelarrow] (0_1_5_label) -- (0_1_5_node);
\node(0_1_border)[element,fit=(0_1_0_node)(0_1_5_node)]{};
\node(0_2_0_node)[dim, change, below=0.5cm of 0_1_5_node]{1};
\node(0_2_1_node)[dim, below=0.05cm of 0_2_0_node]{$\varnothing$};
\node(0_2_2_node)[dim, change, below=0.05cm of 0_2_1_node]{0};
\node(0_2_3_node)[dim, below=0.05cm of 0_2_2_node]{$\varnothing$};
\node(0_2_4_node)[dim, change, below=0.05cm of 0_2_3_node]{2};
\node(0_2_5_node)[dim, change, below=0.05cm of 0_2_4_node]{1};
\node(0_2_0_label)[dim_label, left=0.4cm of 0_2_0_node]{\texttt{parity}};
\draw[labelarrow] (0_2_0_label) -- (0_2_0_node);
\node(0_2_1_label)[dim_label, left=0.4cm of 0_2_1_node]{\texttt{parity\_left}};
\draw[labelarrow] (0_2_1_label) -- (0_2_1_node);
\node(0_2_2_label)[dim_label, left=0.4cm of 0_2_2_node]{\texttt{done}};
\draw[labelarrow] (0_2_2_label) -- (0_2_2_node);
\node(0_2_3_label)[dim_label, left=0.4cm of 0_2_3_node]{\texttt{done\_left}};
\draw[labelarrow] (0_2_3_label) -- (0_2_3_node);
\node(0_2_4_label)[dim_label, left=0.4cm of 0_2_4_node]{\texttt{idx}};
\draw[labelarrow] (0_2_4_label) -- (0_2_4_node);
\node(0_2_5_label)[dim_label, left=0.4cm of 0_2_5_node]{\texttt{idx\_left}};
\draw[labelarrow] (0_2_5_label) -- (0_2_5_node);
\node(0_2_border)[element,fit=(0_2_0_node)(0_2_5_node)]{};
\node(1_0_attn_pad)[attentionpad,right=\attentionspacing of 0_0_border]{};
\node(1_1_attn_pad)[attentionpad,right=\attentionspacing of 0_1_border]{};
\node(1_2_attn_pad)[attentionpad,right=\attentionspacing of 0_2_border]{};
\node(1_attn)[attention, right=\attentionspacing of 0_1_border,anchor=north]{Attention};
\node(1_0_0_node)[dim, right=\elementspacing of 0_0_0_node]{1};
\node(1_0_1_node)[dim, change, right=\elementspacing of 0_0_1_node]{1};
\node(1_0_2_node)[dim, right=\elementspacing of 0_0_2_node]{1};
\node(1_0_3_node)[dim, change, right=\elementspacing of 0_0_3_node]{1};
\node(1_0_4_node)[dim, right=\elementspacing of 0_0_4_node]{0};
\node(1_0_5_node)[dim, right=\elementspacing of 0_0_5_node]{0};
\node(1_0_border)[element,fit=(1_0_0_node)(1_0_5_node)]{};
\node(1_1_0_node)[dim, right=\elementspacing of 0_1_0_node]{0};
\node(1_1_1_node)[dim, change, right=\elementspacing of 0_1_1_node]{1};
\node(1_1_2_node)[dim, right=\elementspacing of 0_1_2_node]{0};
\node(1_1_3_node)[dim, change, right=\elementspacing of 0_1_3_node]{1};
\node(1_1_4_node)[dim, right=\elementspacing of 0_1_4_node]{1};
\node(1_1_5_node)[dim, right=\elementspacing of 0_1_5_node]{0};
\node(1_1_border)[element,fit=(1_1_0_node)(1_1_5_node)]{};
\node(1_2_0_node)[dim, right=\elementspacing of 0_2_0_node]{1};
\node(1_2_1_node)[dim, change, right=\elementspacing of 0_2_1_node]{0};
\node(1_2_2_node)[dim, right=\elementspacing of 0_2_2_node]{0};
\node(1_2_3_node)[dim, change, right=\elementspacing of 0_2_3_node]{0};
\node(1_2_4_node)[dim, right=\elementspacing of 0_2_4_node]{2};
\node(1_2_5_node)[dim, right=\elementspacing of 0_2_5_node]{1};
\node(1_2_border)[element,fit=(1_2_0_node)(1_2_5_node)]{};
\node(2_0_ffn)[ffn, right=\ffnspacing of 1_0_border,anchor=north]{MLP};
\node(2_1_ffn)[ffn, right=\ffnspacing of 1_1_border,anchor=north]{MLP};
\node(2_2_ffn)[ffn, right=\ffnspacing of 1_2_border,anchor=north]{MLP};
\node(2_0_0_node)[dim, right=\elementspacing of 1_0_0_node]{1};
\node(2_0_1_node)[dim, right=\elementspacing of 1_0_1_node]{$\varnothing$};
\node(2_0_2_node)[dim, right=\elementspacing of 1_0_2_node]{1};
\node(2_0_3_node)[dim, right=\elementspacing of 1_0_3_node]{$\varnothing$};
\node(2_0_4_node)[dim, right=\elementspacing of 1_0_4_node]{0};
\node(2_0_5_node)[dim, right=\elementspacing of 1_0_5_node]{0};
\node(2_0_border)[element,fit=(2_0_0_node)(2_0_5_node)]{};
\node(2_1_0_node)[dim, change, right=\elementspacing of 1_1_0_node]{1};
\node(2_1_1_node)[dim, right=\elementspacing of 1_1_1_node]{$\varnothing$};
\node(2_1_2_node)[dim, change, right=\elementspacing of 1_1_2_node]{1};
\node(2_1_3_node)[dim, right=\elementspacing of 1_1_3_node]{$\varnothing$};
\node(2_1_4_node)[dim, right=\elementspacing of 1_1_4_node]{1};
\node(2_1_5_node)[dim, right=\elementspacing of 1_1_5_node]{0};
\node(2_1_border)[element,fit=(2_1_0_node)(2_1_5_node)]{};
\node(2_2_0_node)[dim, right=\elementspacing of 1_2_0_node]{1};
\node(2_2_1_node)[dim, right=\elementspacing of 1_2_1_node]{$\varnothing$};
\node(2_2_2_node)[dim, right=\elementspacing of 1_2_2_node]{0};
\node(2_2_3_node)[dim, right=\elementspacing of 1_2_3_node]{$\varnothing$};
\node(2_2_4_node)[dim, right=\elementspacing of 1_2_4_node]{2};
\node(2_2_5_node)[dim, right=\elementspacing of 1_2_5_node]{1};
\node(2_2_border)[element,fit=(2_2_0_node)(2_2_5_node)]{};
\node(3_0_attn_pad)[attentionpad,right=\attentionspacing of 2_0_border]{};
\node(3_1_attn_pad)[attentionpad,right=\attentionspacing of 2_1_border]{};
\node(3_2_attn_pad)[attentionpad,right=\attentionspacing of 2_2_border]{};
\node(3_attn)[attention, right=\attentionspacing of 2_1_border,anchor=north]{Attention};
\node(3_0_0_node)[dim, right=\elementspacing of 2_0_0_node]{1};
\node(3_0_1_node)[dim, right=\elementspacing of 2_0_1_node]{1};
\node(3_0_2_node)[dim, right=\elementspacing of 2_0_2_node]{1};
\node(3_0_3_node)[dim, right=\elementspacing of 2_0_3_node]{1};
\node(3_0_4_node)[dim, right=\elementspacing of 2_0_4_node]{0};
\node(3_0_5_node)[dim, right=\elementspacing of 2_0_5_node]{0};
\node(3_0_border)[element,fit=(3_0_0_node)(3_0_5_node)]{};
\node(3_1_0_node)[dim, right=\elementspacing of 2_1_0_node]{1};
\node(3_1_1_node)[dim, right=\elementspacing of 2_1_1_node]{1};
\node(3_1_2_node)[dim, right=\elementspacing of 2_1_2_node]{1};
\node(3_1_3_node)[dim, right=\elementspacing of 2_1_3_node]{1};
\node(3_1_4_node)[dim, right=\elementspacing of 2_1_4_node]{1};
\node(3_1_5_node)[dim, right=\elementspacing of 2_1_5_node]{0};
\node(3_1_border)[element,fit=(3_1_0_node)(3_1_5_node)]{};
\node(3_2_0_node)[dim, right=\elementspacing of 2_2_0_node]{1};
\node(3_2_1_node)[dim, change, right=\elementspacing of 2_2_1_node]{1};
\node(3_2_2_node)[dim, right=\elementspacing of 2_2_2_node]{0};
\node(3_2_3_node)[dim, change, right=\elementspacing of 2_2_3_node]{1};
\node(3_2_4_node)[dim, right=\elementspacing of 2_2_4_node]{2};
\node(3_2_5_node)[dim, right=\elementspacing of 2_2_5_node]{1};
\node(3_2_border)[element,fit=(3_2_0_node)(3_2_5_node)]{};
\node(4_0_ffn)[ffn, right=\ffnspacing of 3_0_border,anchor=north]{MLP};
\node(4_1_ffn)[ffn, right=\ffnspacing of 3_1_border,anchor=north]{MLP};
\node(4_2_ffn)[ffn, right=\ffnspacing of 3_2_border,anchor=north]{MLP};
\node(4_0_0_node)[dim, io, right=\elementspacing of 3_0_0_node]{1};
\node(4_0_1_node)[dim, right=\elementspacing of 3_0_1_node]{$\varnothing$};
\node(4_0_2_node)[dim, right=\elementspacing of 3_0_2_node]{1};
\node(4_0_3_node)[dim, right=\elementspacing of 3_0_3_node]{$\varnothing$};
\node(4_0_4_node)[dim, right=\elementspacing of 3_0_4_node]{0};
\node(4_0_5_node)[dim, right=\elementspacing of 3_0_5_node]{0};
\node(4_0_border)[element,fit=(4_0_0_node)(4_0_5_node)]{};
\node(4_1_0_node)[dim, io, right=\elementspacing of 3_1_0_node]{1};
\node(4_1_1_node)[dim, right=\elementspacing of 3_1_1_node]{$\varnothing$};
\node(4_1_2_node)[dim, right=\elementspacing of 3_1_2_node]{1};
\node(4_1_3_node)[dim, right=\elementspacing of 3_1_3_node]{$\varnothing$};
\node(4_1_4_node)[dim, right=\elementspacing of 3_1_4_node]{1};
\node(4_1_5_node)[dim, right=\elementspacing of 3_1_5_node]{0};
\node(4_1_border)[element,fit=(4_1_0_node)(4_1_5_node)]{};
\node(4_2_0_node)[dim, change, io, right=\elementspacing of 3_2_0_node]{0};
\node(4_2_1_node)[dim, right=\elementspacing of 3_2_1_node]{$\varnothing$};
\node(4_2_2_node)[dim, change, right=\elementspacing of 3_2_2_node]{1};
\node(4_2_3_node)[dim, right=\elementspacing of 3_2_3_node]{$\varnothing$};
\node(4_2_4_node)[dim, right=\elementspacing of 3_2_4_node]{2};
\node(4_2_5_node)[dim, right=\elementspacing of 3_2_5_node]{1};
\node(4_2_border)[element,fit=(4_2_0_node)(4_2_5_node)]{};

\draw[arrow,->] (0_0_border.east) -- (1_0_attn_pad.west);

\draw[arrow,->] (1_0_attn_pad.east) -- (1_0_border.west);

\draw[arrow,->] (0_1_border.east) -- (1_1_attn_pad.west);

\draw[arrow,->] (1_1_attn_pad.east) -- (1_1_border.west);

\draw[arrow,->] (0_2_border.east) -- (1_2_attn_pad.west);

\draw[arrow,->] (1_2_attn_pad.east) -- (1_2_border.west);

\draw[arrow,->] (1_0_border.east) -- (2_0_ffn.north);

\draw[arrow,->] (2_0_ffn.south) -- (2_0_border.west);

\draw[arrow,->] (1_1_border.east) -- (2_1_ffn.north);

\draw[arrow,->] (2_1_ffn.south) -- (2_1_border.west);

\draw[arrow,->] (1_2_border.east) -- (2_2_ffn.north);

\draw[arrow,->] (2_2_ffn.south) -- (2_2_border.west);

\draw[arrow,->] (2_0_border.east) -- (3_0_attn_pad.west);

\draw[arrow,->] (3_0_attn_pad.east) -- (3_0_border.west);

\draw[arrow,->] (2_1_border.east) -- (3_1_attn_pad.west);

\draw[arrow,->] (3_1_attn_pad.east) -- (3_1_border.west);

\draw[arrow,->] (2_2_border.east) -- (3_2_attn_pad.west);

\draw[arrow,->] (3_2_attn_pad.east) -- (3_2_border.west);

\draw[arrow,->] (3_0_border.east) -- (4_0_ffn.north);

\draw[arrow,->] (4_0_ffn.south) -- (4_0_border.west);

\draw[arrow,->] (3_1_border.east) -- (4_1_ffn.north);

\draw[arrow,->] (4_1_ffn.south) -- (4_1_border.west);

\draw[arrow,->] (3_2_border.east) -- (4_2_ffn.north);

\draw[arrow,->] (4_2_ffn.south) -- (4_2_border.west);

\end{scope}
\end{tikzpicture}
} %
\end{center}
\caption{
Visualization of the interpreter's symbolic residual stream for the parity program shown in Figure~\ref{fig:parity_sequential}, for the input sequence $[1, 0, 1]$. The computed output sequence after the second layer is $[1, 1, 0]$, which corresponds to the parity of the input sequence up to the current position, with the final value containing the parity of the entire input sequence. Output values are outlined in red, and values that have changed are highlighted in blue.
}
\label{fig:interpreter_example}
\end{figure}

\subsection{Execution}
\label{sec:framework-execution}

In this section, we explain the execution of an ALTA program in the interpreter, and summarize how each operation is encoded in a compiled Transformer, with more details in Appendix~\ref{sec:appendix_framework}.
We denote the value of variable \texttt{foo} for element $i$ at sub-layer $k$ as $\Z{k}{i}{foo}$. Let $\Z{k}{:}{foo}$ denote the vector of values for \texttt{foo} across all elements at sub-layer $k$.%

\paragraph{Initialization} Given an input sequence $\mathbf{x} = \langle x_1, x_2, \cdots, x_{|\mathbf{x}|} \rangle$, we initialize every variable at every position. Specifically, for a variable \texttt{foo}, we initialize $\Z{0}{i}{foo}$ as a function of $x_i$, a function of the positional index $i$, or as a constant, based on how the initialization for \texttt{foo} is specified in the program. This operation is encoded in the parameters of the Transformer's embedding tables for input and position values. The number of possible input values and possible positional indexes must be specified to the compiler. Alternatively, positional embeddings can be omitted if no variable is initialized as a function of position.

\paragraph{Encoder Loop} We then proceed to iteratively execute the self-attention sub-layer and the MLP sub-layer, which share parameters across all layers. A dynamic halting criterion, as proposed for the Univeral Transformer~\citep{dehghani2019universal}, can optionally be specified by the program, which consists of specifying which variable and corresponding value indicate that computation has completed for a given element. Alternatively, a maximum number of layers can be specified as an argument to the interpreter or when running a compiled Transformer. Similarly to Tracr~\citep{lindner2024tracr}, our compiled Transformers do not include layer normalization operations, which simplifies compilation. Otherwise, the self-attention and MLP sub-layers align with those of a standard Transformer, and are described below.

\paragraph{Self-Attention} For each attention head, the interpreter computes a selection matrix, containing a ``weight'' for every pair of inputs, and uses this to aggregate information across positions. A simplifying assumption of ALTA, similarly to RASP, is that this matrix is binary. Attention heads in ALTA are specified by pointers to query, key, value, and output variables. Each head must have a unique output variable. The query variable must be a categorical or set variable, the key variable must be categorical, and the value and output variables must both be either categorical or numerical. For each attention sub-layer $k$, every attention head updates the value of some output variable:
\begin{equation*}
    \Z{k+1}{:}{\texttt{out}} = \texttt{aggregate}(\texttt{select}(\Z{k}{:}{\texttt{query}},\Z{k}{:}{\texttt{key}}), \Z{k}{:}{\texttt{value}}),
\end{equation*}
where \texttt{query}, \texttt{key}, \texttt{value}, and \texttt{out} are the variable names specified by the given attention head. %
The definition of $\texttt{select}$ is similar to that used by RASP. The main difference is that we do not allow specifying a custom binary predicate as an argument to $\texttt{select}$, which simplifies compilation.\footnote{While this may seem to restrict the expressivity of the query, key, and value projections in the Transformer, we note that the MLP sub-layer prior to the attention sub-layer can set the query, key, and value variables to arbitrary values. By using set variables, it is still possible to specify arbitrary binary selection matrices.} The $\texttt{select}$ operation returns a square selection matrix, $S_{i,j}$, where $S_{i,j} = \llbracket \Z{k}{i}{\texttt{key}} = \Z{k}{j}{\texttt{query}} \rrbracket$ if \texttt{query} is categorical and $S_{i,j} = \llbracket \Z{k}{i}{\texttt{key}} \in \Z{k}{j}{\texttt{query}} \rrbracket$ if \texttt{query} is set-valued.
The $\texttt{aggregate}$ operation returns a new sequence of values $\Z{k+1}{:}{\texttt{out}}$. Each value $\Z{k+1}{i}{\texttt{out}}$ is determined by aggregating over the set of \emph{selected values}, $\{\Z{k}{j}{\texttt{value}} | S_{i,j} = 1\}$ specified by the selection matrix row $S_{i,:}$. When \texttt{value} is numeric, $\texttt{aggregate}$ is defined the same as in RASP, outputting the average of the selected values, and will be undefined if no value is selected. When \texttt{value} is categorical, the output will be undefined if there is not exactly one value selected. The interpreter will raise an exception if any undefined variable is used as input to any operation in subsequent layers, as the encoding of an undefined variable is not well specified in a compiled model.

These operations can be encoded in the parameters of the query, key, value, and output projections for a given attention head. A scalar hyperparameter controls the degree to which the softmax operation approximates generating a binary selection matrix. Each attention sub-layer is followed by a residual connection.
We also support the option of using relative position representations~\citep{shaw2018self} by specifying a mask of relative positions that each attention head can attend to, which is applied to the selection matrix following the \texttt{select} operation. This binary mask is compiled to relative position biases using the parameterization of \citet{raffel2020exploring}.

\paragraph{MLP Function} The MLP sub-layer implements a mapping from a set of variable assignments to a new set of variable assignments, which is applied at every element. For compilation and analysis purposes, ALTA programs internally represent this operation as a set of \emph{transition rules}, which can be interpreted as logical implications with conjunctive antecedents. For example, here is one of the transition rules for the example program in Figure~\ref{fig:parity_sequential}:
\begin{align*}
\overbrace{
 \Z{k+1}{\cdot}{parity}=1}^{consequent} &\impliedby
 \overbrace{\Z{k}{\cdot}{done}=0 ~\land~
 \Z{k}{\cdot}{done\_left}=1 ~\land~
 \Z{k}{\cdot}{parity\_left}=1 ~\land~ 
 \Z{k}{\cdot}{parity}=0}^{antecedent} %
\end{align*}
When the \emph{antecedent} of a rule is satisfied by the MLP input (i.e., all of the conditions hold with respect to the variable assignments at the MLP input), then the \emph{consequent} determines the value of some \emph{output variable} for the next sub-layer.
By construction, for a given output variable, there should never be more than one rule satisfied by the MLP input. If no rule is satisfied, then the value of that output variable is unchanged from the MLP input. We also include transition rules that ensure that every attention output variable is set to a \emph{null} value so that it can be updated by the next attention sub-layer without conflicting with the residual connection. No attention output variable can otherwise be the output variable of any transition rule.

The set of transition rules can be specified in two ways when defining ALTA programs. First, as shown in Figure~\ref{fig:parity_sequential}, one can simply write a Python function with the signature shown. The set of transition rules can then be determined by executing this function for every possible set of variable assignments. In cases where this is not feasible, or where it is desirable to have more control over the set of transition rules, we offer an alternative API for specifying the set of transition rules more directly (see \S\ref{sec:appendix_api}). In either case, the representation of numerical variable values in the antecedent of a transition rule is based on the set of discrete buckets specified for the given variable. %
Leveraging the sparsity represented in a set transition rules rather than compiling a lookup table consisting of all variable combinations can significantly reduce the number of MLP dimensions required.\footnote{For example, consider a string $x$ of length $N$ represented by a set of categorical variables, $x_1$, $x_2$, $\cdots$, $x_N$, each with $K$ possible values. We want to determine if $x$ is in some vocabulary consisting of $V$ strings. A naive lookup table approach requires $K^N$ hidden dimensions, but this function can be represented with only $V$ transition rules.}

The set of transition rules is represented in the parameters of the MLP layers. %
We generate a 4-layer MLP with clipped ReLU activations. 
The first 2 layers are only responsible for converting numerical and set variables into a one-hot representation, representing the possible values of these variables. For numerical variables, these correspond to a specified set of discrete buckets. Note that if a program contains only categorical variables, these 2 layers could be omitted.
The final 2 layers of the MLP are based on the set of transition rules. The parameters of these layers are compiled such that the hidden activations are a binary vector where each value corresponds to whether a particular transition rule was satisfied by the MLP input. Each row of the first matrix is a function of the antecedent of a particular rule, and each column of the second matrix is a function of the consequent of a particular rule. Each MLP sub-layer is followed by a residual connection.

\paragraph{Output} Each ALTA program specifies an output variable, which must be categorical. If execution terminates after $k$ sub-layers, and the output variable is \texttt{output}, then the program returns $\Z{k}{:}{\texttt{output}}$. 
Selecting the subset of dimensions associated with the output variable is encoded in the parameters of the output projection. The Transformer then computes a softmax over this one-hot vector, and outputs the argmax.

\section{Expressibility and Learnability}
\label{sec:tools}

While there are many potential applications for ALTA, we focus on two applications in this paper: new constructive expressivity demonstrations, and analysis of whether such algorithms are learnable given a particular training set, with varying amounts of supervision. We give an overview of these applications and our proposed analytical tools here, %
with results in Section \ref{sec:experiments}.

\subsection{Expressibility}

There has been considerable interest in establishing the theoretical expressivity of various classes of Transformers~\citep{perez2021attention,chiang2023tighter,yun2020are,feng2023towards,merrill2024expressive}. %
However, such theoretical works often do not demonstrate specific constructions of how Transformers can express algorithms of interest with limited resources, which can require manually specifying weight matrices. Tools like RASP, Tracr, and ALTA make it easier to construct such constructive demonstrations. For example, RASP and its extensions have supported many new expressivity results for Transformers~\citep{zhou2023algorithms,angluin2023masked,kazemnejad2024impact,yang2024counting,strobl2024transformers,friedman2024representing}.

Much of the recent work on the expressivity of Transformers has focused on Transformer decoders that can execute an input-dependent number of intermediate decoding steps before producing a final output \citep{perez2021attention,feng2023towards,merrill2024expressive,zhou2023algorithms}. While such intermediate decoding steps have been empirically successful at improving sequential reasoning capabilities~\citep{nye2021show,wei2022chain}, this typically requires additional supervision during training~\citep{pfau2024let}. Additionally, leveraging such intermediate decoding steps may not be the most computationally efficient way to represent certain algorithms. Therefore, it is interesting to study whether and how Transformers can represent various algorithms \emph{without} relying on such intermediate decoding steps, as an alternative or complementary approach. We use ALTA to provide new constructive expressibility results for Universal Transformer encoders for various resource bounds, detailed in Section~\ref{sec:experiments}. When considering a finite limit on the number of layers, such results also hold as a special case of standard Transformer encoders.

\subsection{Learnability}
In many cases, we can establish that an algorithm is expressible by a given class of Transformers, but training a model from this class on input and output examples of a particular algorithm can fail to induce a model that generalizes outside of the training set. It can be difficult to diagnose the reason for such failures, and to determine what to change regarding the architecture, training objective, or training data to improve generalization. We provide two tools to help bridge the gap between expressibility and learnability, which we discuss next.

\paragraph{Trace Supervision}
\label{sec:execution-traces}
We propose to use intermediate supervision from ALTA execution traces over a given training set as a learning signal. Given a program $P$ and a set of model inputs $\mathcal{X}$, we run $P$ for every input in $\mathcal{X}$, and extract the variable assignments at the input and output of every sub-layer, for every position. These traces can be used to derive \emph{trace supervision}, which encourages the behavior of the model to align with that of the program being used to provide supervision. In our experiments, for simplicity, we focus on training the MLP parameters to reconstruct the desired output vector for each input vector, and compile the remaining parameters. In some cases, we show this additional supervision is sufficient to learn the desired algorithm, but in other cases failures can highlight limitations of the underlying architecture due to, e.g., the type of positional encoding used. We report results on the parity task in \S\ref{sec:experiments}, and details of the training procedure in Appendix~\ref{sec:appendix-trace-supervision}.

\paragraph{Theoretical Analysis}
\label{sec:minimal-programs}
To complement the empirical results from trace supervision, we also take a step towards analytically characterizing the conditions under which any particular ALTA program can be learned from a given training set. To this end, we introduce criteria for determining whether a program is \emph{minimal} with respect to a training set, which depends on whether certain components of a program could be removed without affecting the training set predictions. 

Here we give an overview of our theoretical analysis, which is detailed in Appendix~\ref{sec:appendix_theory}. We focus on the set of MLP parameters, and consider a setting similar to that of training with trace supervision, where we derive a set of MLP inputs, $\mathcal{D}$, corresponding to running a given program over some set of model inputs.
The MLP parameters are specified by the set of transition rules, $\mathcal{R}$, in the given program. A rule set is \emph{minimal} with respect to a set of MLP inputs, $\mathcal{D}$, if it is not possible to remove any rule or any constraint from any rule without changing the program output on some input in $\mathcal{D}$. Similarly to training with trace supervison, we consider a reconstruction loss over $\mathcal{D}$ that quantifies how well the MLP outputs correspond to the outputs specified by $\mathcal{R}$. Our results are with respect to a training objective that combines this reconstruction loss with a regularizer on the parameter weights. We show that:

\begin{thm}[Informal]
If the rule set $\mathcal{R}$ is \emph{not minimal} with respect to $\mathcal{D}$, then the compiled MLP parameters are \emph{not} a strict coordinate-wise local optimum of the regularized reconstruction loss.
\end{thm}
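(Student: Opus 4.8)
The plan is to argue by contrapositive flavor: assume the rule set $\mathcal{R}$ is not minimal, exhibit an explicit direction in parameter space along which we can move the compiled MLP weights, and show this direction strictly decreases the regularized reconstruction loss, contradicting the claimed local optimality. By ``not minimal'' we have one of two cases: either some rule $r \in \mathcal{R}$ is never triggered by any input in $\mathcal{D}$, or some conjunct (constraint) in the antecedent of some rule is redundant on $\mathcal{D}$ in the sense that dropping it never changes which rules fire on $\mathcal{D}$. In both cases the reconstruction loss on $\mathcal{D}$ is insensitive, to first order, to a particular weight coordinate, while the regularizer is not — so moving that coordinate toward zero strictly helps.

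First I would recall from Section~\ref{sec:framework-execution} the precise form of the compiled MLP: a 4-layer network whose last two layers encode $\mathcal{R}$, with one hidden unit per rule whose preactivation is an affine function of the (one-hot-lifted) antecedent conjuncts, clipped-ReLU'd so that it reads $1$ iff the antecedent holds, and a second matrix whose columns implement the consequents. I would make explicit which weight coordinate $w^\star$ we perturb in each case. In the unused-rule case, $w^\star$ is any nonzero entry of the row of the first matrix (or column of the second) associated with that rule; since the rule never fires on $\mathcal{D}$, the clipped-ReLU unit outputs $0$ on all of $\mathcal{D}$ regardless of small changes to $w^\star$ (the preactivation stays safely in the ``off'' saturation region for inputs in $\mathcal{D}$), so reconstruction loss is locally constant in $w^\star$, whereas the weight regularizer strictly decreases as $\lvert w^\star\rvert$ shrinks. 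In the redundant-conjunct case, $w^\star$ is the weight coupling that conjunct's one-hot coordinate into the rule's hidden unit; because every input in $\mathcal{D}$ that satisfies the remaining conjuncts also satisfies the redundant one, removing its contribution does not flip the unit on $\mathcal{D}$, again leaving reconstruction loss locally unchanged along $w^\star$ while the regularizer improves. In each case this shows the compiled parameters are not a strict coordinate-wise local optimum: there is a coordinate along which the objective is non-increasing in one direction and strictly decreasing somewhere arbitrarily close, so it cannot be a strict local min.

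The key steps, in order: (1) fix notation for $\mathcal{D}$, the rule-encoding MLP, the reconstruction loss $L_{\mathrm{rec}}$, and the regularized objective $L_{\mathrm{rec}} + \reg\,\Omega(\theta)$; (2) state the two structural consequences of non-minimality and pick the witness coordinate $w^\star$ in each; (3) show a margin/saturation lemma — for every input in $\mathcal{D}$ the relevant clipped-ReLU preactivation sits strictly inside a saturation plateau, so there is an $\eps>0$ such that perturbing $w^\star$ by anything in $(-\eps,\eps)$ leaves all hidden-unit outputs on $\mathcal{D}$ unchanged, hence $L_{\mathrm{rec}}$ constant; (4) observe $\Omega$ (e.g.\ an $L^2$ or $L^1$ penalty) is strictly decreasing as $\lvert w^\star\rvert\to 0$ along that segment, unless $w^\star$ were already $0$ — and argue the compiler produces a nonzero $w^\star$ (or, if it can be zero, that coordinate is then free and the ``not a \emph{strict}'' conclusion follows a fortiori); (5) conclude the compiled parameters fail strict coordinate-wise local optimality.

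The main obstacle I anticipate is step (3): pinning down the saturation-margin argument rigorously in the redundant-conjunct case. Unlike the unused-rule case, where the hidden unit is globally off on $\mathcal{D}$, here the unit is on for some inputs and off for others, and I must verify that the redundant conjunct's contribution to the preactivation is never the ``pivotal'' term that pushes a $\mathcal{D}$-input across the clipping threshold — this is exactly what redundancy on $\mathcal{D}$ buys us, but it needs to be stated as a quantitative gap (the preactivation is bounded away from the ReLU corner by at least the integer-valued contribution of one conjunct, scaled by the compiler's chosen weight magnitude). A secondary subtlety is handling the possibility that perturbing $w^\star$ also affects the one-hot-lifting layers or the residual stream encoding; I would argue the perturbation is confined to the rule layers and that the residual connection plus the one-hot bijection localize its effect, so no other variable's value on $\mathcal{D}$ is disturbed.
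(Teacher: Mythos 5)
Your overall route is the same as the paper's: split non-minimality into an unused-rule case and a redundant-conjunct case, exhibit a nonzero compiled weight whose perturbation leaves the reconstruction loss unchanged on $\mathcal{D}$, and let the regularizer break strict coordinate-wise optimality. The unused-rule case is sound, but cleanly so only via the second-layer weight: if $R_j$ never fires on $\mathcal{D}$ then $h_j(\hat{\theta},z^n)=0$ for all $n$, the reconstruction loss is exactly constant in $\hat{W}^2_{i,j}=\pm 1$, and shrinking that entry strictly lowers the regularized objective. Your first-layer variant needs care: the preactivation of an unused rule is not ``safely in the off saturation region''---it equals exactly $0$ on any input satisfying $N_{R_j}-1$ conjuncts, i.e.\ it sits at the lower kink of the clipped ReLU---so only the one-sided move that decreases $\hat{W}^1_{j,k}$ from $1$ is guaranteed loss-non-increasing. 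One side does suffice to defeat strictness, but your stated justification is not the right one.

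The genuine gap is the one you flagged in step (3), and the quantitative margin you hope for is not available. In the redundant-conjunct case the rule $R_j$ must fire on some $z^n\in\mathcal{D}$ (otherwise you are back in case 1), and on every firing input the redundant conjunct is also satisfied, so $z^n_k=1$ and $g_j(\hat{\theta},z^n)=1$---exactly at the \emph{upper} clipping corner, not bounded away from it. Decreasing $\hat{W}^1_{j,k}$ by $|\epsilon|$ therefore drops $h_j$ to $1-|\epsilon|$ and perturbs the output through the nonzero column of $\hat{W}^2$, so the reconstruction loss grows like $|\epsilon|\sum_i|\hat{W}^2_{i,j}|\ge 2|\epsilon|$, which dominates the $\alpha|\epsilon|$ regularizer saving for $\alpha<1$; increasing $\hat{W}^1_{j,k}$ increases the regularizer (and possibly the reconstruction loss on inputs with $z^n_k=1$ and $g_j(\hat{\theta},z^n)=0$, which redundancy does not exclude---redundancy only rules out inputs where all \emph{other} conjuncts hold and conjunct $k$ fails). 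So the witness coordinate you propose can in fact satisfy the strict coordinate-wise local-minimum conditions, and this half of the argument does not close as written. You are in good company: the paper disposes of this case with the one-line assertion that ``by construction this parameter does not affect the reconstruction loss,'' which faces exactly the same objection. A repair would need a different witness (e.g.\ $\hat{b}^1_j$, which works only under an extra assumption on which partial satisfactions of $R_j$ occur in $\mathcal{D}$) or a joint perturbation of $\hat{W}^1_{j,k}$ and $\hat{b}^1_j$, which lies outside the coordinate-wise notion of optimality used here.
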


\begin{thm}[Informal]
If the rule set $\mathcal{R}$ \emph{is minimal} with respect to $\mathcal{D}$, then the compiled MLP parameters \emph{are} a strict coordinate-wise local optimum of the regularized reconstruction loss.
\end{thm}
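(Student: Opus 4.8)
The plan is to analyze the regularized reconstruction loss coordinate-by-coordinate at the compiled parameter setting, and show that in each coordinate direction the loss strictly increases. The key is to understand precisely what the compiled MLP weights look like and how each coordinate enters the loss. Recall from the construction in Section~\ref{sec:framework} that the final two MLP layers encode the rule set $\mathcal{R}$: roughly, each hidden unit corresponds to one rule, the incoming weights of that unit encode the conjunctive antecedent (a large positive weight on each literal in the antecedent together with an appropriate negative bias, so that with clipped ReLU the unit outputs $1$ exactly when the antecedent is satisfied and $0$ otherwise on inputs arising from valid variable assignments), and the outgoing weights encode the consequent. First I would fix notation for this parameterization, separating the weights into (i) ``antecedent'' weights feeding a rule unit, (ii) the bias of a rule unit, and (iii) ``consequent'' weights leaving a rule unit, plus the fixed bucketization weights of the first two layers. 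I would argue that on the data distribution $\mathcal{D}$ (which consists only of vectors encoding valid variable assignments) the reconstruction loss at the compiled weights is zero: since $\mathcal{R}$ reproduces the program's MLP behavior exactly and no more than one rule per output variable fires, the MLP output equals the target on every input in $\mathcal{D}$.

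The core of the argument is then a case analysis over which coordinate is perturbed. Because the compiled weights sit at a ``corner'' of the clipped-ReLU regime (antecedent weights chosen with margin so each rule unit's pre-activation is bounded away from the clip thresholds $0$ and $1$ on all of $\mathcal{D}$), small perturbations of the antecedent weights or rule biases do not change any rule unit's output on any input in $\mathcal{D}$ — the pre-activations move but stay on the correct side of both thresholds — so the reconstruction loss stays at $0$, while the regularizer strictly increases in whichever direction moves the coordinate away from $0$ and is non-decreasing in the other direction; but here minimality is needed: if a rule or a constraint were redundant, one could find a perturbation that is loss-neutral in \emph{both} directions, and the regularizer, evaluated at a coordinate that is currently nonzero, would strictly \emph{decrease} in one of them — this is exactly what the non-minimal case (Theorem 1) exploits, and in the minimal case we must rule it out. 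Concretely, for a consequent weight: every such weight is actively used to set some output variable's value to its correct target on at least one input in $\mathcal{D}$ (this is where ``no constraint/rule removable'' enters — removing or zeroing the weight changes a prediction), so decreasing its magnitude from the compiled value strictly raises reconstruction loss, while increasing it strictly raises the regularizer; either way the loss strictly increases. For an antecedent weight or a rule bias that is currently at a value the regularizer penalizes, I would show that the minimality of the corresponding constraint means \emph{every} literal in the antecedent is necessary: there is an input in $\mathcal{D}$ differing from a ``rule-firing'' input only in that literal, on which the rule must \emph{not} fire; hence one cannot relax (shrink toward zero) that antecedent weight without either making the rule fire spuriously on that input (raising reconstruction loss) — and moving it further from zero raises the regularizer.

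I would then combine these per-coordinate facts: along every coordinate axis through the compiled point, the one-dimensional restriction of the regularized loss has a strict local minimum at the compiled value, which is precisely the definition of a strict coordinate-wise local optimum. I would also handle the bucketization weights of the first two MLP layers, which are not indexed by rules: these are either fixed (and hence not optimized) or, if counted among the parameters, sit in an analogous margin-protected regime where the bucket assignment on every input of $\mathcal{D}$ is stable under small perturbation, so reconstruction loss is unchanged and the regularizer strictly increases away from the compiled value.

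The main obstacle I anticipate is making the ``minimality $\Rightarrow$ every antecedent literal is locally load-bearing'' step fully rigorous: one must argue that for each constraint in each rule there genuinely exists a witness input in $\mathcal{D}$ on which removing that constraint flips a program output, and then translate ``removing the constraint flips an output'' into ``shrinking the associated weight strictly increases reconstruction loss at the compiled point'' — this requires being careful that the witness input exists \emph{within $\mathcal{D}$} (not merely among all syntactically valid assignments) and that the perturbation genuinely crosses the relevant ReLU threshold on that input rather than being absorbed by the clip. A secondary subtlety is the interaction between the shared residual connection and the ``null-setting'' rules for attention-output variables: I would check that perturbing weights associated with those auxiliary rules is covered by the same margin argument and does not create a direction in which the regularizer can be decreased for free. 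Once these are pinned down, the coordinate-wise strict-optimality claim follows by assembling the cases.
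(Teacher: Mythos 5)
There is a genuine gap, and it sits at the center of your argument. You assert that the compiled antecedent weights and biases are ``chosen with margin so each rule unit's pre-activation is bounded away from the clip thresholds $0$ and $1$ on all of $\mathcal{D}$,'' so that small perturbations of $W^1$ and $b^1$ leave the reconstruction loss at zero and only the regularizer moves. This is false for the actual compilation: by construction $g_j(\hat{\theta}, e_{\mathcal{V}}(V))$ equals exactly $1$ when rule $R_j$ is satisfied and exactly $0$ when all but one constraint is satisfied (eq.~\ref{eq:iso-g}) --- the pre-activations sit \emph{precisely at the kinks} of the clipped ReLU, with no margin. Worse, if your margin claim were true the theorem would be false: the bias $\hat{b}^1_j = 1 - N_{R_j}$ and the antecedent weights are nonzero, so moving them toward zero strictly decreases the $L_1$ regularizer, and with a locally flat reconstruction loss the total objective would strictly decrease, contradicting strict coordinate-wise optimality. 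Your own parenthetical about redundant rules shows you sensed this tension, but the resolution is not that minimality rescues a margin-based argument --- it is that there is no margin at all. The correct mechanism is that minimality guarantees, for each rule $j$, a witness input in $\mathcal{D}$ with $g_j = 1$ (the rule fires; otherwise the rule is removable) and a witness with $g_j = 0$ (exactly $N_{R_j}-1$ constraints hold; otherwise some constraint is removable). Perturbing $b^1_j$ by $\epsilon$ then pushes one of these pre-activations into the open interval $(0,1)$, making $h_j$ take a fractional value and producing reconstruction loss at least $|\epsilon|$ in \emph{both} directions, which dominates the regularizer's slope $\alpha < 1$.

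A secondary error: for an antecedent weight $W^1_{j,k}=1$, your witness is the near-miss input ``differing only in that literal, on which the rule must not fire.'' Shrinking $W^1_{j,k}$ only \emph{lowers} pre-activations, so it can never make the rule fire spuriously; the loss-bearing witness for $\epsilon<0$ is instead a \emph{firing} input with $z^n_k=1$, on which the pre-activation drops from $1$ to $1+\epsilon \in (0,1)$ and the consequent is only partially applied. The $\epsilon>0$ direction for these weights need not incur reconstruction loss at all and is handled purely by the regularizer (since the weight is positive, moving right increases $|w|$); this asymmetric split --- reconstruction loss controls one side, regularization the other --- is the structure you need for $W^1$, and it is absent from your write-up. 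Your treatment of the consequent weights $W^2_{i,j}$ is essentially correct (minimality gives an input where $h_j=1$, so any perturbation changes the output), but the argument for $W^1$ and $b^1$ needs to be rebuilt on the exact-kink picture rather than the margin picture.
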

Therefore, when the minimality conditions are met, we can conclude that there exists a local optimum point with respect to the training objective that corresponds to the behavior of the given program, although we are not guaranteed to find this point during training.
See Theorems~\ref{thm:nonminimality-implies-local-nonoptimality} and~\ref{thm:minimality-implies-local-optimality} in Appendix~\ref{sec:appendix_theory} for the formal statements and proofs.

We can apply this theory to analyze a particular program, training set, and test set. We can determine a \emph{minimal version} of the program with respect to the training set by removing aspects of the program that do not change any predictions on the training set (Appendix~\ref{sec:appendix-minimality-analysis} describes the exact procedure used). We can then assess whether this minimal version \emph{generalizes}, i.e., has the same behavior as the original program on a test set. This analytical test does not require training models or selecting hyperparameters, and provides an interpretable assessment of the potentially underspecified aspects of a program. 
Our theoretical analysis suggests that a Transformer trained with trace supervision from a program on a training set is more likely to generalize if the minimal version of the program generalizes, and we evaluate this in Section~\ref{sec:experiments}.

Notably, this notion of minimality can also help generalize some of the ideas introduced by~\citet{zhou2023algorithms} with respect to RASP. As \citet{zhou2023algorithms} were interested in understanding length generalization, they proposed some restrictions on RASP programs that would otherwise be ``difficult to learn''. First, they proposed restrictions on programs containing certain operations over positional indices. Second, they excluded certain programs from consideration on an intuitive basis,
such as a program for solving the parity task using sum and modulo operations. In both cases, such programs would not be minimal with respect to some finite length training set, according to our proposed criteria, as we show in Section~\ref{sec:experiments}.

\section{Experiments and Analysis}
\label{sec:experiments}

We detail experiments and analysis on several tasks, with further details and results in Appendix~\ref{sec:appendix_experiemnt_details}.

\subsection{Parity}

\begin{figure}[t!]
\includegraphics[width=1.0\textwidth]{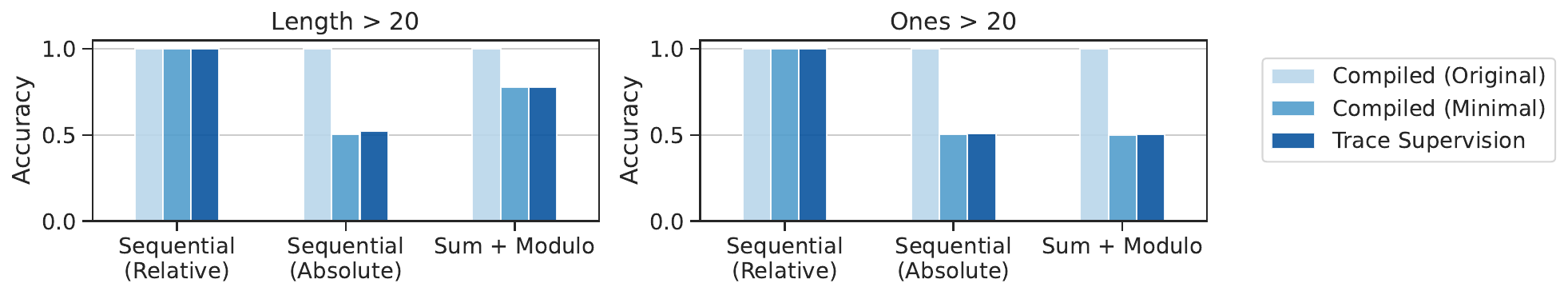} 
\caption{Generalization by length and number of ones for different parity programs in various settings. The training set consists of inputs of up to length 20 (and therefore up to 20 ``ones'') while the test set consists of inputs with lengths 21 to 40. While the original compiled models have perfect accuracy, the minimal versions of the programs exhibit different generalization behaviors and closely mirror the behaviors of the trace supervision experiments.}
\label{fig:parity_main}
\end{figure}

The parity task requires the model to compute whether a binary sequence contains an even or odd number of ones. The ability of transformers to learn parity has been studied extensively \citep{hahn2020theoretical}, particularly the degree to which they exhibit length generalization~\citep{bhattamishra2020ability,chiang2022overcoming,ruoss2023randomized,deletang2022neural}.
Empirically successful solutions have relied on scratchpads \citep{anil2022exploring,zhou2022teaching}. \citet{zhou2023algorithms} used a variant of RASP to investigate why Transformers struggle with length generalization on parity and why scratchpads help. 

We study three ALTA programs for computing parity detailed in Appendix~\ref{sec:appendix_parity}. First, the \emph{Sequential (Absolute)} program computes parity by iterating through each position (one per layer), flipping a parity bit every time a one is encountered. While this program uses absolute positions to enable propagating information to neighboring tokens, we also consider a \emph{Sequential (Relative)} version that uses relative positions instead. Finally, the \emph{Sum + Modulo} program computes parity in a single layer, using an attention head to compute the total number of ones, and then computing a mod 2 operation in the following MLP sub-layer.

For each program, Figure~\ref{fig:parity_main} compares the generalization behavior of Transformers compiled from the original version, compiled from the minimal version, and trained from trace supervision. The standard compiled models achieve 100\% accuracy, exactly emulating the behavior of the symbolic program, as expected.
The minimal versions of these programs exhibit different generalization behavior. The minimal \emph{Sequential (Absolute)} program does not generalize to examples longer than those seen during training, because the embeddings for positions not seen during training are not specified. The minimal \emph{Sum + Modulo} program does not generalize to examples with more ones than those seen during training, as it does not contain transition rules related to the numerical values corresponding to larger numbers of ones than those seen during training. However, it can handle examples longer than the training examples if the numbers of ones were seen during training. 
Only the minimal \emph{Sequential (Relative)} program generalizes to all input lengths.
Notably, the trace supervision results exhibit the same generalization behavior as the minimal programs, as predicted in Section~\ref{sec:tools}. 
The \emph{Sequential (Relative)} program is also notable because it provides a constructive demonstration of how a Universal Transformer encoder can express a length-invariant solution to parity with a finite set of parameters, without relying on intermediate decoding steps.%
\footnote{While \citet{chiang2022overcoming} previously demonstrated how a Transformer can express a solution to parity for arbitrary lengths, their approach requires encoding activations and parameters with a degree of numerical precision that scales with the maximum input length. The \emph{Sequential (Relative)} program does not have this limitation, but does require more layers of computation. \citet{zhou2023algorithms} also provide a length-invariant construction, but their approach requires intermediate decoding steps. See Appendix~\ref{sec:appendix_parity} for details.
}

\begin{figure}[t!]
\includegraphics[width=1.0\textwidth]{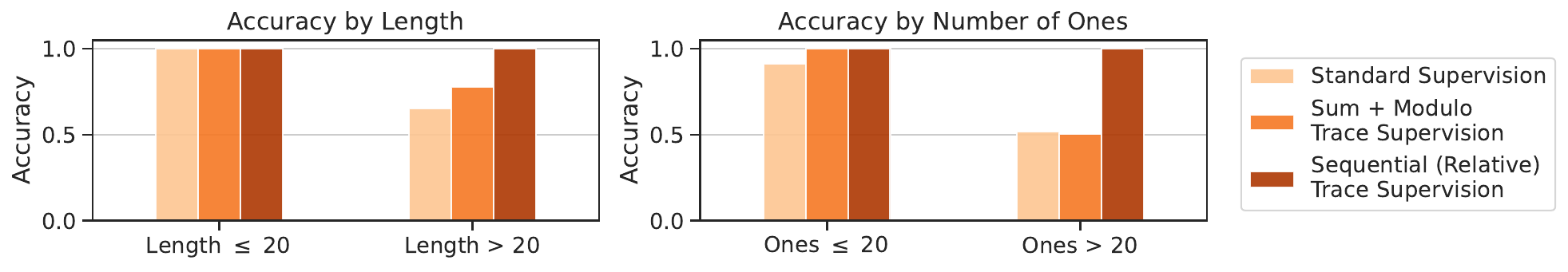} 
\caption{Accuracy by length and number of ones for Transformers trained with trace supervision vs. standard, end-to-end supervision. Transformers trained with standard supervision exhibit behavior similar to those trained with trace supervision from the \textit{Sum + Modulo} program, i.e., they exhibit no generalization to numbers of ones greater than those seen during training, but they do exhibit some length generalization.}
\label{fig:trace_num_ones}
\end{figure}

We also evaluated several different Transformer variants using standard, end-to-end supervision, with results shown in Figure~\ref{fig:trace_num_ones}. %
Theoretically, a transformer trained with weight sharing and at least as many layers as the longest examples in the train set could learn the minimal \emph{Sequential (Relative)} algorithm and generalize to examples of length up to the number of layers. However, in practice, all variants exhibit behavior similar to that of the minimal \emph{Sum + Modulo} program, i.e., they exhibit some degree of length generalization but do not generalize to examples with more ones than were seen during training. See \S\ref{sec:appendix_parity} for additional results and experiment details.

\paragraph{Simplicity Bias}
Prior work has attempted to understand the implicit inductive bias of Transformers in relation to a simplicity bias given some measure of complexity~\citep{zhou2023algorithms,abbe2023generalization,bhattamishra2023simplicity,tsoy2024simplicity}. For instance, inspired by the Minimum Description Length (MDL) principle~\citep{rissanen1978modeling, grunwald2004tutorial}, \citet{zhou2023algorithms} hypothesized that Transformers are biased towards learning behavior corresponding to the simplest RASP program that fits the training set, if one exists. Properties of the set of transition rules in an ALTA program can potentially provide new measures of Transformer complexity.
For example, we can empirically compare the degree to which Transformers with layerwise weight sharing have an implicit bias towards learning ``simpler'' programs according to the number of transition rules.
While the minimal \emph{Sequential (Relative)} program contains 7 rules and the minimal \emph{Sum + Modulo} program contains 31 rules, our results show that end-to-end model behavior is more consistent with the \emph{Sum + Modulo} program. This indicates that Transformers, in this context, do not have an inherent simplicity bias that aligns with the number of transition rules expressed in ALTA. In this particular context, the end-to-end behavior is more consistent with the algorithm that requires the fewest layers to execute. The results for SCAN in Section~\ref{sec:experiments-scan} show a similar finding. This suggests a potentially interesting direction to explore for future work. %

\subsection{Addition}

Another common benchmark for evaluating Transformer generalization is multi-digit addition. %
This task has been studied extensively~\citep{nogueira2021investigating, liu2022towards}, particularly with respect to length generalization~\citep{zhou2024transformers,shen2023positional,zhou2023algorithms,lee2024teaching,kazemnejad2024impact,ruoss2023randomized}. While new positional encodings such as FIRE~\citep{li2024functional} can improve performance, Transformers still struggle with length generalization on this task, unless provided with carefully constructed supervision over intermediate decoding steps~\citep{zhou2023algorithms,zhou2024transformers}.

In Appendix~\ref{sec:appendix_addition} we detail an ALTA program with dynamic halting that can add two positive integers of unbounded size. 
This program compiles to a Universal Transformer encoder with relative position representations (as mentioned in Section~\ref{sec:framework}, we use the parameterization of \citet{raffel2020exploring}). The number of layers required to compute the sum is $N+2$, where $N$ is the number of digits in the larger of the two inputs. Notably, the minimal version of this program with respect to a training set that includes only inputs with $\leq 3$ digits can generalize to unbounded input lengths.

We verified the correctness of our compiled model by considering samples of pairs of integer inputs with 1-50 digits. The compiled model achieved 100\% accuracy across all instances, emulating the behavior of the symbolic program.

\subsection{SUBLEQ} 
SUBLEQ is a single instruction language that has been shown to be Turing-complete when given access to infinite memory ~\citep{mavaddat1988urisc}. \citet{giannou2023looped} previously showed how a Looped Transformer can implement an interpreter for a variant of SUBLEQ. In Appendix~\ref{sec:appendix_subleq} we demonstrate an ALTA program for implementing an interpreter for a less restrictive version of SUBLEQ in a Universal Transformer encoder.

\subsection{SCAN} 
\label{sec:experiments-scan} The SCAN~\citep{lake2018generalization} suite of compositional generalization tasks requires mapping natural language commands (e.g., ``\emph{jump twice}'') to action sequences (e.g., \texttt{JUMP JUMP}). Certain train and test splits have been shown to be challenging for Transformer-based models~\citep{keysers2020measuring,furrer2020compositional,qiu2022evaluating,kazemnejad2024impact}. Empirically successful solutions have involved symbolic decompositions of some form~\citep{shaw2021compositional,chen2020compositional,herzig2021span,qiu2022improving, zhou2023leasttomost}. 

In Appendix~\ref{sec:appendix_scan}, we demonstrate an ALTA program that solves the SCAN task. The program achieves 100\% accuracy across all examples in the SCAN dataset. We also verified that the compiled model emulates the behavior of the program, similarly achieving 100\% accuracy.
First, the program executes a shift-reduce parse of the input sequence, representing the parse as a tree. Second, the ALTA program decodes the output sequence by traversing the parse tree. The program represents the necessary variable-length data structures (a stack, parse tree, and buffer) using a variable number of input tokens. %
Compiled models require fewer than $2,000$ MLP hidden dimensions despite there being more than $10^{60}$ possible variable combinations in the program. This highlights the importance of sparsity, which ALTA enables by representing the MLP computation as a set of transition rules.

Notably, the \emph{minimal version} of our program with respect to the training set generalizes to the test set, achieving 100\% accuracy for all of the most challenging length-based and Maximum Compound Divergence (MCD)~\citep{keysers2020measuring} splits.
Our ALTA program for SCAN thus gives a constructive demonstration of how Transformers can represent algorithms exhibiting systematic generalization: a finite set of transition rules and attention operations can be recombined in novel ways to process novel inputs. 

Because our ALTA program decomposes the task into many small steps, it can require up to 512 layers to handle the longest SCAN examples. To understand whether Transformers trained with standard, end-to-end supervision can also benefit from such a large number of layers, we trained several Transformer variants, varying the total number of layers up to a maximum of 256 encoder layers and 256 decoder layers. However, all generalize poorly on the test splits, and increasing the number of layers does not improve generalization. Consistent with the parity results, end-to-end training does not learn a sequential algorithm that generalizes well, despite demonstrating that such an algorithm is expressible for a similar class of Transformers. See Appendix~\ref{sec:appendix_scan} for additional results, experiment details, and discussion.

\section{Discussion}

\paragraph{Limitations} ALTA has many of the same limitations as RASP and Tracr with respect to the potential differences between compiled models and those learned in practice, as discussed by \citet{lindner2024tracr}. In particular, the framework provides limited support for numerical computations and modeling probabilistic output distributions. The properties of compiled models may not reflect those of models learned in practice. However, ALTA can still be a useful tool if these limitations are kept in mind when interpreting results.

\paragraph{Opportunities} In this paper, we focused on analysis related to expressibility and learnability. However, there are many potential applications of the ALTA framework that could be interesting for future work. For example, we discuss more flexible alternatives for learning from trace supervision in Appendix \ref{sec:appendix-trace-supervision}. ALTA could also potentially help develop test cases for interpretability tools. This was one of the primary motivations for Tracr, which has been applied to help design interpretability benchmarks~\citep{thurnherr2024tracrbench} and more interpretable Transformers~\citep{friedman2024learning}. Additionally, compiled components can potentially be integrated within learned models, such as circuits for arithmetic~\citep{nanda2023progress} or induction heads~\citep{akyurek2024context}. Finally, ALTA could potentially be extended to compare the relative expressivity of Transformers with self-attention compared with other architectures such as Linear Transformers~\citep{katharopoulos2020transformers} or State-Space Models~\citep{gu2022efficiently}.
More broadly, we hope the ALTA framework can help the community better understand how Transformers can represent and learn various algorithms, and inspire new methods and insights.

\subsubsection*{Acknowledgments}

We would like to thank Anian Ruoss, David Lindner, Adam Fisch, Chris Dyer, and the anonymous reviewers for helpful feedback and discussions.

\bibliography{main}
\bibliographystyle{tmlr}

\clearpage
\appendix
\lstset{
  frame=single,
}

\section{Framework Details}
\label{sec:appendix_framework}

In this section we provide additional details about the ALTA Framework, as introduced in \S\ref{sec:framework}. First, we detail the program API in \S\ref{sec:appendix_api}, which is referenced by the ALTA programs in this paper. Second, we provide more details and examples of how programs are compiled to Transformer weights in \S\ref{sec:appendix_compiler}.

\FloatBarrier
\subsection{Program API Details}
\label{sec:appendix_api}

Here we detail the functions of the ALTA API used to build the programs shown in this paper. We refer the reader to our open-source implementation for further details.

\paragraph{Variables and Attention Heads} The module contains several methods for defining variable specifications:
\begin{itemize}
    \item \texttt{var} defines a categorical variable, the most common variable type. The cardinality must be specified.
    \item \texttt{numerical\_var} defines a numerical variable. A set of discrete buckets must be specified, and values are rounded to the closest bucket in the MLP sub-layer of compiled models.
    \item \texttt{set\_var} defines a set variable. A set of sets of possible values must be specified.
\end{itemize}
For each of these functions, it is also necessary to specify how the variable is initialized.

There are two methods for defining attention heads:
\begin{itemize}
    \item \texttt{qkv} defines an attention head, with arguments that specify the query, key, and value variables. Optionally, a set of relative positions can also be passed, as well as an explicit specification for the output variable.
    \item \texttt{relative\_v} is a shorthand function for defining an attention head that attends to a specific relative position. The query and key are implicitly set to a single-valued categorical variable.
\end{itemize}
The output variable specification can be optionally specified explicitly, or is otherwise inferred from the type of the value variable. The output variable for each head is also included in the overall set of program variables.

\paragraph{MLP Functions} There are two ways to specify the MLP function, as mentioned in \S\ref{sec:framework}. For simplicity, the programs listed in this paper specify the MLP function as a Python function, which is passed a dictionary-like object for accessing and updating variable values. Alternatively, the set of transition rules can be specified directly, allowing more control to manage the scope of variables included in the antecedent of each rule, and avoid the combinatorial explosion of possible variable values for more complex programs. Figure~\ref{fig:mlp-builder} gives an example of specifying the transition rules for the parity program of Figure~\ref{fig:parity_sequential} using the \texttt{MLPBuilder} class. The class has two methods. The \texttt{get} method returns a generator over possible variable values. The class automatically tracks which variables and variable values are in scope. The \texttt{set} method generates a transition rule, generating the antecedent of the rule automatically based on the current scope. In almost all cases, this results in a more compact set of transition rules than specifying the MLP function as a Python function. All of the results in this paper related to analyzing the minimal versions of programs or computing the number of MLP hidden dimensions in compiled models are based on versions of programs where the transition rule set has been specified directly.

\begin{figure}[h!]
\centering
\begin{lstlisting}[language=alta,numbers=none] 
def get_transition_rules(variables, attention_heads):
  x = MLPBuilder(variables, attention_heads)
  for done in x.get("done"):
    if done != 1:
      for done_left in x.get("done_left"):
        if done_left == 1:
          x.set("done", 1)
          for parity_left, parity in x.get("parity_left", "parity"):
            x.set("parity", parity_left ^ parity)
  return x.rules
\end{lstlisting}
\caption{Function for directly specifying the set transition rules for the parity program of Figure~\ref{fig:parity_sequential}.}
\label{fig:mlp-builder}
\end{figure}

\FloatBarrier
\newcommand{\labelv}[1]{\rotatebox{45}{\textcolor{blue}{\small #1}}}
\newcommand{\labelh}[1]{\rotatebox{0}{\textcolor{blue}{\small #1}}}

\subsection{Compiler Details}
\label{sec:appendix_compiler}

In this section we will give an example of the compilation process introduced in \S\ref{sec:framework}, using the parity program of Figure~\ref{fig:parity_sequential} as an example.
\paragraph{Notation} Let $v^k_i$ denote the activation vector in a compiled model for element $i$ at sub-layer $k$.

\paragraph{Encoding Variable Assignments} Consider a set of variable assignments:
\begin{align*}
    \texttt{parity} &= 1 \\
    \texttt{parity\_left} &= 0 \\
    \cdots \\
    \texttt{idx} &= 2 \\
    \cdots
\end{align*}
For brevity, we only include a subset of program variables in this example. This set of assignments is represented as the following vector in a compiled model:
\begin{align*}
\begin{bNiceArray}[
  last-col %
]{c}
  0 & \labelh{\texttt{parity} = 0} \\
  1 & \labelh{\texttt{parity} = 1} \\
  1 & \labelh{\texttt{parity\_left} = 0} \\
  0 & \labelh{\texttt{parity\_left} = 1} \\
\cdots & \\
  0 & \labelh{\texttt{idx} = 0} \\
  0 & \labelh{\texttt{idx} = 1} \\
  1 & \labelh{\texttt{idx} = 2} \\
  \cdots & \\
\end{bNiceArray}
\end{align*}

\paragraph{Initialization} The input embedding is computed as: $z^0_i = W^{X}_{x_i,:} + W^{I}_{i,:}$, where $x_i$ is the input token ID at position $i$, and $W^{X}$ and $W^{I}$ represent the embedding matrices for token and positional embeddings, respectively. For brevity, we only consider up to 3 positional indices for this example. We also omit some variables in the matrices below, and transpose them for clarity of the row and column labels. Note that variables representing attention outputs, e.g. \texttt{parity\_left}, are initialized to a \emph{null} value.
\begin{align*}
(W^X)^\top = 
\begin{bNiceArray}[first-row,last-col]{cc}
\labelv{$x_i  = 0$} & \labelv{$x_i  = 1$} & \\
1 & 0 & \labelh{\texttt{parity} = 0} \\
0 & 1 & \labelh{\texttt{parity} = 1} \\
0 & 0 & \labelh{\texttt{parity\_left} = 0} \\
0 & 0 & \labelh{\texttt{parity\_left} = 1} \\
  \cdots & \cdots \\
0 & 0 & \labelh{\texttt{idx} = 0} \\
0 & 0 & \labelh{\texttt{idx} = 1} \\
0 & 0 & \labelh{\texttt{idx} = 2} \\
  \cdots & \cdots \\
\end{bNiceArray} \quad
(W^I)^\top = 
\begin{bNiceArray}[first-row,last-col]{ccc}
\labelv{$i  = 0$} & \labelv{$i  = 1$} & \labelv{$i  = 2$} & \\
0 & 0 & 0 & \labelh{\texttt{parity} = 0} \\
0 & 0 & 0 & \labelh{\texttt{parity} = 1} \\
0 & 0 & 0 & \labelh{\texttt{parity\_left} = 0} \\
0 & 0 & 0 & \labelh{\texttt{parity\_left} = 1} \\
  \cdots & \cdots & \cdots \\
1 & 0 & 0 & \labelh{\texttt{idx} = 0} \\
0 & 1 & 0 & \labelh{\texttt{idx} = 1} \\
0 & 0 & 1 & \labelh{\texttt{idx} = 2} \\
  \cdots & \cdots & \cdots \\
\end{bNiceArray}
\end{align*}

\paragraph{Self-attention} 
The self-attention operation sums over a set of attention heads:
\begin{align*}
z^{k+1}_i &= z^{k}_i +\sum_h o^h_i,
\end{align*}
where $o^h_i$ is the output of head $h$ parameterized by matrices $W^Q_h$, $W^K_h$, $W^V_h$, and $W^O_h$:
\begin{align*}
o^h_i &= W^O_h \sum_{j} \alpha_{ij} \cdot W^v z^k_j \\ 
\alpha_{ij} &= \frac{e^{l_{ij}}}{\sum_{k} e^{l_{ik}}} \\
l_{ij} &= (W^Q_h z^k_i)^\top W^K_h z^k_j.
\end{align*}
For simplicity, we ignore the constant scalar term commonly applied to the dot product. Simlarly to Tracr~\citep{lindner2024tracr}, we also adopt the parameterization of~\citet{elhage2021mathematical} for the attention output, which can be shown to be equivalent to that of the original Transformer, but allows for a clearer exposition.

The parity program of Figure~\ref{fig:parity_sequential} has two attention heads. Here we describe the $W^Q$, $W^K$, $W^V$, and $W^O$ matrices corresponding to the attention head with query \texttt{idx\_left}, key \texttt{idx}, value \texttt{parity}, and output \texttt{parity\_left} (ommitted cells are zeros):
\begin{align*}
(W^Q)^\top = 
\begin{bNiceArray}[last-col]{ccc}
  \cdots & \cdots & \cdots \\
0 & 0 & 0 & \labelh{\texttt{idx} = 0} \\
0 & 0 & 0 & \labelh{\texttt{idx} = 1} \\
0 & 0 & 0 & \labelh{\texttt{idx} = 2} \\
\lambda & 0 & 0 & \labelh{\texttt{idx\_left} = 0} \\
0 & \lambda & 0 & \labelh{\texttt{idx\_left} = 1} \\
0 & 0 & \lambda & \labelh{\texttt{idx\_left} = 2} \\
\end{bNiceArray} \quad
(W^K)^\top = 
\begin{bNiceArray}[last-col]{ccc}
  \cdots & \cdots & \cdots \\
\lambda & 0 & 0 & \labelh{\texttt{idx} = 0} \\
0 & \lambda & 0 & \labelh{\texttt{idx} = 1} \\
0 & 0 & \lambda & \labelh{\texttt{idx} = 2} \\
0 & 0 & 0 & \labelh{\texttt{idx\_left} = 0} \\
0 & 0 & 0 & \labelh{\texttt{idx\_left} = 1} \\
0 & 0 & 0 & \labelh{\texttt{idx\_left} = 2} \\
\end{bNiceArray}
\end{align*}
\begin{align*}
(W^V)^\top = 
\begin{bNiceArray}[last-col]{cc}
1 & 0 & \labelh{\texttt{parity} = 0} \\
0 & 1 & \labelh{\texttt{parity} = 1} \\
0 & 0 & \labelh{\texttt{parity\_left} = 0} \\
0 & 0 & \labelh{\texttt{parity\_left} = 1} \\
  \cdots & \cdots \\
\end{bNiceArray} \quad
W^O = 
\begin{bNiceArray}[last-col]{cc}
0 & 0 & \labelh{\texttt{parity} = 0} \\
0 & 0 & \labelh{\texttt{parity} = 1} \\
1 & 0 & \labelh{\texttt{parity\_left} = 0} \\
0 & 1 & \labelh{\texttt{parity\_left} = 1} \\
  \cdots & \cdots \\
\end{bNiceArray}
\end{align*}

where $\lambda$ is a hyperparameter that controls the degree to which the selection matrix approximates a binary-valued matrix, set to $100$ by default.

\paragraph{MLP Sub-layer}
Our approach to compiling MLP parameters takes loose inspiration from~\citet{nielsen2016visual}, which provides a helpful visual exposition of how MLPs can encode arbitrary functions. The MLP function of the parity program can be represented with 7 transition rules, using the notation of Section~\ref{sec:framework}:
\begin{align*}
R_1 &: \Z{k+1}{\cdot}{parity}=1 \impliedby \Z{k}{\cdot}{done}=0 ~\land~
 \Z{k}{\cdot}{done\_left}=1 ~\land~
 \Z{k}{\cdot}{parity\_left}=1 ~\land~ 
 \Z{k}{\cdot}{parity}=0 \\
R_2 &: \Z{k+1}{\cdot}{parity}=0 \impliedby
\Z{k}{\cdot}{done}=0 ~\land~
\Z{k}{\cdot}{done\_left}=1 ~\land~ 
\Z{k}{\cdot}{parity\_left}=1 ~\land~ 
\Z{k}{\cdot}{parity}=1 \\
R_3 &: \Z{k+1}{\cdot}{done}=1 \impliedby
\Z{k}{\cdot}{done}=0 ~\land~
\Z{k}{\cdot}{done\_left}=1 \\ 
R_4 &: \Z{k+1}{\cdot}{parity\_left}=\varnothing \impliedby
\Z{k}{\cdot}{parity\_left}=0 \\
R_5 &: \Z{k+1}{\cdot}{parity\_left}=\varnothing \impliedby
\Z{k}{\cdot}{parity\_left}=1 \\
R_6 &: \Z{k+1}{\cdot}{done\_left}=\varnothing \impliedby
\Z{k}{\cdot}{done\_left}=0 \\
R_7 &: \Z{k+1}{\cdot}{done\_left}=\varnothing \impliedby
\Z{k}{\cdot}{done\_left}=1 
\end{align*}

Appendix~\ref{sec:appendix_theory} provides a more formal description of the MLP sub-layer and the compilation process. Here we provide an example of the compiled parameters $W^1$, $b^1$, and $W^2$ for the parity program (the second bias term, $b^2$, is always set to a vector of zeros). Note that we use a clipped ReLU function as the non-linearity. As described in Section~\ref{sec:appendix_framework}, there are also two initial MLP layers that are responsible for converting numerical and set variables into one-hot representations, but those are not necessary for the parity program we are considering which uses only categorical variables. We also omit the cells corresponding to \texttt{idx} and \texttt{idx\_left} in the matrices and vector below (the corresponding cells are all zeros).
 
Each column of the transposed first matrix corresponds to a rule, with a $1$ in cells corresponding to variable values in the \emph{antecedent} of the rule. Each scalar in the bias vector has a value $1-N$, where $N$ is the number of conjuncts in the antecedent of a given rule. This ensures that the output of the following non-linearity is $1$ when all conjuncts are satisfied, and $0$ otherwise. Each column of the second matrix is similarly associated with a rule, but the values are determined by the \emph{consequent} of the rule. There is a $1$ in the row corresponding to the new value of the output variable (or $0$, if the output value is null), and a $-1$ corresponding to the value of the output variable in the antecedent of the rule.
\begin{align*}
(W^1)^\top &= 
\begin{bNiceArray}[first-row,last-col]{ccccccc}
\labelh{$R_1$} & \labelh{$R_2$} & \labelh{$R_3$} & \labelh{$R_4$} & \labelh{$R_5$} & \labelh{$R_6$} & \labelh{$R_7$} \\
1 & 0 & 0 & 0 & 0 & 0 & 0 & \labelh{\texttt{parity} = 0} \\
0 & 1 & 0 & 0 & 0 & 0 & 0 & \labelh{\texttt{parity} = 1} \\
0 & 0 & 0 & 1 & 0 & 0 & 0 & \labelh{\texttt{parity\_left} = 0} \\
1 & 1 & 0 & 0 & 1 & 0 & 0 & \labelh{\texttt{parity\_left} = 1} \\
1 & 1 & 1 & 0 & 0 & 0 & 0 & \labelh{\texttt{done} = 0} \\
0 & 0 & 0 & 0 & 0 & 0 & 0 & \labelh{\texttt{done} = 1} \\
0 & 0 & 0 & 0 & 0 & 1 & 0 & \labelh{\texttt{done\_left} = 0} \\
1 & 1 & 1 & 0 & 0 & 0 & 1 & \labelh{\texttt{done\_left} = 1} \\
  \cdots & \cdots & \cdots & \cdots & \cdots & \cdots & \cdots \\
\end{bNiceArray}\\
(b^1)^\top &= 
\begin{bNiceArray}[first-row]{ccccccc}
\labelh{$R_1$} & \labelh{$R_2$} & \labelh{$R_3$} & \labelh{$R_4$} & \labelh{$R_5$} & \labelh{$R_6$} & \labelh{$R_7$} \\
-3 & -3 & -1 & 0 & 0 & 0 & 0  \\
\end{bNiceArray}\\
W^2 &= 
\begin{bNiceArray}[first-row,last-col]{ccccccc}
\labelh{$R_1$} & \labelh{$R_2$} & \labelh{$R_3$} & \labelh{$R_4$} & \labelh{$R_5$} & \labelh{$R_6$} & \labelh{$R_7$} \\
-1 & 1 & 0 & 0 & 0 & 0 & 0 & \labelh{\texttt{parity} = 0} \\
1 & -1 & 0 & 0 & 0 & 0 & 0 & \labelh{\texttt{parity} = 1} \\
0 & 0 & 0 & -1 & 0 & 0 & 0 & \labelh{\texttt{parity\_left} = 0} \\
0 & 0 & 0 & 0 & -1 & 0 & 0 & \labelh{\texttt{parity\_left} = 1} \\
0 & 0 & -1 & 0 & 0 & 0 & 0 & \labelh{\texttt{done} = 0} \\
0 & 0 & 1 & 0 & 0 & 0 & 0 & \labelh{\texttt{done} = 1} \\
0 & 0 & 0 & 0 & 0 & -1 & 0 & \labelh{\texttt{done\_left} = 0} \\
0 & 0 & 0 & 0 & 0 & 0 & -1 & \labelh{\texttt{done\_left} = 1} \\
  \cdots & \cdots & \cdots & \cdots & \cdots & \cdots & \cdots \\
\end{bNiceArray}
\end{align*}

\paragraph{MLP Example}
Consider a set of variable assignments:
\begin{align*}
    \texttt{parity} &= 1 \\
    \texttt{parity\_left} &= 1 \\
    \texttt{done} &= 0 \\
    \texttt{done\_left} &= 1 \\
    \cdots
\end{align*}
Let $z$ denote the vector encoding of these assignments:
\begin{align*}
z = \begin{bNiceArray}[
  last-col %
]{c}
  0 & \labelh{\texttt{parity} = 0} \\
  1 & \labelh{\texttt{parity} = 1} \\
  0 & \labelh{\texttt{parity\_left} = 0} \\
  1 & \labelh{\texttt{parity\_left} = 1} \\
  1 & \labelh{\texttt{done} = 0} \\
  0 & \labelh{\texttt{done} = 1} \\
  0 & \labelh{\texttt{done\_left} = 0} \\
  1 & \labelh{\texttt{done\_left} = 1} \\
  \cdots & \\
\end{bNiceArray}
\end{align*}
Here are the hidden activations in the MLP layer, which is binary vector corresponding to which rules are satisfied by the input:
\begin{align*}
h(z) = \sigma(W^1 z + b^1) = \begin{bNiceArray}[
  last-col
]{c}
  0 & \labelh{$R_1$} \\
  1 & \labelh{$R_2$} \\
  1 & \labelh{$R_3$} \\
  0 & \labelh{$R_4$} \\
  1 & \labelh{$R_5$} \\
  0 & \labelh{$R_6$} \\
  1 & \labelh{$R_7$} \\
\end{bNiceArray}
\end{align*}
Note that $\sigma$ is a clipped ReLU non-linearity applied element-wise. Here is the output of the MLP sub-layer before and after the residual connection:
\begin{align*}
W^2 h(z) = \begin{bNiceArray}[
  last-col
]{c}
  1 & \labelh{\texttt{parity} = 0} \\
  -1 & \labelh{\texttt{parity} = 1} \\
  0 & \labelh{\texttt{parity\_left} = 0} \\
  -1 & \labelh{\texttt{parity\_left} = 1} \\
  -1 & \labelh{\texttt{done} = 0} \\
  1 & \labelh{\texttt{done} = 1} \\
  0 & \labelh{\texttt{done\_left} = 0} \\
  -1 & \labelh{\texttt{done\_left} = 1} \\
  \cdots & \\
\end{bNiceArray} \quad
W^2 h(z) + z = \begin{bNiceArray}[
  last-col
]{c}
  1 & \labelh{\texttt{parity} = 0} \\
  0 & \labelh{\texttt{parity} = 1} \\
  0 & \labelh{\texttt{parity\_left} = 0} \\
  0 & \labelh{\texttt{parity\_left} = 1} \\
  0 & \labelh{\texttt{done} = 0} \\
  1 & \labelh{\texttt{done} = 1} \\
  0 & \labelh{\texttt{done\_left} = 0} \\
  0 & \labelh{\texttt{done\_left} = 1} \\
  \cdots & \\
\end{bNiceArray}
\end{align*}

Which corresponds to the expected output assignments:
\begin{align*}
    \texttt{parity} &= 0 \\
    \texttt{parity\_left} &= \varnothing \\
    \texttt{done} &= 1 \\
    \texttt{done\_left} &= \varnothing \\
    \cdots
\end{align*}
where the attention output variables are set to \emph{null} values so they are ready to be updated at the next self-attention layer.

\subsection{Decoder-only Extensions}
\label{sec:appendix_decoder}

While this paper focuses on using ALTA to compile programs for encoder-only Transformers, the framework can be extended to decoder-only models without significant modifications. Similarly, RASP was originally used to study encoder-only Transformers~\citep{weiss2021thinking}, but was later extended to study decoder-only Transformers~\citep{zhou2023algorithms}. The set of parameters necessary to specify an encoder-only and a decoder-only Transformer are the same, so the ALTA language specification and compiler do not require any changes. Only the interpreter and the Transformer implementation require minor modifications. First, it would be necessary to apply a causal attention mask. In the interpreter, this mask can be applied after determining the selection matrix. Second, rather than computing a single forward pass, the model should be run in the context of an auto-regressive decoding loop, i.e. where the previously decoded output is iteratively appended to the input sequence. For computational efficiency, caching can be used to avoid redundant computation for previously computed timesteps. We leave analysis of decoder-only Transformers to future work.

\section{Theoretical Analysis of Minimal Rule Sets and MLP Parameters}
\label{sec:appendix_theory}

\newcommand{\zin}[0]{\ensuremath Z^{\text{in}}}
\newcommand{\zout}[0]{\ensuremath Z^{\text{out}}}
\newcommand{\zhatout}[0]{\ensuremath \hat{Z}^{\text{out}}}
\newcommand{\bV}[0]{\ensuremath \mathbf{V}}
\newcommand{\eV}{\ensuremath e_{\mathcal{V}}}
\newcommand{\evalat}[2]{\ensuremath \left. #1 \right|_{#2}}

\newcommand{\etheta}[1]{\ensuremath \hat{\theta}_{#1}^{\epsilon}}

\newcommand{\thetaWtwo}[0]{\etheta{W^2_{i,j}}}
\newcommand{\thetaWone}[0]{\etheta{W^1_{j,k}}}
\newcommand{\thetab}[0]{\etheta{b^1_j}}
\newcommand{\thetaw}[0]{\etheta{w}}

If a program is \emph{minimal} with respect to some training set, can we draw any conclusions about whether the transformer implementation of that program will be \emph{learned} on that same training set? We now explore this question from the perspective of the MLP parameters in the trace supervision setting. Specifically, \textbf{we identify conditions under which the compiled MLP parameters are a strict coordinate-wise local optimum of a regularized reconstruction loss.}
Extensions to consider all parameters of the compiled transformer, end-to-end training objectives, or non-coordinate-wise local optima are left to future work.

\subsection{MLP Specification}

Here we introduce notation that differs from that in other sections, but is more appropriate for the goals of this section. Let us focus on two components of an ALTA program $P$ for this analysis: a set of possible variable assignments, $\mathcal{V}$, and a set of rules, $\mathcal{R}$. The rules implicitly define an MLP function, $f_\mathcal{R} : \mathcal{V} \rightarrow \mathcal{V}$, which is a sub-component of the overall program. For simplicity, we consider only programs with categorical variables for this analysis, and exclude transition rules related to attention outputs.

\paragraph{Variable Assignments}

An assignment $V \in \mathcal{V}$ is a tuple of $N_\mathcal{V}$ elements, with $V = \langle V_0, V_1, \cdots, V_{N_\mathcal{V} - 1}
\rangle$, where 
$ V_i \in \{0, 1, \ldots, D_i-1\}$ and $D_i$ is the dimensionality of variable $V_i$. The number and dimensionality of variables is given by the program specification.

\paragraph{Transition Rules}

Let $\mathcal{R} = \langle R_1, R_2, \cdots, R_{N_R} \rangle$, where $R_i$ is a transition rule. As described informally in Section~\ref{sec:framework-execution}, transition rules consist of the following components:
\begin{itemize}
    \item The \emph{antecedent} of $R_i$, denoted $A_{R_i}$, consists of a set of $N_{R_i}$ \emph{constraints} $(k,v)$ where each $k \in \{0,\cdots,N_{\mathcal{V}-1}\}$ refers to a \emph{variable index}, and each $v \in \{0,\cdots,D_k-1\}$ refers to a \emph{variable value}. A transition rule is \emph{satisfied} by an assignment $V \in \mathcal{V}$ if and only if $V_{k} = v$ for all $(k,v)$ in the rule's antecedent.
    \item The \emph{consequent} of $k_{R_i}$ includes an the \emph{output variable} index, denoted $k_{R_i}$, and a new value for this variable, denoted $v^\prime_{R_i}$. Per construction, for every $R_i$ we require that the output variable appears in the antecedent of the rule. Let $v_{R_i}$ denote to the value associated with the output variable in the antecedent. 
\end{itemize}
The rule $R_i$ can therefore be interpreted as updating the value of the variable $k_{R_i}$ from $v_{R_i}$ to $v^\prime_{R_i}$ if $R_i$ is satisfied.

Finally, we also require that for any assignment $\in \mathcal{V}$, no more than one rule is satisfied for a given output variable. In other words, if two rules share the same \emph{output variable} index, then they should never both be \emph{satisfied} for any assignment $\in \mathcal{V}$.

\paragraph{Logical MLP Function}

We can define the logical MLP function $f_R: \mathcal{V} \rightarrow \mathcal{V}$ in terms of $\mathcal{R}$. Let $V^\prime = f_R(V)$. Then:
\[
V^\prime_k = 
\begin{cases}
v^\prime_{R_i},&\text{if $\exists R_i \in \mathcal{R}$ that is satisfied by $V$} \\
V_k,&\text{otherwise.}
\end{cases}
\]

\subsection{Compiling MLP Parameters}

As described in Section~\ref{sec:framework-execution} and Appendix~\ref{sec:appendix_compiler}, in our compiled models, we represent variable assignments as vectors, and compile the MLP function into the parameters of an MLP. Here we introduce notation and a more detailed description of the compilation procedure to support our theoretical analysis.

\paragraph{Assignment Embeddings} 
In our compiled models, variable assignments are represented as $N_Z$-dimensional vectors. Let $\mathcal{Z} = \mathbb{R}^{N_Z}$ represent the space of such vectors, which include the inputs and outputs of MLP layers.

We can define a mapping $\eV: \mathcal{V} \rightarrow \mathcal{Z}$. First, we define a bijective function $m(k,v)$ which for a given variable index $k \in \{0, 1, \ldots, N_v-1\}$ and corresponding variable value $v \in \{0, 1, \ldots, D_k\}$, returns an index in $\{0, 1, \ldots, N_Z-1\}$.
Let $\langle z_0, z_1, \cdots, z_{N_z - 1} \rangle = e_\mathcal{V}(V)$. Then:
\begin{align}
z_i = \llbracket \exists k,v \mid V_k = v \land m(k, v) = i\rrbracket.\label{eq:ev_def}
\end{align}
In other words, for a given assignment $V$, the vector $\eV(V)$ will have a $1$ for dimensions associated with variable values in $V$, and $0$'s elsewhere.

\paragraph{MLP Function} 
The compiled MLP function $f_\theta: \mathcal{Z} \rightarrow \mathcal{Z}$ is defined by parameters $\theta \in \Theta$, where $\theta = \langle W^1, b^1, W^2 \rangle$, and $W^1 \in \mathbb{R}^{N_R \times N_Z}$, $b^1 \in \mathbb{R}^{N_R}$, and $W^2 \in \mathbb{R}^{N_Z \times N_R}$. The bias term in the second layer is set to all zeros in practice, and omitted here for simplicity.

For $z \in \mathcal{Z}$, we can the define the MLP function:
\begin{align}
    g(\theta, z) &= W^1z + b^1, \label{eq:g} \\ 
    h(\theta, z) &= \sigma(g(\theta, z)) \label{eq:h}\\
    f(\theta, z) &= W^2h(\theta, z) + z \label{eq:f}
\end{align} 
where $\sigma(z) = \max(0, \min(1, z))$ is a clipped ReLU activation function applied element-wise to the vector $z$.\footnote{As noted by~\citet{perez2021attention}, the clipped ReLU function can be implemented in terms of the standard ReLU function, i.e. $\sigma(z) = max(0, z) - max(0, z - 1)$. Therefore, there is an equivalent formulation that uses additional layers but standard ReLU activations. However, we use clipped ReLU for simplicity.}

For notational convenience, let $f_i(\theta, z) := (f(\theta, z))_i$, i.e. denote the $i$th element of the output of $f$, and similarly for $g$, $h$. Following equations~\ref{eq:f}-\ref{eq:g}, the values of these individual scalars are:
\begin{align}
    g_i(\theta, z) &= \sum_j W^1_{i,j}z + b_i^1, \label{eq:gi} \\
    h_i(\theta, z) &= \sigma(g_i(\theta, z)) \label{eq:hi}\\
    f_i(\theta, z) &= z_i + \sum_j W^2_{i,j} h_j(\theta, z) \label{eq:fi} 
\end{align} 

\paragraph{MLP Parameters}

We can define the compiled parameters $\hat{\theta} \in \Theta$ such that $\eV(f_R(V)) = f_{\hat{\theta}}(\eV(\bV))$ for all $V \in \mathcal{V}$.
Let $\hat{\theta} = \langle \hat{W}^1, \hat{b}^1, \hat{W}^2 \rangle$.
\begin{align}
\hat{W}^1_{i,j} = & \llbracket \exists~(k, v) \in A_{R_i} 
\, \text{s.t.} \, m(k,v) = j \rrbracket \label{eq:w1-def} \\ 
\hat{b}^1_i = & 1 - N_{R_i} \label{eq:b-def}\\
\hat{W}^2_{i,j} = &
\begin{cases}
-1,&\text{if}~m(k_{R_j}, v_{R_j}) = i \\
1,&\text{if}~m(k_{R_j}, v^\prime_{R_j}) = i \label{eq:w2-def}\\
0,&\text{otherwise}
\end{cases}
\end{align}

This construction ensures that for any assignment $V \in \mathcal{V}$, we have:
\begin{align}
g_j(\hat{\theta}, \eV(V)) &= \begin{cases}
1, & \text{$R_j$ is satisfied by $V$} \\
0, & \text{$N_{R_j} - 1$ constraints of $R_j$ are satisfied by $V$} \\
-1, & \text{$N_{R_j} - 2$ constraints of $R_j$ are satisfied by $V$} \\
\cdots, & \cdots 
\label{eq:iso-g}
\end{cases} \\
h_j(\hat{\theta}, \eV(V)) &= \llbracket R_j~\text{is satisfied by}~V \rrbracket. \label{eq:iso-h}
\end{align}
In other words, each element in the hidden layer activations corresponds to whether a particular rule was satisfied by the variable assignment represented in the input. Each column of $\hat{W}^2$ encodes an ``update'' to the assignments related to a particular rule. This construction also ensures that:
\begin{align}
f(\hat{\theta},\eV(V)) = \eV(f_{\mathcal{R}}(V)). \label{eq:iso-f}
\end{align}

An example demonstrating equations~\ref{eq:iso-g}, ~\ref{eq:iso-h}, and \ref{eq:iso-f} is given in Appendix~\ref{sec:appendix_compiler}. 

\subsection{Minimal Rule Set}

In Section~\ref{sec:tools} we introduced the notion of a \emph{minimal program}. Here we focus on the minimality conditions relating only to the rule set and the corresponding MLP layer. We define a \emph{minimal rule set},
encompassing the conditions necessary to obtain guarantees about a reconstruction loss landscape around the parameters of the compiled MLP. 

\begin{defn}{(minimal rule set)}
Given a dataset of $N$ variable assignments $\mathcal{D} = \langle V_1, V_2, \ldots, V_N \rangle$, a set of rules $\mathcal{R}$ is a \textbf{minimal rule set} over $\mathcal{D}$ if:
\begin{itemize}
    \item It is not possible to remove any individual rule in $\mathcal{R}$ and not change the output of $f_\mathcal{R}$ for some example $V_n \in \mathcal{D}$.
    \item It is not possible to remove any individual constraint of the antecedent of any rule in $\mathcal{R}$ without changing the output of $f_\mathcal{R}$ for some example $V_n \in \mathcal{D}$.
\end{itemize}
\end{defn}

\subsection{Reconstruction Loss}

We consider a setting related to that of training models with trace supervision, as discussed in~\S\ref{sec:appendix-trace-supervision}.
For program $P$ and set of model inputs $\mathcal{X}$, let $\mathcal{D} = \langle V_1, V_2, \ldots, V_N \rangle$ be the set of variable assignments at the input to the MLP for every position and layer when we run $P$ over $\mathcal{X}$. We use the shorthand:
\begin{equation}
z^n := e_\mathcal{V}(V_n),
\end{equation}
to denote the vector encoding of assignment $V_n$.

Now let us define a \emph{reconstruction loss}, $L_R$, over individual predictions. For model parameters $\theta$ and a given variable assignment $V_n$ with corresponding vector encoding $z^n = e_\mathcal{V}(V_n)$, the reconstruction loss quantifies how well the predicted variable encodings $f(\theta, z^n)$ match the variable encodings specified by $\mathcal{R}$, $e_\mathcal{V}(f_\mathcal{R}(V_n))$. Per equation~\ref{eq:iso-f}, this vector can alternatively be written in terms of compiled parameters $\hat{\theta}$ as $f(\hat{\theta},z^n)$. For simplicity, we consider an $L_1$ loss over the predicted and expected encodings, which simplifies the analysis of the local minimum point. We write $L_R$ as:
\begin{align}
    L_R(\theta, z^n) = || f(\theta, z^n) - f(\hat{\theta}, z^n) ||_1. \label{eq:lr}
\end{align}
Note that for all $n$:
\begin{align}
    L_R(\hat{\theta}, z^n) = 0. \label{eq:lr-zero}
\end{align}

\subsection{Coordinate-wise Local Minimum}

As it is difficult to analyze whether a point is a strict local minimum, we will instead analyze whether a point is a coordinate-wise local minimum. Prior work has defined the notion of a coordinate-wise minimum, which is defined in terms of axis-aligned movements of parameter values~\citep[e.g.,][]{tseng2001convergence}.

\begin{defn}(coordinate-wise local minimum) For a multivariate function $f: \mathcal{X}_1 \times \mathcal{X}_2 \times \ldots \times \mathcal{X}_n \to \mathbb{R}$, the value $\hat{X} =\langle \hat{x}_1, \hat{x}_2, \ldots, \hat{x}_n \rangle$ is a strict coordinate-wise local minimum iff for each $i$ $\exists \lambda > 0$, such that   $f(\hat{x}_1, \ldots, \hat{x}_i, \ldots \hat{x}_n) < f(\hat{x}_1, \ldots, \hat{x}_i + \epsilon, \ldots \hat{x}_n)$ for all $\epsilon \in (-\lambda, \lambda)$. %
\label{def:minima}
\end{defn}

We will be evaluating a loss function at a semi-differentiable point (due to the non-linearity $\sigma$), so it is useful to note the following proposition:

\begin{prop}
Consider a multivariate function $f$ and point $\hat{X}$. If $f$ is a continuous function that is semi-differentiable at $\hat{X}$ with both left and right partial derivatives, and for every $i$ the right derivative with respect to $X_i$ evaluated at $\hat{X}$ is positive, $\frac{\partial^+ f}{\partial X_i}(\hat{X}) > 0$, and the left derivative with respect to $X_i$ evaluated at $\hat{X}$ is negative, $\frac{\partial^- f}{\partial X_i}(\hat{X}) < 0$, then $\hat{X}$ is a strict coordinate-wise local minimum as defined in Definition~\ref{def:minima}.
\label{prop:minima}
\end{prop}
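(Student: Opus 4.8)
**

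The plan is to verify Proposition~\ref{prop:minima} directly from Definition~\ref{def:minima} by reducing the multivariate statement to a collection of one-dimensional statements, one per coordinate. Fix an index $i$ and define the single-variable restriction $\phi_i(t) := f(\hat{x}_1,\ldots,\hat{x}_{i-1}, \hat{x}_i + t, \hat{x}_{i+1}, \ldots, \hat{x}_n)$. Since $f$ is continuous, $\phi_i$ is continuous on an open interval around $t=0$; since $f$ is semi-differentiable at $\hat{X}$ with both one-sided partial derivatives in the $i$th coordinate, $\phi_i$ has a right derivative $\phi_i'(0^+) = \frac{\partial^+ f}{\partial X_i}(\hat{X}) > 0$ and a left derivative $\phi_i'(0^-) = \frac{\partial^- f}{\partial X_i}(\hat{X}) < 0$. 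The goal is then to show that $t = 0$ is a strict local minimum of $\phi_i$, i.e. there is $\lambda_i > 0$ with $\phi_i(0) < \phi_i(t)$ for all $t \in (-\lambda_i, \lambda_i) \setminus \{0\}$; taking $\lambda = \min_i \lambda_i$ (a finite minimum of positive numbers, hence positive) then gives the conclusion.

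First I would handle the right side. From $\phi_i'(0^+) = \lim_{t \to 0^+} \frac{\phi_i(t) - \phi_i(0)}{t} > 0$, the definition of the limit gives some $\lambda_i^+ > 0$ such that for all $t \in (0, \lambda_i^+)$ the difference quotient $\frac{\phi_i(t) - \phi_i(0)}{t}$ is positive (bounded below by, say, half the right derivative), and since $t > 0$ this forces $\phi_i(t) > \phi_i(0)$. Symmetrically, from $\phi_i'(0^-) = \lim_{t \to 0^-} \frac{\phi_i(t) - \phi_i(0)}{t} < 0$ there is $\lambda_i^- > 0$ such that for all $t \in (-\lambda_i^-, 0)$ the quotient is negative, and since $t < 0$ this again forces $\phi_i(t) > \phi_i(0)$. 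Setting $\lambda_i := \min(\lambda_i^+, \lambda_i^-)$ yields the desired one-dimensional strict minimum.

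There is essentially no deep obstacle here — the statement is a routine consequence of the definitions of one-sided derivatives — so the ``hard part'' is only bookkeeping: being careful that the conclusion of Definition~\ref{def:minima} as written compares $f(\hat X)$ to $f$ perturbed by $\epsilon \in (-\lambda,\lambda)$ including possibly $\epsilon$ of either sign, and ensuring the sign bookkeeping (positive quotient times positive $t$ versus negative quotient times negative $t$) is presented cleanly. One mild subtlety worth a sentence: the continuity hypothesis is used only to guarantee $\phi_i$ is defined on a full neighborhood of $0$ and to make the one-sided derivative statements meaningful; the strictness in the conclusion comes entirely from the strict sign conditions on the one-sided derivatives, not from continuity. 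I would close by remarking that the same argument shows that if instead the right derivative is merely nonnegative and the left derivative merely nonpositive, one obtains a (non-strict) coordinate-wise local minimum, which is the version implicitly needed when $\sigma$ contributes a genuinely non-differentiable kink — though for the present proposition the strict hypotheses are assumed.
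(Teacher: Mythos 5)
Your proof is correct and takes the same direct-from-definitions route as the paper, which in fact dismisses this proposition with a single sentence (``this follows from the definition of left and right derivatives, and the requirement that $f$ is continuous''); your restriction to $\phi_i(t)$ and the sign analysis of the one-sided difference quotients simply supply the details the paper omits. Your side remarks --- that continuity does no real work beyond well-definedness, and that the definition's interval should exclude $\epsilon = 0$ for the strict inequality to be satisfiable --- are both accurate refinements rather than gaps.
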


This follows from the definition of left and right derivatives, and the requirement that $f$ is continuous.

\subsection{Main theorems}

Suppose we learn $\theta$ by optimizing a regularized sum of reconstruction losses over a dataset,
\begin{align}
\label{eq:loss}
    L(\theta, \mathcal{D}, \alpha) &= \alpha L_1(\theta) + \sum_n L_R(\theta, \eV(V_n)) \\
    &= \alpha L_1(\theta) + \sum_n L_R(\theta, z^n), \nonumber
\end{align}
with $\alpha > 0$ penalizing the sum of $L_1$ norms $||W^2||_1 + ||W^1||_1 + ||b^1||_1$. We consider the $L_1$ norm as the regularization term for simplicity.

We want to show that when the weights $\hat{\theta}$ are generated by compiling a set of rules that is \emph{minimal} with respect to $\mathcal{D}$, then $\hat{\theta}$ is a coordinate-wise local optimum of $L$ for $\alpha < 1$.

\setcounter{thm}{0} %

\begin{thm}[Formal]
\label{thm:nonminimality-implies-local-nonoptimality}
If $\hat{\theta}$ is the compilation of a rule set $\mathcal{R}$ that is \emph{not} minimal for $\mathcal{D}$, then $\hat{\theta}$ is \emph{not} a strict coordinate-wise local optimum of $L(\theta, \mathcal{D}, \alpha)$.
\end{thm}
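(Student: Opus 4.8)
The plan is to prove the statement directly, by exhibiting a single parameter coordinate of $\hat\theta$ along which $L(\cdot,\mathcal{D},\alpha)$ fails the strict-increase requirement of Definition~\ref{def:minima}. The basic lever is equation~\ref{eq:lr-zero}: since $L_R(\hat\theta,z^n)=0$ and $L_R\ge 0$, every reconstruction term already sits at its minimum, so any perturbation can only raise $\sum_n L_R$. It therefore suffices to find a coordinate that is \emph{free on $\mathcal{D}$} — one that can be nudged toward $0$ without changing any $f(\theta,z^n)$ — so that $\sum_n L_R$ stays $0$ while the regularizer $\alpha L_1$ strictly decreases, giving a one-sided coordinate derivative that is negative and (by the converse of Proposition~\ref{prop:minima}) ruling out a strict coordinate-wise minimum. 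I would split on which clause of minimality fails.

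\textbf{Case 1 (a rule is removable).} Suppose $R_i$ can be deleted from $\mathcal{R}$ without changing $f_\mathcal{R}$ on $\mathcal{D}$. First I would argue $R_i$ is satisfied by no $V_n\in\mathcal{D}$: if some $V_n$ satisfied $R_i$, then because at most one rule with output variable $k_{R_i}$ fires on $V_n$, removing $R_i$ would change coordinate $k_{R_i}$ of the output from $v'_{R_i}$ back to $v_{R_i}\neq v'_{R_i}$, contradicting removability. Hence by equations~\ref{eq:iso-g}--\ref{eq:iso-h}, $g_i(\hat\theta,z^n)\le 0$ and $h_i(\hat\theta,z^n)=0$ for all $n$. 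Now pick any $j^\star=m(k^\star,v^\star)$ with $(k^\star,v^\star)\in A_{R_i}$ (one exists, namely the mandatory output-variable constraint), so $\hat W^1_{i,j^\star}=1$. Decreasing $\hat W^1_{i,j^\star}$ to $1-\eps$ only lowers each $g_i(\theta,z^n)$, so $h$ and therefore $f$ are unchanged on all of $\mathcal{D}$, leaving $\sum_n L_R=0$, while $\|W^1\|_1$ drops by $\eps$; thus $L$ strictly decreases and $\hat\theta$ is not a strict coordinate-wise local optimum.

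\textbf{Case 2 (a constraint is removable but no rule is).} Suppose $(k,v)\in A_{R_i}$ can be deleted from $A_{R_i}$ without changing $f_\mathcal{R}$ on $\mathcal{D}$; then every rule fires on $\mathcal{D}$ and is non-trivial, so $N_{R_i}\ge 2$. Let $\mathcal{R}'$ be $\mathcal{R}$ with $(k,v)$ removed and $\hat\theta'$ its compilation: by equation~\ref{eq:iso-f} and $f_{\mathcal{R}'}=f_\mathcal{R}$ on $\mathcal{D}$ we get $f(\hat\theta',z^n)=f(\hat\theta,z^n)$ for all $n$, so $\hat\theta'$ also attains zero reconstruction loss, while $L_1(\hat\theta')<L_1(\hat\theta)$ (one fewer unit entry in $W^1$, and $|\hat b^1_i|$ drops from $N_{R_i}-1$ to $N_{R_i}-2$). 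The content of removability is that every $V_n\in\mathcal{D}$ satisfying $A_{R_i}\setminus\{(k,v)\}$ already has $f_\mathcal{R}(V_n)$ equal to $v'_{R_i}$ on variable $k_{R_i}$. The plan is to use this, together with the one-hot structure of $\eV$ (equation~\ref{eq:ev_def}), to isolate a single coordinate — I expect $\hat b^1_i$ moved toward $0$ — whose perturbation keeps $\sum_n L_R$ identically zero on $\mathcal{D}$ while strictly lowering $\alpha L_1$, finishing the argument as in Case 1.

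\textbf{Main obstacle.} Case 1 is routine; the crux is forcing the reduction of Case 2 through a single coordinate, since $\hat\theta$ and $\hat\theta'$ differ in the \emph{two} coordinates $\hat W^1_{i,m(k,v)}$ and $\hat b^1_i$, and coordinate-wise optimality is only broken by one-coordinate moves. A naive nudge of $\hat W^1_{i,m(k,v)}$ toward $0$ de-activates hidden unit $i$ on every $V_n\in\mathcal{D}$ satisfying $R_i$, and a naive nudge of $\hat b^1_i$ toward $0$ re-activates it on every $V_n$ satisfying all but one constraint of $R_i$; in either case, because unit $i$ drives the two residual coordinates $m(k_{R_i},v_{R_i})$ and $m(k_{R_i},v'_{R_i})$ through the $\mp 1$ pair in column $i$ of $\hat W^2$, each affected input contributes $2\eps$ to $\sum_n L_R$ against only an $\eps$ saving in $\alpha L_1$. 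So the proof has to pin down exactly which coordinate is genuinely flat on $\mathcal{D}$ and argue from removability (and the exclusion of Case 1) that such a coordinate must exist; that accounting — and the role of the standing assumption $\alpha<1$ in it — is the step I would spend the most care on.
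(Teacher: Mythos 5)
Your Case~1 is correct and is essentially the paper's own argument: a removable rule is one whose antecedent is never satisfied on $\mathcal{D}$, so its hidden unit is identically zero and a nonzero compiled weight feeding that unit can be moved toward zero at no reconstruction cost while the regularizer strictly decreases. (The paper shrinks the second-layer weight $\hat{W}^2_{i,j}$, which is flat in \emph{both} directions since it multiplies $h_j\equiv 0$; you shrink a first-layer weight, which is flat only leftward because $g_j\le 0$ there. Either suffices to defeat Definition~\ref{def:minima}.)

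Case~2 is a genuine gap, and your own ``main obstacle'' paragraph correctly diagnoses why it cannot be closed along the lines you sketch. Under the literal reading of a removable constraint, neither candidate coordinate is guaranteed to be flat: take $R_j$ with constraints $c_1,\dots,c_N$, suppose $c_2$ is removable because no input in $\mathcal{D}$ satisfies all of $c_1,c_3,\dots,c_N$ while violating $c_2$, and suppose $\mathcal{D}$ contains one input satisfying all $N$ constraints and another satisfying exactly $c_1,\dots,c_{N-1}$. Then your $2|\eps|$-versus-$\alpha|\eps|$ accounting shows that perturbing $\hat{W}^1_{j,m(k,v)}$ \emph{or} $\hat{b}^1_j$ in either direction strictly increases $L$, and indeed every single coordinate of $\hat\theta$ is a strict one-dimensional minimum --- so no single-coordinate argument (hence no proof of the theorem's conclusion) is available in that configuration. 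What actually closes Case~2 is the stronger reading that the paper implicitly uses when it invokes minimality in Lemma~\ref{lem:b1-optimality}: ``some constraint of $R_j$ is removable'' taken to mean ``no input in $\mathcal{D}$ satisfies exactly $N_{R_j}-1$ constraints of $R_j$.'' Under that reading the flat coordinate is $\hat{b}^1_j$ nudged toward zero: by eq.~\ref{eq:iso-g} every $g_j(\hat\theta,z^n)$ lies in $\{1\}\cup\{-1,-2,\dots\}$, so $g_j+\eps$ never enters the clipping window $(0,1)$ for $\eps\in(0,1)$, $h_j$ and the reconstruction loss are unchanged, and $\alpha|\hat{b}^1_j|$ strictly decreases. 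The paper's own one-line justification --- that the weight $\hat{W}^1_{j,k}$ encoding the removable constraint ``does not affect the reconstruction loss'' --- is vulnerable to exactly the objection you raise (a leftward perturbation deactivates $h_j$ on every input where $R_j$ fires), so your instinct about where the care is needed is right; but your proposal stops at identifying the difficulty, and resolving it requires adopting the strengthened removability condition rather than finding a cleverer coordinate.
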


\begin{thm}[Formal]
\label{thm:minimality-implies-local-optimality}
If $\hat{\theta}$ is the compilation of a rule set $\mathcal{R}$ that is minimal for $\mathcal{D}$, then $\hat{\theta}$ is a strict coordinate-wise local optimum of $L(\theta, \mathcal{D}, \alpha)$ for $\alpha < 1$.
\end{thm}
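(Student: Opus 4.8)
The plan is to invoke Proposition~\ref{prop:minima}: it suffices to show that $L(\cdot,\mathcal{D},\alpha)$, viewed as a function of a single parameter coordinate with the others held fixed at $\hat\theta$, is continuous and semi-differentiable at $\hat\theta$ with strictly positive right derivative and strictly negative left derivative there. Continuity and semi-differentiability are immediate: along any coordinate axis $g_j$ is affine in the perturbation, $h_j=\sigma(g_j)$ is continuous and piecewise linear, $f$ is affine in $h$, and $L_R$ and the regularizer are these composed with $\|\cdot\|_1$, so $L$ restricted to each coordinate is continuous and piecewise linear with finitely many pieces. Hence the one-sided derivatives exist and are attained on a small one-sided interval, which is exactly what Definition~\ref{def:minima} needs (with a uniform $\lambda$ per coordinate). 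So the whole proof reduces to computing, at $\hat\theta$, the left and right derivative of $L$ in each coordinate and checking their signs.

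Next I would extract the two structural consequences of minimality that make this work. First, \emph{rule}-removal minimality forces every rule $R_j$ to be satisfied by at least one $V_n\in\mathcal{D}$ (else deleting $R_j$ changes nothing), so $c_j:=|\{n: R_j \text{ satisfied by } V_n\}|\ge 1$; it likewise forces $v_{R_j}\neq v^\prime_{R_j}$, so by equation~\ref{eq:w2-def} each column of $\hat W^2$ has exactly two nonzero entries, $-1$ at row $m(k_{R_j},v_{R_j})$ and $+1$ at row $m(k_{R_j},v^\prime_{R_j})$. Second, \emph{constraint}-removal minimality forces, for each $R_j$ with $N_{R_j}\ge 2$ and each removable (non-output-variable) constraint $(k^\ast,v^\ast)\in A_{R_j}$, the existence of some $V_n\in\mathcal{D}$ satisfying all constraints of $R_j$ except $(k^\ast,v^\ast)$; by equation~\ref{eq:iso-g} such a $V_n$ has $g_j(\hat\theta,z^n)=0$, so $b_j:=|\{n: g_j(\hat\theta,z^n)=0\}|\ge 1$ whenever $N_{R_j}\ge 2$.

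The core is then a case analysis over the coordinate families $W^2_{i,j}$, $b^1_j$, $W^1_{j,k}$, each split on whether the compiled value is zero. In every case I compute the one-sided derivatives of the regularizer and of $\sum_n L_R(\theta,z^n)$ separately. The regularizer contributes a $\pm\alpha$ kink $\alpha|\epsilon|$ when the compiled value is $0$, and a smooth slope $\pm\alpha$ otherwise. For reconstruction, using equations~\ref{eq:iso-g}--\ref{eq:iso-h} and the two-nonzero-entries-per-column structure of $\hat W^2$: perturbing $W^2_{i,j}$ by $\epsilon$ shifts only coordinate $i$ of $f$, by $\epsilon\,h_j(\hat\theta,z^n)\in\{0,\epsilon\}$, so $\sum_n L_R=c_j|\epsilon|$ with derivatives $\pm c_j$; perturbing $b^1_j$ or $W^1_{j,k}$ shifts $g_j$ on the relevant examples, and since $\sigma$ has slope $1$ just below $g_j=1$ and just above $g_j=0$, the per-example loss is $2|\delta_n|$ with $\delta_n=\min(\epsilon,0)$ on examples where $g_j(\hat\theta,z^n)=1$ and $\delta_n=\max(\epsilon,0)$ on examples where $g_j(\hat\theta,z^n)=0$, giving reconstruction derivatives $(\text{right})=2b^\prime$, $(\text{left})=-2a^\prime$ where $a^\prime$ counts the ``all-satisfied'' examples (which equals $c_j$ in the cases that matter) and $b^\prime$ the ``exactly-one-short'' ones. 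Combining the two pieces: when the compiled coordinate is $0$, the $\pm\alpha$ kink alone already yields strict sign on both sides; when it is nonzero, the regularizer slope can point the ``wrong'' way on one side, and we need the reconstruction derivative on that side to be a nonzero integer — this is precisely where $c_j\ge 1$ (for $W^2_{i,j}$ and for antecedent entries of $W^1$) and $b_j\ge 1$ (for $b^1_j$ with $N_{R_j}\ge 2$) are invoked, together with $0<\alpha<1$ to guarantee, e.g., $-\alpha+c_j>0$ and $\alpha-c_j<0$, and $-\alpha+2b_j>0$. The binding requirement $\alpha<1$ comes exactly from the $W^2$ case, where only one output coordinate moves and the reconstruction coefficient is $c_j$ rather than $2c_j$.

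The main obstacle I expect is the bookkeeping at the two kinks of the clipped ReLU: one must verify that at $\hat\theta$ each hidden unit $h_j$ sits either at the kink $g_j=1$ (value $1$; left slope $1$, right slope $0$), at the kink $g_j=0$ (value $0$; left slope $0$, right slope $1$), or strictly inside a flat region, and then count correctly which examples contribute to the left versus the right derivative of $\sum_n L_R$ in each coordinate case. A secondary subtlety is how to read ``remove any individual constraint'' relative to the output-variable constraint, which by the construction of $\hat W^2$ (equation~\ref{eq:w2-def}) cannot be removed; I would either restrict constraint removal to non-output constraints or simply note that for the output-variable antecedent entry of $W^1$ the weaker fact $c_j\ge 1$ already suffices.
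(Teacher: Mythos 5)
Your proposal is correct and follows essentially the same route as the paper's proof: reduce to the signs of one-sided derivatives via Proposition~\ref{prop:minima}, split the loss into regularizer and reconstruction terms, and run a coordinate-wise case analysis over $\hat{W}^2_{i,j}$, $\hat{b}^1_j$, and $\hat{W}^1_{j,k}$, using rule-removal minimality to supply an example on which each rule fires ($g_j=1$) and constraint-removal minimality to supply an example on which exactly one constraint fails ($g_j=0$). Your only departures are cosmetic refinements: you compute exact reconstruction slopes ($2c_j$, $2b_j$) where the paper only lower-bounds the per-example loss by $|\epsilon|$, and you dispatch the $N_{R_j}=1$ / output-variable-constraint subtlety via the $w=0$ regularizer kink rather than the paper's blanket appeal to constraint removal.
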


\begin{remark}
Both theorems can also be proved for a squared $L_2$ regularizer with coefficient $\alpha \in (0, 1 / (2K))$, where $K = \max(1, \max_i N_{R_i} - 1)$ is the largest parameter magnitude in $\hat{\theta}$ and $2K$ is therefore the largest element of the gradient of the squared $L_2$ regularizer evaluated at $\hat{\theta}$.
\end{remark}

\subsection{Proof of Theorem~\ref{thm:nonminimality-implies-local-nonoptimality}}

\begin{proof}
There are two ways in which a ruleset might violate minimality with respect to $\mathcal{D}$.
\begin{itemize}
    \item \textbf{We can remove rule $R_j$ without affecting the output of $f_{\mathcal{R}}$ on any example.} All rules affect the output when their antecedent conditions are met, so we can infer that the conditions of $R_j$ are never met. In this case, there is guaranteed to be a parameter $\hat{W}^2_{i,j} \neq 0$, setting the value of $f_i(\hat{\theta},z^n)$ when the conditions of $R_j$ are met. If $R_j$ is never active for any $z^n$, then $\hat{W}^2_{i,j}$ does not affect the reconstrution loss $L_R$. 
    \item \textbf{There is a condition of $R_j$ that we can remove without affecting the output of $f_{\mathcal{R}}$ on any example.} This condition is represented as a nonzero element in $\hat{W}^1_{j,k}$ (and also in $b_j$, which is not necessary for the proof). 
    By construction this parameter does not affect the reconstruction loss. 
\end{itemize}
In both cases, the gradient with respect to the nonzero parameter ($\hat{W}^2_{i,j}$ and $\hat{W}^1_{j,k}$) is set only by the regularizer. The right and left derivatives have the same sign, violating the conditions of strict coordinate-wise local optimality.
\end{proof}

\subsection{Proof of Theorem~\ref{thm:minimality-implies-local-optimality}}

To prove Theorem~\ref{thm:minimality-implies-local-optimality}, we will analyze the left and right derivatives of the loss function (eq.~\ref{eq:loss}) with respect to each parameter in the compiled parameters $\hat{\theta}$, and show that the left derivative is positive and the right derivative is negative, satisfying the conditions of Proposition~\ref{prop:minima}. The left and right derivatives of the loss function with respect to $w$, evaluated at $\hat{\theta}$, can be written as follows.
\begin{align}
    \frac{\partial^-}{\partial w}L(\hat{\theta},\mathcal{D},\alpha) &=
    \frac{\partial^-}{\partial w}\left(\alpha|w|\right) + \frac{\partial^-}{\partial w}\left(\sum_n L_R(\hat{\theta}, z^n)\right) \label{eq:loss_deriv_left2}\\
    \frac{\partial^+}{\partial w}L(\hat{\theta},\mathcal{D},\alpha) &=
    \frac{\partial^+}{\partial w}\left(\alpha|w|\right) + \frac{\partial^+}{\partial w}\left(\sum_n L_R(\hat{\theta}, z^n)\right) \label{eq:loss_deriv_right2}
\end{align}
We can analyze the terms of these derivatives related to the regularizer and the reconstruction loss separately. First, we consider the regularizer term.
\begin{align}
     \frac{\partial^-}{\partial w}\left(\alpha|w|\right) = \begin{cases}
-\alpha,&w \leq 0 \\
\alpha,&w > 0
\end{cases} \label{eq:regularizer_deriv_left} \\
     \frac{\partial^+}{\partial w}\left(\alpha|w|\right) = \begin{cases}
-\alpha,&w < 0 \\
\alpha,&w \geq 0
\end{cases} \label{eq:regularizer_deriv_right}
\end{align}
Next, let us consider the term related to the reconstruction loss. Let $\etheta{w}$ be a set of parameters equal to $\hat{\theta}$, but with $\epsilon$ added to parameter $w$, e.g. $\etheta{b_j^1} = \langle \hat{W}^1, \hat{b}^1 + \epsilon \cdot e_j, \hat{W}^2 \rangle$, where $e_j$ is a standard basis vector with a 1 at position $j$ and $|\epsilon| < 1$. The left derivative can be written as follows,
\begin{align}
    \frac{\partial^-}{\partial w}\left(\sum_n L_R(\hat{\theta}, z^n)\right) &= \lim_{\epsilon \to 0^-}\frac{1}{\epsilon} \left( \sum_n L_R(\etheta{w}, z^n) - \sum_n L_R(\hat{\theta}, z^n) \right)\\
    &= \lim_{\epsilon \to 0^-}\frac{1}{\epsilon} \sum_n L_R(\etheta{w}, z^n),
\end{align}
using $L_R(\hat{\theta}, z^n) = 0$ per eq.~\ref{eq:lr-zero}.
The right derivative can be obtained similarly.

There are several ways we can show that the conditions of Proposition~\ref{prop:minima} are met for a given parameter $w$, i.e. that the left derivative $\frac{\partial^-}{\partial w}L(\hat{\theta},\mathcal{D},\alpha)$ is negative and the right derivative $\frac{\partial^+}{\partial w}L(\hat{\theta},\mathcal{D},\alpha)$ is positive.

\begin{lemma}
\label{lem:cond-1}
If there exists some $d > 0$ such that $\sum_n L_R(\etheta{w}, z^n) \geq |\epsilon|$ for all $\epsilon \in (-d, d)$, then the conditions of Proposition~\ref{prop:minima} are satisfied for $w$.
\end{lemma}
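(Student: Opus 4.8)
The plan is to verify the two hypotheses of Proposition~\ref{prop:minima} for the single coordinate $w$ at the point $\hat{\theta}$, i.e. that $\frac{\partial^-}{\partial w}L(\hat{\theta},\mathcal{D},\alpha) < 0$ and $\frac{\partial^+}{\partial w}L(\hat{\theta},\mathcal{D},\alpha) > 0$, working throughout under the standing assumption $\alpha < 1$ of Theorem~\ref{thm:minimality-implies-local-optimality}. First I would record that along the ray $\epsilon \mapsto \etheta{w}$ the loss is continuous and piecewise linear near $\epsilon = 0$: $g$ is affine in $\theta$ (eq.~\ref{eq:g}), $h = \sigma(g)$ is continuous and piecewise linear (eq.~\ref{eq:h}), $f$ is continuous and piecewise linear in each coordinate of $\theta$ (eq.~\ref{eq:f}), and $L_R$ is an $L_1$ norm of such a function (eq.~\ref{eq:lr}), as is $\alpha L_1(\theta)$. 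Consequently both one-sided derivatives of $L$ with respect to $w$ exist at $\hat{\theta}$, so Proposition~\ref{prop:minima} is applicable once their signs are checked; moreover $\phi(\epsilon) := \sum_n L_R(\etheta{w}, z^n)$ coincides with a linear function on some interval $(-d',0]$ and on some interval $[0,d'')$, and since $\phi(0)=0$ by eq.~\ref{eq:lr-zero}, it equals $a\epsilon$ on the former and $b\epsilon$ on the latter.

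Next I would split each one-sided derivative into its regularizer and reconstruction parts as in eqs.~\ref{eq:loss_deriv_left2}--\ref{eq:loss_deriv_right2} and bound the two parts separately. By eqs.~\ref{eq:regularizer_deriv_left}--\ref{eq:regularizer_deriv_right}, regardless of the sign of $w$ at $\hat{\theta}$ one has $\frac{\partial^-}{\partial w}(\alpha|w|) \le \alpha$ and $\frac{\partial^+}{\partial w}(\alpha|w|) \ge -\alpha$. For the reconstruction part, the hypothesis $\phi(\epsilon) \ge |\epsilon|$ on $(-d,d)$ gives, on $(-d',0)$, $a\epsilon \ge -\epsilon$, and dividing by the negative quantity $\epsilon$ yields $a \le -1$; hence $\frac{\partial^-}{\partial w}\bigl(\sum_n L_R(\hat{\theta}, z^n)\bigr) = a \le -1$. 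Symmetrically, on $(0,d'')$, $b\epsilon \ge \epsilon$ gives $b \ge 1$, so the corresponding right derivative is $\ge 1$.

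Combining the bounds gives $\frac{\partial^-}{\partial w}L(\hat{\theta},\mathcal{D},\alpha) \le \alpha - 1 < 0$ and $\frac{\partial^+}{\partial w}L(\hat{\theta},\mathcal{D},\alpha) \ge 1 - \alpha > 0$, which are exactly the conditions of Proposition~\ref{prop:minima} for the coordinate $w$. The only points needing care are confirming that the one-sided derivatives genuinely exist so that the limit/linearity manipulations are legitimate (handled by the piecewise-linearity remark) and not dropping the sign flip when dividing the hypothesis inequality through by a negative $\epsilon$. I do not expect a substantive obstacle here: this lemma is essentially bookkeeping that isolates the single analytic ingredient ($\phi(\epsilon) \ge |\epsilon|$ near $0$) that the harder, program-specific arguments in the proof of Theorem~\ref{thm:minimality-implies-local-optimality} will supply from the minimality of $\mathcal{R}$.
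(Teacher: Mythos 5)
Your proposal is correct and follows essentially the same route as the paper: split each one-sided derivative into the regularizer part (bounded in magnitude by $\alpha$) and the reconstruction part, use the hypothesis $\sum_n L_R(\etheta{w}, z^n) \geq |\epsilon|$ with the sign flip when dividing by negative $\epsilon$ to get the left derivative $\leq -1$ and the right derivative $\geq 1$, and conclude via $\alpha < 1$. Your extra remark establishing that the one-sided derivatives exist via piecewise linearity of $\epsilon \mapsto L(\etheta{w},\mathcal{D},\alpha)$ is a welcome bit of rigor that the paper leaves implicit, but it does not change the argument.
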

\begin{proof}
The left and right derivatives of the reconstruction loss term are:
\begin{align}
  \lim_{\epsilon \to 0^-} \frac{1}{\epsilon} \sum_n L_{R}(\thetaw, z^n) & \leq \lim_{\epsilon \to 0^-} \frac{1}{\epsilon} |\epsilon| = -1 \\
    \lim_{\epsilon \to 0^+} \frac{1}{\epsilon} \sum_n L_{R}(\thetaw, z^n) & \geq \lim_{\epsilon \to 0^+} \frac{1}{\epsilon} |\epsilon| = 1 
\end{align}
Because $\alpha < 1$, for any value of $w$ we have:
\begin{align}
  \frac{\partial^-}{\partial w}L(\hat{\theta},\mathcal{D},\alpha) \leq -1 + \alpha < 0 \\
    \frac{\partial^+}{\partial w}L(\hat{\theta},\mathcal{D},\alpha) \geq 1 - \alpha > 0.
\end{align}
Therefore the conditions of Proposition~\ref{prop:minima} are satisfied.
\end{proof}

\begin{lemma}
\label{lem:cond-2}
If $w = 0$, then the conditions of Proposition~\ref{prop:minima} are satisfied for $w$.
\end{lemma}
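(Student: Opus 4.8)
The plan is to split $L$ into the regularizer $\alpha|w|$ and the reconstruction term $R(\theta):=\sum_n L_R(\theta,z^n)$, evaluate the left and right partial derivatives of each at $\hat{\theta}$ along the coordinate $w$, and verify that the two contributions sum to something negative (for the left derivative) and positive (for the right derivative), as required by Proposition~\ref{prop:minima}.

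For the regularizer, since by hypothesis the compiled value of the coordinate is $w=0$, equations~\ref{eq:regularizer_deriv_left} and~\ref{eq:regularizer_deriv_right} give $\frac{\partial^-}{\partial w}\left(\alpha|w|\right)=-\alpha$ and $\frac{\partial^+}{\partial w}\left(\alpha|w|\right)=+\alpha$, both using only $\alpha>0$. For the reconstruction term, the key observation is that $g(\epsilon):=R(\etheta{w})$ is nonnegative in $\epsilon$ — each $L_R$ is an $\ell_1$ distance — and attains the value $0$ at $\epsilon=0$ by equation~\ref{eq:lr-zero}. Hence $\epsilon=0$ is a global minimizer of $g$, so $g(\epsilon)/\epsilon\ge 0$ for $\epsilon>0$ and $g(\epsilon)/\epsilon\le 0$ for $\epsilon<0$; passing to the limit (the one-sided limits exist because $f$, and therefore $L_R$, is piecewise linear in each coordinate of $\theta$, the same regularity invoked throughout this section) yields $\frac{\partial^+}{\partial w}R(\hat{\theta})\ge 0$ and $\frac{\partial^-}{\partial w}R(\hat{\theta})\le 0$.

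Combining these through equations~\ref{eq:loss_deriv_left2}--\ref{eq:loss_deriv_right2}, one gets $\frac{\partial^-}{\partial w}L(\hat{\theta},\mathcal{D},\alpha)\le -\alpha<0$ and $\frac{\partial^+}{\partial w}L(\hat{\theta},\mathcal{D},\alpha)\ge \alpha>0$, which are precisely the hypotheses of Proposition~\ref{prop:minima}, completing the proof. The argument needs nothing about $w$ beyond $w=0$ — in particular no minimality assumption, and it is irrelevant whether perturbing $w$ changes the reconstruction loss at all — and nothing about $\alpha$ beyond $\alpha>0$, in contrast with Lemma~\ref{lem:cond-1}. There is no real obstacle here; the only thing to get right is the sign bookkeeping, namely that the $\ell_1$ kink at $w=0$ contributes $\mp\alpha$ and dominates a reconstruction term whose one-sided derivatives can only reinforce the desired signs.
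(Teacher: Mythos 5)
Your proof is correct and follows essentially the same route as the paper's: the reconstruction term is nonnegative and vanishes at $\hat{\theta}$ (eq.~\ref{eq:lr-zero}), so its one-sided derivatives along the coordinate $w$ are $\leq 0$ (left) and $\geq 0$ (right), and the $\ell_1$ kink at $w=0$ contributes $\mp\alpha$, giving a total left derivative $\leq -\alpha < 0$ and right derivative $\geq \alpha > 0$ as Proposition~\ref{prop:minima} requires. The only additions beyond the paper's (terser) argument are your explicit remark on why the one-sided limits exist and the observation that only $\alpha>0$ is needed; both are fine.
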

\begin{proof}
Because the reconstruction loss is zero at $\hat{\theta}$ and non-negative everywhere, its left and right derivatives are guaranteed to be $\leq 0$ and $\geq 0$, respectively. Therefore:
\begin{align}
  \frac{\partial^-}{\partial w}L(\hat{\theta},\mathcal{D},\alpha) \leq 0 - \alpha < 0 \\
    \frac{\partial^+}{\partial w}L(\hat{\theta},\mathcal{D},\alpha) \geq 0 + \alpha > 0.
\end{align}
Therefore the conditions of Proposition~\ref{prop:minima} are satisfied.
\end{proof}

\begin{lemma}
\label{lem:cond-3}
If $w > 0$ and there exists some $d>0$ such that $\sum_n L_R(\etheta{w}, z^n) \geq |\epsilon|$ for all $\epsilon \in (-d, 0)$, then the conditions of Proposition~\ref{prop:minima} are satisfied for $w$.
\end{lemma}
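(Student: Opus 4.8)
The plan is to follow the proof of Lemma~\ref{lem:cond-1} almost verbatim, but to use only the one-sided (left) hypothesis on the reconstruction loss and to lean on the asymmetry of the $L_1$ regularizer at a strictly positive coordinate for the right side. As in equations~\ref{eq:loss_deriv_left2}--\ref{eq:loss_deriv_right2}, I would write $\frac{\partial^\pm}{\partial w}L(\hat{\theta},\mathcal{D},\alpha)$ as the sum of the regularizer contribution $\frac{\partial^\pm}{\partial w}(\alpha|w|)$ and the reconstruction contribution $\frac{\partial^\pm}{\partial w}\big(\sum_n L_R(\hat{\theta}, z^n)\big)$, and bound the two pieces separately, using that $L_R(\hat{\theta}, z^n)=0$ for all $n$ (eq.~\ref{eq:lr-zero}) and that $L_R$ is semidifferentiable in the parameters at $\hat{\theta}$ (piecewise linearity through the clipped ReLU $\sigma$, as elsewhere in this appendix).

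For the left derivative: since $w>0$, equation~\ref{eq:regularizer_deriv_left} gives $\frac{\partial^-}{\partial w}(\alpha|w|)=\alpha$. For the reconstruction term, combining $L_R(\hat{\theta},z^n)=0$ with the hypothesis $\sum_n L_R(\etheta{w},z^n)\ge|\epsilon|$ for all $\epsilon\in(-d,0)$, the same limiting computation as in Lemma~\ref{lem:cond-1} yields
\[
\lim_{\epsilon\to 0^-}\frac{1}{\epsilon}\sum_n L_R(\etheta{w},z^n)\;\le\;\lim_{\epsilon\to 0^-}\frac{1}{\epsilon}|\epsilon|\;=\;-1,
\]
so $\frac{\partial^-}{\partial w}L(\hat{\theta},\mathcal{D},\alpha)\le -1+\alpha<0$ because $\alpha<1$. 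For the right derivative I would not need the hypothesis at all: since $\sum_n L_R(\cdot,z^n)$ is non-negative everywhere and vanishes at $\hat{\theta}$, its right derivative with respect to $w$ is $\ge 0$ (as in Lemma~\ref{lem:cond-2}); and since $w>0$, equation~\ref{eq:regularizer_deriv_right} gives $\frac{\partial^+}{\partial w}(\alpha|w|)=\alpha>0$, hence $\frac{\partial^+}{\partial w}L(\hat{\theta},\mathcal{D},\alpha)\ge 0+\alpha>0$. With the left derivative negative and the right derivative positive, Proposition~\ref{prop:minima} applies to coordinate $w$.

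There is essentially no obstacle here; the proof is a routine sign-chase. The only points requiring care are (i) invoking $w>0$ exactly where it is used, namely to pin both one-sided derivatives of $|w|$ to $+\alpha$, which is what lets a purely left-sided reconstruction bound suffice, and (ii) noting that the one-sided limits exist because $f(\theta,z)$ is piecewise linear in $\theta$. I would expect the symmetric case $w<0$ (with the hypothesis stated on $\epsilon\in(0,d)$) to be handled by an identical argument with all signs reversed.
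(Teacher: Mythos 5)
Your proof is correct and follows essentially the same route as the paper, which simply combines the left-derivative argument of Lemma~\ref{lem:cond-1} (using the one-sided hypothesis on the reconstruction loss) with the right-derivative argument of Lemma~\ref{lem:cond-2} (using only nonnegativity of the loss and $w>0$ to pin the regularizer's right derivative to $+\alpha$). Your version spells out the sign-chase that the paper leaves implicit, but the decomposition and the resulting bounds $\frac{\partial^-}{\partial w}L \leq -1+\alpha < 0$ and $\frac{\partial^+}{\partial w}L \geq \alpha > 0$ are identical.
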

\begin{proof}This case therefore combines the reasoning of Lemmas~\ref{lem:cond-1} (for the left derivative) and \ref{lem:cond-2} (for the right derivative).
\begin{align}
  \frac{\partial^-}{\partial w}L(\hat{\theta},\mathcal{D},\alpha) &\leq -1 + \alpha < 0 \\
    \frac{\partial^+}{\partial w}L(\hat{\theta},\mathcal{D},\alpha) &\geq 0 + \alpha > 0.
\end{align}
Therefore the conditions of Proposition~\ref{prop:minima} are satisfied.
\end{proof}
\begin{remark}
Compared to Lemma~\ref{lem:cond-1}, this Lemma combines a stronger condition on the sign of $w$ with a looser condition on the reconstruction loss, which in this case must grow with the perturbation $\epsilon$ only for $\epsilon < 0$.
\end{remark}

We prove lemmas separately for each group of parameters, $\hat{W}^2$, $\hat{b}^1$, and $\hat{W}^1$, showing that the conditions of Lemma~\ref{lem:cond-1}, \ref{lem:cond-2}, or \ref{lem:cond-3} are met for every parameter in $\hat{\theta}$.
\\
\begin{lemma}
\label{lem:w2-optimality}
Each $\hat{W}_{i,j}^2$ satisfies the conditions of Propostion~\ref{prop:minima}.
\end{lemma}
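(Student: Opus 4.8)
The plan is to verify the hypotheses of Proposition~\ref{prop:minima} for the single coordinate $\hat{W}^2_{i,j}$ by splitting on whether $\hat{W}^2_{i,j}=0$ or $\hat{W}^2_{i,j}\neq 0$, and in each case invoking one of the already-proved sufficient conditions (Lemma~\ref{lem:cond-2} in the first case, Lemma~\ref{lem:cond-1} in the second). The work is concentrated in a single exact computation: describing how $\sum_n L_R(\etheta{W^2_{i,j}}, z^n)$ depends on the perturbation $\epsilon$.

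First I would observe that perturbing only $\hat{W}^2_{i,j}$ leaves $\hat{W}^1$ and $\hat{b}^1$ untouched, so by equations~\ref{eq:gi}--\ref{eq:hi} the hidden activations are unchanged, and in particular $h_j(\etheta{W^2_{i,j}}, z^n) = h_j(\hat\theta, z^n) = \llbracket R_j \text{ satisfied by } V_n \rrbracket$ by equation~\ref{eq:iso-h}. By equation~\ref{eq:fi} the perturbation then changes only the $i$-th output coordinate, and by exactly $\epsilon\, h_j(\hat\theta, z^n)$; combining with equations~\ref{eq:iso-f} and~\ref{eq:lr} yields the exact identity
\begin{equation*}
\sum_n L_R(\etheta{W^2_{i,j}}, z^n) \;=\; |\epsilon|\cdot\bigl|\{\, n : R_j \text{ is satisfied by } V_n \,\}\bigr|,
\end{equation*}
valid for all $|\epsilon|<1$. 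If $\hat{W}^2_{i,j}=0$, Lemma~\ref{lem:cond-2} applies directly. If $\hat{W}^2_{i,j}\neq 0$, then by the construction in equation~\ref{eq:w2-def} the rule $R_j$ is a member of $\mathcal{R}$ whose output coordinate is $i$ (either its ``old'' index $m(k_{R_j},v_{R_j})$ or its ``new'' index $m(k_{R_j},v^\prime_{R_j})$); since $\mathcal{R}$ is minimal over $\mathcal{D}$, deleting $R_j$ must change $f_{\mathcal R}$ on some example, and because a rule can only affect $f_{\mathcal R}$ on assignments it satisfies, $R_j$ is satisfied by at least one $V_n\in\mathcal{D}$. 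Hence the right-hand side above is $\geq|\epsilon|$, so Lemma~\ref{lem:cond-1} applies with any $d>0$, and the left and right derivatives of $L$ at $\hat\theta$ in this coordinate are negative and positive respectively.

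The only thing requiring care — and it is bookkeeping rather than a genuine obstacle — is confirming that the perturbation really touches a single output coordinate (so the per-example $L_1$ contribution is exactly $|\epsilon|\,h_j(\hat\theta,z^n)$, with no clipping entering), and that minimality is invoked in precisely the rule-removal form rather than the constraint-removal form; the latter is what will be needed in the companion lemma for $\hat{W}^1$. Once the displayed identity is in hand, the two cases close mechanically against Lemmas~\ref{lem:cond-1} and~\ref{lem:cond-2}.
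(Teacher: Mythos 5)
Your proof is correct and follows essentially the same route as the paper: the same exact computation showing the perturbation contributes $|\epsilon\, h_j(\hat\theta,z^n)|$ per example, followed by the rule-removal form of minimality to conclude $R_j$ is satisfied by some $V_n\in\mathcal{D}$ so that Lemma~\ref{lem:cond-1} applies. The only (immaterial) difference is that you dispatch the zero entries of $\hat{W}^2$ via Lemma~\ref{lem:cond-2}, whereas the paper applies Lemma~\ref{lem:cond-1} uniformly to all entries, since the bound $\sum_n L_R \geq |\epsilon|$ holds regardless of the entry's value.
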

\begin{proof}

\begin{align}
      \label{eq:w2-optimality-line-two}
  L_{R}(\thetaWtwo, z^n) &= \sum_{i'} | f_{i'}(\thetaWtwo, V) - f_{i'}(\hat{\theta}, V) | \\
      \label{eq:w2-optimality-line-three}
    &= \sum_{i'} \bigg| \bigg[ z^n_{i'} + \sum_{j'} (\hat{W}^2_{i,j'} + \epsilon \times \llbracket j = j' \land i = i' \rrbracket ) h_{j'}(\hat{\theta}, z^n) \bigg]
        - \\ &\hspace{1.05cm}~ \bigg[ z^n_{i'} + \sum_{j'} \hat{W}^2_{i,j'} h_{j'}(\hat{\theta}, z^n) \bigg] \bigg| \nonumber \\
        \label{eq:w2-optimality-line-four}
  &= | \epsilon \times h_j(\hat{\theta}, z^n) |. %
  \end{align}
Lines \ref{eq:w2-optimality-line-two} and \ref{eq:w2-optimality-line-three} plug in the definitions of $L_R$ (eq.~\ref{eq:lr}) and $f_i$ (eq.~\ref{eq:fi}) respectively, and line \ref{eq:w2-optimality-line-four} follows from algebra.

Recall that $h_j(\hat{\theta}, z^n) \in \{0,1\}$ for all $(j, n)$, per eq.~\ref{eq:iso-h}. For all $n$ where $h_j(\hat{\theta}, z^n)=0$, the value of $L_{R}(\thetaWtwo, z^n)=0$ for all $i$, but by minimality there must be some $z^n$ for which $h_j(\hat{\theta}, z^n)=1$ or else rule $j$ could be removed. Therefore:
\begin{align}
    \sum_n L_{R}(\thetaWtwo, z^n) = \sum_n  | \epsilon \times h_j(\hat{\theta}, z^n) | \geq |\epsilon|.
\end{align}
Therefore the conditions of Lemma~\ref{lem:cond-1} are met.
\end{proof}

\begin{lemma}
\label{lem:b1-optimality}
Each $\hat{b}_j^1$ satisfies the conditions of Propostion~\ref{prop:minima}.
\end{lemma}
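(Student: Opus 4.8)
The plan is to mirror the proof of Lemma~\ref{lem:w2-optimality}, splitting on the number of constraints $N_{R_j}$ in the antecedent of $R_j$, since by eq.~\ref{eq:b-def} we have $\hat b^1_j = 1 - N_{R_j}$. When $N_{R_j} = 1$ we get $\hat b^1_j = 0$, and Lemma~\ref{lem:cond-2} immediately yields the conclusion. So the real work is the case $N_{R_j} \geq 2$, where $\hat b^1_j < 0$; there I would show that the reconstruction loss grows at rate at least $1$ under perturbations of $b^1_j$ in \emph{both} directions, and then invoke Lemma~\ref{lem:cond-1} with $d = 1$ (as opposed to the one-sided Lemma~\ref{lem:cond-3}).

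For the computation in the case $N_{R_j} \geq 2$: adding $\epsilon$ with $|\epsilon| < 1$ to $b^1_j$ shifts $g_j(\hat\theta, z^n)$ to $g_j(\hat\theta, z^n) + \epsilon$ and leaves every other coordinate of $g$, and every $h_{j'}$ with $j' \neq j$, unchanged. By eq.~\ref{eq:iso-g} each $g_j(\hat\theta, z^n)$ lies in $\{1, 0, -1, \dots\}$, so because these values are spaced by $1$ and the clipped ReLU $\sigma$ is piecewise linear with breakpoints only at $0$ and $1$, for $|\epsilon| < 1$ the quantity $h_j(\etheta{b^1_j}, z^n) - h_j(\hat\theta, z^n)$ equals $\min(0,\epsilon)$ on inputs with $g_j(\hat\theta, z^n) = 1$ (i.e.\ $R_j$ is satisfied), equals $\max(0,\epsilon)$ on inputs with $g_j(\hat\theta, z^n) = 0$ (i.e.\ exactly $N_{R_j}-1$ constraints of $R_j$ hold), and is $0$ otherwise. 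Propagating this single changed hidden coordinate through column $j$ of $\hat W^2$ and substituting into the definitions of $f_i$ (eq.~\ref{eq:fi}) and $L_R$ (eq.~\ref{eq:lr}), exactly as in Lemma~\ref{lem:w2-optimality}, gives $L_R(\etheta{b^1_j}, z^n) = \lVert \hat W^2_{\cdot, j} \rVert_1 \, \lvert h_j(\etheta{b^1_j}, z^n) - h_j(\hat\theta, z^n)\rvert$, with $\lVert \hat W^2_{\cdot, j}\rVert_1 = 2$: by eq.~\ref{eq:w2-def} column $j$ has a $+1$ at $m(k_{R_j}, v'_{R_j})$ and a $-1$ at $m(k_{R_j}, v_{R_j})$, and these indices are distinct because a minimal rule set cannot contain a rule with $v_{R_j} = v'_{R_j}$ (such a rule never changes the output of $f_{\mathcal{R}}$ and could be removed).

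It remains to show that minimality forces both of the relevant index sets to be nonempty, and this is the step I expect to be the crux. For $\epsilon < 0$ we need some $z^n$ with $g_j(\hat\theta, z^n) = 1$; this is exactly the rule-removal clause of the minimal-rule-set definition, invoked in the same way as in Lemma~\ref{lem:w2-optimality}. For $\epsilon > 0$ we need some $z^n$ on which exactly $N_{R_j}-1$ constraints of $R_j$ hold, and here I would use the constraint-removal clause: pick any constraint $c \in A_{R_j}$ (one exists since $N_{R_j} \geq 2$), and observe that deleting $c$ can only \emph{enlarge} the set of assignments satisfying $R_j$; by minimality this enlargement must already occur within $\mathcal{D}$, so some $V_n \in \mathcal{D}$ satisfies every constraint of $R_j$ except $c$, i.e.\ exactly $N_{R_j}-1$ of them, hence $g_j(\hat\theta, z^n) = 0$. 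The only subtlety to handle carefully is that deleting $c$ should not collide with the ``at most one satisfied rule per output variable'' invariant; I would either restrict to a genuinely deletable $c$, or simply note that the count $g_j(\hat\theta, z^n) = 0$ follows purely from $V_n$ satisfying the remaining constraints and is independent of that invariant. Combining both directions, $\sum_n L_R(\etheta{b^1_j}, z^n) \geq 2\lvert\epsilon\rvert \geq \lvert\epsilon\rvert$ for all $\epsilon \in (-1,1)$, so Lemma~\ref{lem:cond-1} applies and $\hat b^1_j$ satisfies the conditions of Proposition~\ref{prop:minima}.
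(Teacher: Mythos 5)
Your proof is correct and follows essentially the same route as the paper's: perturb $b^1_j$, observe that only $h_j$ changes and only on inputs with $g_j(\hat\theta,z^n)\in\{0,1\}$, and use the two minimality clauses to produce one input with $g_j=1$ (covering $\epsilon<0$) and one with $g_j=0$ (covering $\epsilon>0$), then apply Lemma~\ref{lem:cond-1}. The only differences are cosmetic: the paper handles all $N_{R_j}\geq 1$ uniformly without your case split, and it only lower-bounds $\lVert \hat{W}^2_{\cdot,j}\rVert_1\geq 1$ rather than computing it to be exactly $2$.
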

\begin{proof}

We can derive the following by substituting the definitions of $L_R$ (eq.~\ref{eq:lr}) and $f_i$ (eq.~\ref{eq:fi}), $h_j$ (eq.~\ref{eq:hi}), and $g_j$ (eq.~\ref{eq:gi}), and simplifying the resulting expression, for $\epsilon < 1$:
\begin{align}
   L_{R}(\thetab, z^n) &= \sum_i | f_i(\thetab, z^n) - f_i(\hat{\theta}, z^n) |  \label{eq:b-optimality-line-one} \\
    &= \sum_i | \hat{W}^2_{i,j} \times \epsilon \times \llbracket g_j(\hat{\theta}, z^n) + \epsilon \in (0,1) \rrbracket | \label{eq:b-optimality-line-two}\\
    &= |\epsilon| \times \llbracket g_j(\hat{\theta}, z^n) + \epsilon \in (0,1) \rrbracket \times \sum_i |{W}^2_{i,j}|. \label{eq:b-optimality-line-three}\\
\end{align}
By construction of $W^2$, for all $j$ there exists $|W^2_{i,j}| = 1$, i.e. there is a non-zero entry in every column of $W^2$ (see eq.~\ref{eq:w2-def}). Therefore, $\sum_i |{W}^2_{i,j}| \geq 1$, and:
\begin{align}
    L_{R}(\thetab, z^n) &\geq |\epsilon| \times \llbracket g_j(\hat{\theta}, z^n) + \epsilon \in (0,1) \rrbracket. \label{eq:w2-reconstruction-loss}
\end{align}
By construction of $\hat{\theta}$, for all $n$, $g_j(\hat{\theta}, z^n) \in \{1,0,-1,\cdots\}$ (per eq. \ref{eq:iso-g}). Therefore we can enumerate the possible cases:
\begin{align}
   L_{R}(\thetab, z^n) \geq \begin{cases}
|\epsilon|,&g_j(\hat{\theta}, z^n) = 0 \land \epsilon > 0 \\
|\epsilon|,&g_j(\hat{\theta}, z^n) = 1 \land \epsilon < 0 \\
0,&\text{otherwise.}
\end{cases}
\end{align}
By the minimality criteria, for all $j$ there exists some $n$ where $g_j(\hat{\theta}, z^n) = 0$, where $N-1$ conditions of rule $j$ are satisfied (see eq.~\ref{eq:iso-g}). If not, some condition could be removed, violating minimality. Also, for all $j$ there exists some $n$ where $g_j(\hat{\theta}, z^n) = 1$, where all $N$ conditions of rule $j$ are satisfied. If not, the rule could be removed, violating minimality. Therefore:
\begin{align}
\sum_{n} L_{R}(\thetab, z^n) \geq |\epsilon|. 
\end{align}
Therefore, the conditions of Lemma~\ref{lem:cond-1} are met.
\end{proof}

\begin{lemma}
\label{lem:w1-optimality}
Each $\hat{W}^1_{j,k}$ satisfies the conditions of Propostion~\ref{prop:minima}.
\end{lemma}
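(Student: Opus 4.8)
The plan is to follow the same template as Lemmas~\ref{lem:w2-optimality} and~\ref{lem:b1-optimality}: split on the value of the compiled parameter and, in each case, exhibit a neighborhood on which the summed reconstruction loss dominates $|\epsilon|$, so that one of Lemmas~\ref{lem:cond-1}--\ref{lem:cond-3} applies. Since $\hat{W}^1_{j,k}$ is a $0/1$ indicator (eq.~\ref{eq:w1-def}), there are only two cases. If $\hat{W}^1_{j,k}=0$, then $w=0$ and Lemma~\ref{lem:cond-2} applies immediately. The work is in the case $\hat{W}^1_{j,k}=1$, where I aim to show that perturbing this single entry by $\epsilon$ raises $\sum_n L_R$ by at least $|\epsilon|$ for all small negative $\epsilon$, and then invoke Lemma~\ref{lem:cond-3} (the right tool here because $w=1>0$, so I only need a loss bound on the left side).

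For that case, by eq.~\ref{eq:w1-def} the entry corresponds to a constraint $(k',v)\in A_{R_j}$ with $m(k',v)=k$. Perturbing $W^1_{j,k}$ alone shifts only the preactivation $g_j$, and by exactly $\epsilon z^n_k$; hence by the algebra of eqs.~\ref{eq:b-optimality-line-two}--\ref{eq:b-optimality-line-three} (the bias case, specialized via $\epsilon \mapsto \epsilon z^n_k$ with $z^n_k\in\{0,1\}$) one obtains $L_R(\thetaWone, z^n) = |\epsilon|\, z^n_k\, \llbracket g_j(\hat{\theta},z^n)+\epsilon z^n_k \in (0,1)\rrbracket \sum_i |\hat{W}^2_{i,j}|$, with $\sum_i|\hat{W}^2_{i,j}|\ge 1$ as in Lemma~\ref{lem:b1-optimality}. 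Now I apply minimality: since $\mathcal{R}$ is minimal, rule $R_j$ cannot be deleted, so there is some $V_n\in\mathcal{D}$ on which $R_j$ is satisfied; for such $V_n$ every constraint of $R_j$ holds, in particular $V_{n,k'}=v$, so $z^n_k=1$, and $g_j(\hat{\theta},z^n)=1$ by eq.~\ref{eq:iso-g}. For $\epsilon\in(-1,0)$ this gives $g_j(\hat{\theta},z^n)+\epsilon\in(0,1)$, hence $L_R(\thetaWone, z^n)\ge|\epsilon|$ and therefore $\sum_n L_R(\thetaWone, z^n)\ge|\epsilon|$ on $(-1,0)$. The hypotheses of Lemma~\ref{lem:cond-3} are then met, Proposition~\ref{prop:minima} gives the claim, and together with Lemmas~\ref{lem:w2-optimality} and~\ref{lem:b1-optimality} this covers every coordinate of $\hat{\theta}$ and completes Theorem~\ref{thm:minimality-implies-local-optimality}.

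I expect the only delicate point to be choosing which half of the minimality hypothesis to use, and which data point it supplies. It is tempting to reach for the ``constraint $(k',v)$ is non-removable'' clause, but the data point witnessing that clause is precisely one on which constraint $(k',v)$ fails, i.e.\ $z^n_k=0$, which is inert under a perturbation of $W^1_{j,k}$; the clause that actually matters here is ``rule $R_j$ is non-removable,'' which yields a point where $R_j$ \emph{fully} fires so that $z^n_k=1$ and $g_j$ sits exactly at the clipping knee $g_j=1$, where a small negative perturbation moves $h_j$ linearly. Once this is sorted, everything else is the bookkeeping already carried out for $\hat{b}^1$; the only asymmetry is that, because $w=1>0$, the bound is only needed for $\epsilon<0$ and non-negativity of $L_R$ suffices on the $\epsilon>0$ side, which is exactly what Lemma~\ref{lem:cond-3} was designed to exploit.
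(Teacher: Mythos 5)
Your proposal is correct and follows essentially the same route as the paper's proof: the same split on $\hat{W}^1_{j,k}\in\{0,1\}$, the same reduction of the perturbed reconstruction loss to the bias-term computation (yielding $|z^n_k|\cdot|\epsilon|\cdot\llbracket g_j(\hat{\theta},z^n)+\epsilon\in(0,1)\rrbracket\sum_i|\hat{W}^2_{i,j}|$ with $\sum_i|\hat{W}^2_{i,j}|\ge 1$), and the same use of the rule-non-removability half of minimality to produce an example where $R_j$ fires fully (so $z^n_k=1$ and $g_j=1$), which gives the one-sided bound needed for Lemma~\ref{lem:cond-3}. Your closing observation --- that the constraint-non-removability witness is useless here because it has $z^n_k=0$, so only the one-sided bound for $\epsilon<0$ is available and Lemma~\ref{lem:cond-3} is the right tool --- is exactly the point the paper makes as well.
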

\begin{proof}
The analysis is similar to that of $\hat{b}^1_{j}$, and we can reuse the result of eq.~\ref{eq:w2-reconstruction-loss}.
\begin{align}
   L_{R}(\thetaWone, z^n) &= \sum_i | f_i(\thetaWone, z^n) - f_i(\hat{\theta}, z^n) | \\
    &= \sum_i | z^n_k \times \hat{W}^2_{i,j} \times \epsilon \times \llbracket g_j(\hat{\theta}, z^n) + \epsilon \in (0,1) \rrbracket | \\
    &= |z^n_k| \times \underbrace{\sum_i |  \hat{W}^2_{i,j} \times \epsilon \times \llbracket g_j(\hat{\theta}, z^n) + \epsilon \in (0,1) \rrbracket |}_{L_{R}(\thetab, z^n)}\\
    &\geq |z^n_k| \times |\epsilon| \times \llbracket g_j(\hat{\theta}, z^n) + \epsilon \in (0,1) \rrbracket.
    \end{align}
Recall that $W^1_{j,k} \in \{0, 1\}$ (eq.~\ref{eq:w1-def}) and $z^n_k \in \{0, 1\}$ (eq.~\ref{eq:ev_def}). There are two cases to consider.

\textbf{Case 1: $W^1_{j,k}=1$.} By minimality, for all $k$ there exists some $n$ such that $z^n_k=1$ and $g_j(\hat{\theta}, z^n)=1$, i.e. where rule $R_j$ is satisfied by $z^n$, and thus necessarily the constraint represented by $W^1_{j,k}$ is satisfied by the input. However, minimality does not guarantee that there exists some $n$ such that $z^n_k=1$ and $g_j(\hat{\theta}, z^n)=0$. Note that if $\epsilon < 0$, then $|z^n_k| \times |\epsilon| \times \llbracket g_j(\hat{\theta}, z^n) + \epsilon \in (0,1) \rrbracket = |\epsilon|$ if $z^n_k = 1$ and $g_j(\hat{\theta}, z^n) = 1$, which is the case for some $n$. Therefore:
\begin{align}
\sum_{n} L_{R}(\thetaWone, z^n) = \sum_{n} |z^n_k| \times |\epsilon| \times \llbracket g_j(\hat{\theta}, z^n) + \epsilon \in (0,1) \rrbracket \geq \begin{cases}
|\epsilon|,& \epsilon < 0 \\
0, & \epsilon \geq 0
\end{cases} 
\end{align}
Since $W^1_{j,k} > 0$, the conditions of Lemma~\ref{lem:cond-3} are met.

\textbf{Case 2: $W^1_{j,k}=0$.} The conditions of Lemma~\ref{lem:cond-2} are met.
\end{proof}

\section{Learnability Analysis Details}
\label{sec:appendix_analysis_details}

In this section, we provide additional discussion and details related to trace supervision (\S\ref{sec:appendix-trace-supervision}) and the procedure for determining the minimal version of a program with respect to a training set (\S\ref{sec:appendix-minimality-analysis}).

\subsection{Procedure for Training with Trace Supervision}
\label{sec:appendix-trace-supervision}

Given a program $P$ and a set of model inputs $\mathcal{X}$, we can run $P$ for every input in $\mathcal{X}$, and extract the variable assignments at the input and output of every sub-layer, for every position. %
In our experiments, for simplicity, we focus on training the MLP parameters.
After using the ALTA interpreter to collect pairs of variable assignments at the input and output of the MLP layers, we map these assignments to vectors, using the mapping described in \S\ref{sec:framework}. Finally, we can train a MLP layer based on these pairs of input and output vectors. Specifically, we use an L2 loss to encourage the MLP to produce the output vector given the input. We then use the ALTA compiler to compile parameters other than those used for the MLP layer. By combining the learned MLP parameters with those provided by the compiler, we have a full set of Transformer parameters. See \S\ref{sec:appendix_parity} for the hyperparameters and training details for the parity task.

For future work, it may be possible to provide supervision from execution traces in a way that makes fewer assumptions about how variables are encoded in the residual stream, i.e. by encouraging the residual stream to encode such values in a way that allows them to be accurately predicted from the residual stream with a linear classifier. This would enable training models with intermediate supervision without any compiled parameters, but is out of scope for this work. It would then be interesting to assess whether there is any potential for transfer learning from training with such supervision in cases where a ground truth program is known to cases where a ground truth program is unknown or not feasible to express.

\subsection{Procedure for Determining Minimal Versions of Programs}
\label{sec:appendix-minimality-analysis}

Given a program $P$ and set of model inputs $\mathcal{X}$, we can determine the \emph{minimal version} of $P$ with respect to $\mathcal{X}$. For simplicity, we focus on the set of transition rules and the input embedding operations for this analysis. First, as motivated by our theoretical analysis in Appendix \ref{sec:appendix_theory}, we remove any transition rules which are never satisfied when running $P$ on the inputs in $\mathcal{X}$. While the formal definition of a \emph{minimal rule set} also puts conditions on the constraints of satisfied rules, we assume that there are no unnecessary constraints to simplify our analysis. Second, we analyze the set of input IDs and positions seen when executing $P$ over $\mathcal{X}$. We restrict the variable initialization functions of the minimal program to output default values for variables outside of this minimal set of observed token IDs and positions.

\section{Program Details and Additional Results}

In this section we provide program details and additional results. Some program listings omit the definitions of various constants and helper functions for brevity, and we refer the reader to the open-source code for the complete program specifications for all programs discussed in this paper.

\label{sec:appendix_experiemnt_details}

\subsection{Parity}
\label{sec:appendix_parity} 

\paragraph{Programs} Here we provide the ALTA code for the parity programs that we study. The code for the \emph{Sequential (Relative)} program is in Figure~\ref{fig:program-parity-relative} and the code for the \emph{Sum + Modulo} program is in Figure~\ref{fig:program-parity-sum-modulo}. The code for the \emph{Sequential (Absolute)} program was given in Figure~\ref{fig:parity_sequential}.

\begin{figure}[ht!]
\centering

\begin{lstlisting}[language=alta,numbers=none] 
vars = {
    "parity": var(range=2),
    "start": numeric_var(input_init_fn=lambda x: float(x == START),
                         values=(0, 1)),
    "start_or_one": var(range=2, input_init_fn=lambda x: x in {1, START}),
    "query": var(range=2, default=1),
}

attention_heads = {
    "x": qkv("query", "start_or_one", "start",
             output_spec=numeric_var(values=BUCKETS)),
}

def ffn_fn(z):
  num_ones = round(1 / z["x"]) - 1
  z["parity"] = int(num_ones %

return program_spec(
    variables=vars, heads=attention_heads, ffn_fn=ffn_fn,
    output_name="parity", input_range=3, position_range=None
)
\end{lstlisting}

\caption{Program for computing parity using a sum and modulo operation.}
\label{fig:program-parity-sum-modulo}
\end{figure}

\begin{figure}[ht!]
\centering
\begin{lstlisting}[language=alta,numbers=none] 
variables = {
    "parity": var(range=2, input_init_fn=lambda x: 0 if x == START else x),
    "done": var(range=2, input_init_fn=lambda x: x == START),
}
attention_heads = {
    "parity_left": v_relative("parity", -1),
    "done_left": v_relative("done", -1),
}

def ffn_fn(z):
  if z["done"] != 1 and z["done_left"] == 1:
    z["parity"] = z["parity_left"] ^ z["parity"]
    z["done"] = 1

return program_spec(
    variables=variables, heads=attention_heads,ffn_fn=ffn_fn,
    output_name="parity", input_range=3, position_range=None,
    halt=halt_spec("done", halt_value=1),
)
\end{lstlisting}
\caption{Program for computing parity sequentially using relative positions.}
\label{fig:program-parity-relative}
\end{figure}

\paragraph{Model Interface}
Each program expects the input to be a binary sequence. The \emph{Sum + Modulo} and \emph{Sequential (Relative)} programs additionally expect a \texttt{START} token to be prepended to the sequence. For all programs, the final token in the output sequence contains the parity of the input sequences.

\paragraph{Model Sizes} The compiled model for the \emph{Sequential (Relative)} program has an MLP width (i.e., number of transition rules) of $7$, and $2$ attention heads. The number of transition rules for the minimal version of the \emph{Sequential (Absolute)} program depends on the maximum input length in the training set, and the number of transition rules for the minimal version of the \emph{Sum + Modulo} program depends on the maximum number of ones in the training set.

\paragraph{Alternative Programs}
There is another algorithm for parity proposed by \citet{chiang2022overcoming}, which was inspired by algorithms for MLPs from ~\citet{rumelhart1986learning}. Similarly to the Sum + Modulo program, this algorithm first computes a sum operation using an attention head, but they avoid explicitly computing the modulo operation by instead using attention heads that separately attend to even and odd elements. This enables the algorithm to be implemented with a fixed set of parameters that is invariant to the maximum input length, aside from the positional encodings. However, we did not explore this algorithm because it is not possible to implement in a way where the minimal ALTA program is invariant to the maximum input length considered. This is because ALTA requires discretization of numerical variables in order to perform numerical computations, a current limitation of the framework. Implementing the computations required would therefore require a number of transition rules that scales with the maximum input length considered, similarly to how the Sum + Modulo program requires a number of buckets that scales with the maximum number of ones considered. Regardless, \citet{chiang2022overcoming} showed this algorithm is difficult to learn in practice, and it also requires specialized positional encodings. Furthermore, their algorithm requires encoding parameters and activations with a degree of numerical precision that scales with the maximum input length. In contrast, the \emph{Sequential (Relative)} program compiles to a Universal Transformer that only needs to encode 4 binary variables in the residual stream, and consists of only 7 transition rules, i.e., requires only 7 hidden MLP dimensions. However, the construction of \citet{chiang2022overcoming} requires only 2 layers, where as the  \emph{Sequential (Relative)} program requires $N-1$ layers, where $N$ is the maximum input length.

\paragraph{Train and Test Sets}
The train and test sets are the same for all experiments (including both trace and end-to-end supervision). The train set consists of examples between lengths 0 and 20, and the test set contains examples between lengths 0 and 40. The sets include roughly an equal number of examples per number of ones. 

\paragraph{Trace Supervision Details}

\begin{table*}[h!]
\centering
\caption{
Trace supervision hyperparameters.
}
\scalebox{0.85}{
\def\arraystretch{1.3}
\begin{tabular}{cccc} 
\hline
& \multicolumn{3}{c}{\bf Program} \\
\cmidrule(lr){2-4}
\bf Hyperparameter & \bf Sequential (Relative) & \bf Sequential (Absolute) & \bf Sum + Modulo \\
\hline
Hidden Layers & 2 & 4 & 4 \\
Hidden Layer Size & 128 & 4,096 & 4,096 \\
Batch Size & 256 & 256 & 256 \\
Steps & 50,000 & 50,000 & 400,000 \\
Learning Rate & 1e-2 & 1e-4 & 1e-4 \\
Activation Fn & ReLU & ReLU & ReLU \\
Optimization Fn & Adafactor & Adam & Adam \\
Noise Std Dev & 0.1 & 0.1 & 0.1 \\
\hline
\end{tabular}
} %
\label{tab:trace_params}
\end{table*}

Hyperparameters for training with trace supervision are listed in Table~\ref{tab:trace_params}. All are standard hyperparameters for training neural networks except "Noise Std Dev." We added a small amount of Gaussian noise to the neural network input at training time to make it robust to numeric imprecision at inference time.

The \textit{Sequential (Absolute)} and \textit{Sum + Modulo} experiments used four hidden layers instead of the standard two, as we explored various hyperparameters to ensure the MLP had the capacity to fit the training set. Similarly, we used Adam~\citep{diederik2014adam} for both experiments instead of Adafactor~\citep{shazeer2018adafactor}, as it helped fit the training set.

The \textit{Sum + Modulo} program used for trace supervision differs slightly from the program in Figure~\ref{fig:program-parity-sum-modulo}. \texttt{num\_ones} is stored as an intermediate categorical variable, as we found it easier for the MLP to learn to map a categorical variable to the correct parity output than a numeric variable.
\paragraph{End-to-end Training Details}

We trained transformers with various configurations using standard supervision --- varying the number of layers, whether weight sharing is used, and the type of positional encoding. Constant in all standard supervision experiments are the following hyperparameters: embeddings with dimension 512, hidden layer sizes of 2048, 6 attention heads with dimension 64, an Adafactor optimization function, GeLU~\citep{hendrycks2016gaussian} activation functions, a learning rate of 5e-4, 50,000 steps, and a byte vocabulary.

\paragraph{Results}

\begin{table*}[h!]
\centering
\caption{
Length generalization accuracy for Transformers trained with intermediate supervision on parity.
}
\scalebox{0.85}{
\def\arraystretch{1.3}
\begin{tabular}{cccc} 
\hline
& \multicolumn{3}{c}{\bf Accuracy} \\
\cmidrule(lr){2-4}
\bf Program & \textbf{Length $\leq 20$} & \textbf{Length $> 20$} & \textbf{Ones $> 20$}  \\ \hline
Sequential (Relative) & 100\% & 100\% & 100\% \\
Sequential (Absolute) & 100\% & 52\% & 51\% \\
Sum + Modulo & 100\% & 78\% & 50\% \\
\hline
\end{tabular}
} %
\label{tab:parity_results_trace}
\end{table*}

\begin{table*}[h!]
\centering
\caption{
Length generalization accuracy for Transformers trained with standard supervision on parity.
}
\scalebox{0.85}{
\def\arraystretch{1.3}
\begin{tabular}{ccccccc} 
\hline
& & & \multicolumn{3}{c}{\bf Accuracy} \\
\cmidrule(lr){4-6}
\bf Layers & \bf Weight Sharing & \bf Positional Encoding & \textbf{Length $\leq 20$} & \textbf{Length $> 20$} & \textbf{Ones $> 20$}  \\ \hline
8 & No & Relative & 100\% & 60\% & 51\% \\
8 & Yes & Relative & 100\% & 65\% & 52\% \\
40 & No & Relative & 100\% & 55\% & 51\% \\
40 & Yes & Relative & 100\% & 54\% & 48\% \\
8 & No & Absolute  & 100\% & 49\% & 50\% \\
8 & Yes & Absolute  & 100\% & 50\% & 52\% \\
40 & No & Absolute  & 100\% & 50\% & 51\% \\
40 & Yes & Absolute & 100\% & 52\% & 51\% \\
\hline
\end{tabular}
} %
\label{tab:parity_results_t5x}
\end{table*}

Table~\ref{tab:parity_results_trace} presents the accuracy for each intermediate supervision experiment and Table~\ref{tab:parity_results_t5x} presents the accuracy for each standard supervision configuration on different slices of the data. Figure~\ref{fig:trace_num_ones} compares the intermediate and standard supervision results, showing that standard supervision exhibits behavior similar to the \textit{Sum + Modulo} program.

Figures \ref{fig:accuracy_by_length}, \ref{fig:accuracy_by_num_ones}, and \ref{fig:pred_by_num_ones} break down the standard supervision results in more detail.

\begin{figure}[t!]
\includegraphics[width=1\textwidth]{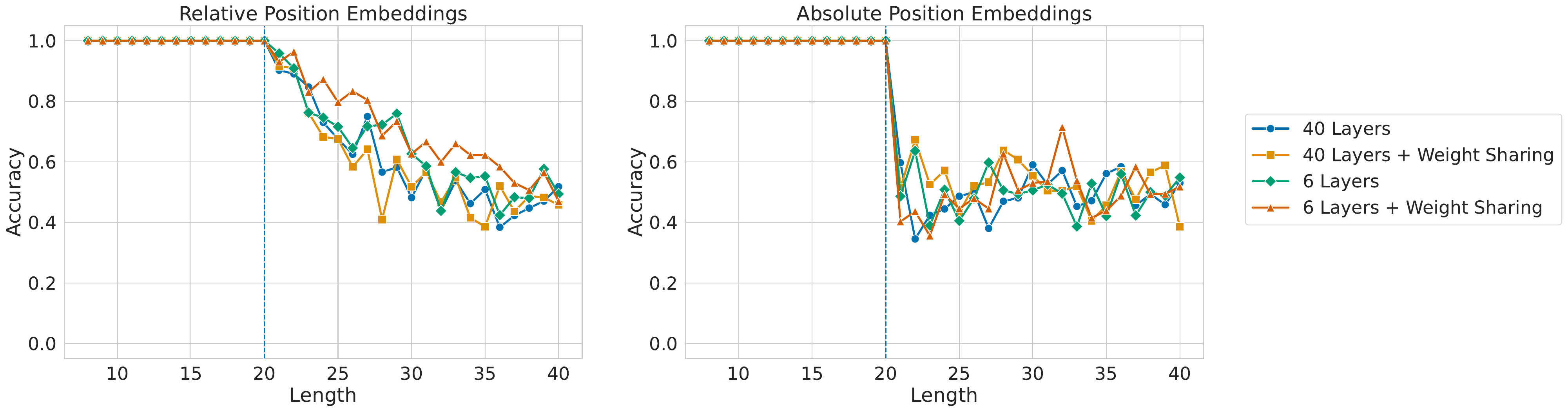} 
\caption{Accuracy by length for Transformers trained with standard supervision. There is some length generalization with relative position embeddings (left), but none with absolute (right).}
\label{fig:accuracy_by_length}
\end{figure}

Figure~\ref{fig:accuracy_by_length} shows that there is no length generalization when using absolute positional embeddings, while with relative positional embeddings there is some length generalization which decreases as length increases.

\begin{figure}[h!]
\centering
\includegraphics[width=0.8\textwidth]{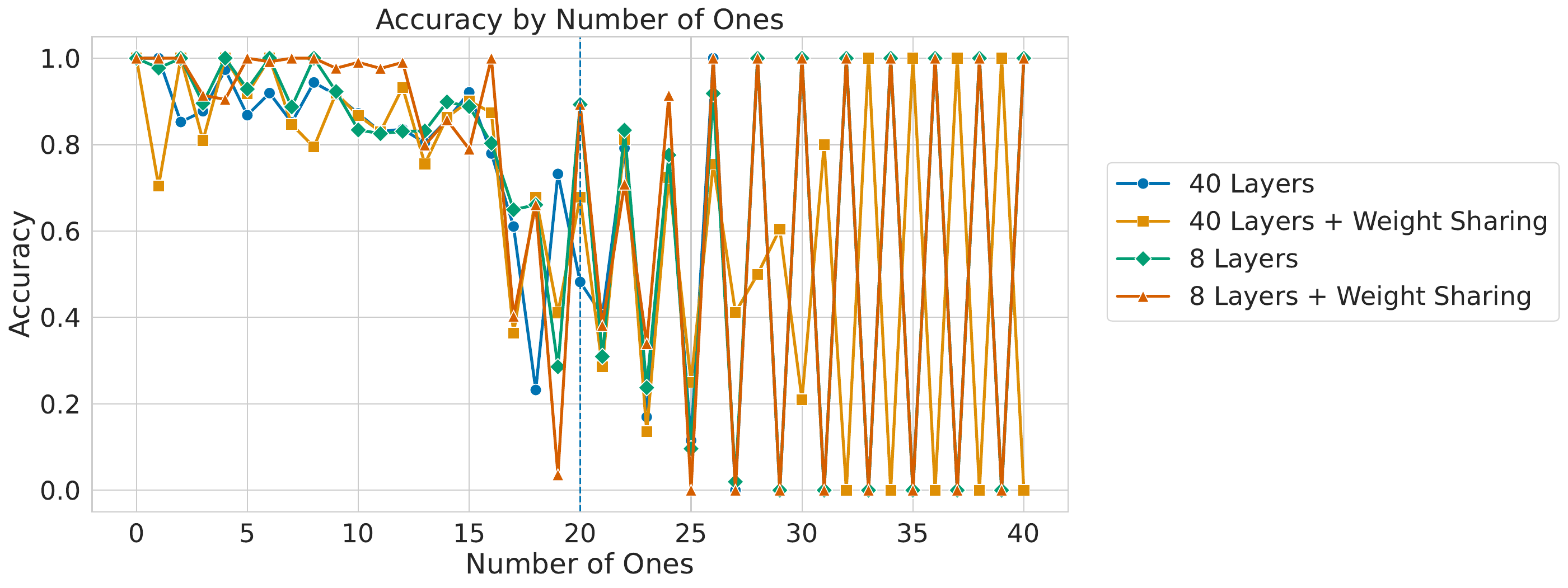} 
\caption{Accuracy by number of ones for Transformers trained with standard supervision using relative position embeddings. There is no generalization beyond 20 ones, which is the maximum number seen during training. }
\label{fig:accuracy_by_num_ones}
\end{figure}

Figure~\ref{fig:accuracy_by_num_ones} shows that the length generalization we observe with relative positional embeddings is due entirely to longer examples sometimes containing the same number of ones as shorter examples in the training distribution, consistent with results in \citet{anil2022exploring}. After 20 ones, accuracy starts oscillating between 0 and 1. The oscillations are due to the model predicting either 0 or 1 for all examples once examples contain more ones than were seen during training, which is shown in  Figure~\ref{fig:pred_by_num_ones}.

\begin{figure}[h!]
\centering
\includegraphics[width=0.8\textwidth]{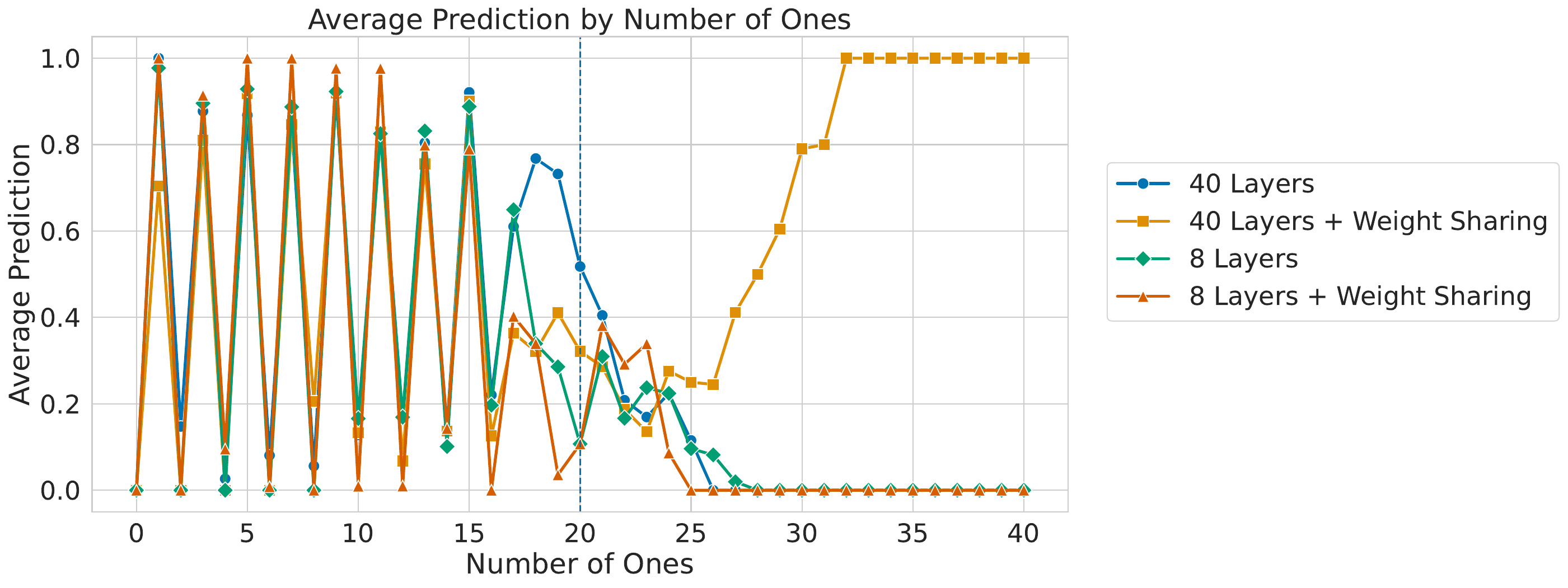} 
\caption{Average prediction by number of ones for Transformers trained with standard supervision using relative position embeddings. Once examples have slightly more than 20 ones, the maximum number seen during training, the models resort to predicting either 0 or 1 for all examples.}
\label{fig:pred_by_num_ones}
\end{figure}

\FloatBarrier
\subsection{Addition}
\label{sec:appendix_addition}

While the addition task has been previously studied in the context of decoder-only Transformers, it is also straightforward to compute addition in an encoder-only Transformer. We give an example input and output sequence pair in Table~\ref{tab:addition-example}.

\begin{table*}[h!]
\centering
\caption{Example input and output sequences for addition task.}
\renewcommand{\arraystretch}{1.4} 
\begin{tabular}{|l|c|c|c|c|c|c|c|}
\hline
Input Sequence & START & 3 & 4 & + & 5 & 6 & END \\
\hline
Output Sequence & PAD & PAD & PAD & PAD & 9 & 0 & PAD \\
\hline
\end{tabular}
\label{tab:addition-example}
\end{table*}

Our program for computing addition is given in Figure~\ref{fig:program-addition}. The algorithm initializes a pointer to the first digit of each input, and a
pointer to an output buffer. The algorithm then iterates over the digits,
adding the current digits, adding the value to the output buffer, and updating
the carry value. When the last digit has been processed, the final carry value
is added to the output buffer. 

The minimal compiled MLP width (i.e., number of transition rules) is $884$, and the program uses $6$ attention heads. The program requires $N+2$ layers, where $N$ is the number of digits in the larger of the two inputs.

\begin{figure}[ht!]
\centering

\begin{lstlisting}[language=alta,numbers=none] 
def init(x):
  # Initializes pointers.
  if x["token_right"] == END_TOKEN:
    x["ptr_b"] = 1
    x["ptr_out"] = 1
  if x["token_right"] == ADD_TOKEN:
    x["ptr_a"] = 1

def iterate(x):
  # Execute one step of addition.
  raw_sum = x["value_carry"] + x["ptr_a_token"] + x["ptr_b_token"]
  if x["ptr_out"]:
    x["value_out"] = raw_sum %
  x["value_carry"] = raw_sum // 10
  # Move all pointers to the left.
  # Attention heads attending to the right will be undefined.
  if x["token"] != END_TOKEN:
    x["ptr_out"] = x["ptr_out_right"]
    x["ptr_a"] = x["ptr_a_right"]
    x["ptr_b"] = x["ptr_b_right"]

def finalize(x):
  # Finalize output by adding the final carry to the output.
  if x["ptr_out"]:
    x["value_out"] = x["value_carry"]
  x["step"] = STEP_DONE

def ffn_fn(x):
  if x["step"] == STEP_INIT:
    init(x)
    x["step"] = STEP_ITERATE
  elif x["step"] == STEP_ITERATE:
    if x["ptr_a_token"] == START_TOKEN:
      x["step"] = STEP_FINALIZE
    else:
      iterate(x)
  elif x["step"] == STEP_FINALIZE:
    finalize(x)

\end{lstlisting}
\caption{MLP function for addition program.}
\label{fig:program-addition-mlp}
\end{figure}

\begin{figure}[ht!]
\centering
\begin{lstlisting}[language=alta,numbers=none] 
variables = {
  "token": pb.var(INPUT_RANGE, input_init_fn=lambda x: x),
  # This variable tracks the current processing step.
  "step": pb.var(NUM_STEPS),
  # These are pointers to which digit is currently being processed.
  # They are `1` at the position of the current digit to process, and `0`
  # otherwise.
  "ptr_a": pb.var(2),
  "ptr_b": pb.var(2),
  # This pointer is `1` at the position to write the next output to,
  # and `0` otherwise.
  "ptr_out": pb.var(2),
  # This tracks the "carry" value form the previous iteration.
  "value_carry": pb.var(10),
  # This tracks the final output value for a given digit.
  "value_out": pb.var(10),
  # Static variables used as attention query.
  "one": pb.var(var_range=2, default=1),
}
attention_heads = {
  # For these relative attention heads, we always want to attend to the
  # position immediately to the right.
  "token_right": v_relative("token", 1),
  "ptr_a_right": v_relative("ptr_a", 1),
  "ptr_b_right": v_relative("ptr_b", 1),
  "ptr_out_right": v_relative("ptr_out", 1),
  # For these attention heads, we want to attend to the positions associated
  # with the current pointers.
  "ptr_a_token": qkv("one", "ptr_a", "token"),
  "ptr_b_token": qkv("one", "ptr_b", "token"),
}
return program_spec(
  variables=variables, heads=attention_heads, ffn_fn=ffn_fn,
  output_name="value_out", input_range=INPUT_RANGE, position_range=None,
  halt_spec=pb.halt_spec("step", halt_value=STEP_DONE),
)

\end{lstlisting}
\caption{Program for adding two positive integers. The MLP function is defined in Figure~\ref{fig:program-addition-mlp}.}
\label{fig:program-addition}
\end{figure}

\FloatBarrier
\subsection{SUBLEQ}
\label{sec:appendix_subleq}

Our ALTA program for implementing a SUBLEQ interpreter is given in Figure~\ref{fig:program-subleq}. The input tokens define the set of memory registers, and program execution starts at position $0$.
SUBLEQ stands for SUBtract and branch if Less-than or EQual to zero. 
Commands in SUBLEQ are specified by three memory addresses $A$, $B$, and $C$. Executing a command consists of subtracting the value at memory address $A$ from the value at address $B$, and writing the result to address $B$. If the result is less than or equal zero, then the program jumps to the command at address $C$, or will halt if $C < 0$. \citet{giannou2023looped} showed how a Looped Transformer can implement an interpreter for a restricted form of SUBLEQ that did not allow self-modifying code, i.e., memory registers specifying SUBLEQ instructions could not be modified during program execution. Our program implements a SUBLEQ interpreter without such restrictions, i.e. without differentiating between program and memory registers. 
Additionally, our implementation executes 1 SUBLEQ instruction every 3 layers, as opposed to every 9 layers for the implementation of \citet{giannou2023looped}. However, our approach requires $\mathcal{O}(N^3)$ MLP hidden dimensions (i.e., transition rules), where $N$ is the number of possible memory values. The construction of \citet{giannou2023looped} requires only $\mathcal{O}(logN)$ MLP hidden dimensions. 

Our compiled models use $5$ attention heads.
Each token in the input and output of our model corresponds to a single register, with the input and output sequences encoding the initial and final values of the registers, respectively. We verified the correctness of our program and compiled models using unit tests involving simple SUBLEQ programs.

\begin{figure}[ht!]
\centering
\begin{lstlisting}[language=alta,numbers=none] 
def encode(value):
  # Encodes a register value as positive integer.
  return value - MIN_VALUE

def decode(value):
  # Decodes a register value from positive integer.
  return value + MIN_VALUE

def _update_position(z, position_a):
  z["position_a"] = position_a
  z["position_b"] = position_a + 1
  z["position_c"] = position_a + 2

def ffn_fn(z):
  if z["state"] == STATE_1:
    if decode(z["a"]) < 0 or decode(z["b"]) < 0:
      # `a` or `b` are not valid register positions.
      z["state"] = STATE_DONE
      return
    z["state"] = STATE_2
  elif z["state"] == STATE_2:
    # Compute mem[b] - mem[a].
    mem_b = decode(z["mem_b"]) - decode(z["mem_a"])
    z["jump"] = int(mem_b <= 0)

    # Update memory value at position `b`.
    if z["position"] == z["b"]:
      z["mem"] = encode(mem_b)

    z["state"] = STATE_3
  elif z["state"] == STATE_3:
    # Determine next instruction.
    if z["jump"]:
      # Jump to instruction `c`.
      if decode(z["c"]) < 0:
        # Break if `c` is negative.
        z["state"] = STATE_DONE
      else:
        _update_position(z, z["c"])
        z["state"] = STATE_1
    else:
      # Proceed to next instruction.
      _update_position(z, z["position_a"] + 3)
      z["state"] = STATE_1
\end{lstlisting}
\caption{MLP function for interpreting SUBLEQ instructions.}
\label{fig:program-subleq-mlp}
\end{figure}
\begin{figure}[h!]
\centering
\begin{lstlisting}[language=alta,numbers=none] 
mem_range = (MAX_VALUE - MIN_VALUE) + 1
variables = {
    # Value of register.
    "mem": var(mem_range, input_init_fn=lambda x: x),
    # Position of register.
    "pos": var(mem_range, position_init_fn=encode),
    # Position of current instruction.
    "pos_a": var(mem_range, default=encode(0)),
    "pos_b": var(mem_range, default=encode(1)),
    "pos_c": var(mem_range, default=encode(2)),
    # Program state.
    "state": var(NUM_STATES),
    # Whether to jump at next instruction.
    "jump": var(2),
}

attention_heads = {
    # Values of registers at `pos_a`, `pos_b`, and `pos_c`.
    "a": qkv("pos_a", "pos", "mem"),
    "b": qkv("pos_b", "pos", "mem"),
    "c": qkv("pos_c", "pos", "mem"),
    # Value of registers at `a` and `b`.
    "mem_a": qkv("a", "pos", "mem"),
    "mem_b": qkv("b", "pos", "mem"),
}

return program_spec(
    variables=variables, heads=attention_heads, ffn_fn=ffn_fn,
    output_name="mem", input_range=mem_range, pos_range=NUM_POSITIONS,
    halt=halt_spec("state", halt_value=STATE_DONE),
)
\end{lstlisting}
\caption{Program for interpreting SUBLEQ instructions. The MLP function is specified in Figure~\ref{fig:program-subleq-mlp}.}
\label{fig:program-subleq}
\end{figure}

\FloatBarrier

\subsection{SCAN}
\label{sec:appendix_scan}

\paragraph{Background} The SCAN suite of compositional generalization tasks~\citep{lake2018generalization} require mapping natural language commands (e.g., ``\emph{jump twice and look left}'') to action sequences (e.g., \texttt{JUMP JUMP LTURN LOOK}). The suite is inspired by the linguistic notion of systematic compositionality, i.e., the ability to recombine a finite set of elements in novel ways ~\citep{chomsky1957syntactic,montague1970universal}. Certain splits have been shown to be challenging for Transformer-based models~\citep{keysers2020measuring,furrer2020compositional,qiu2022evaluating,kazemnejad2024impact}, especially the length split and the Maximum Compound Divergence (MCD) splits proposed by~\citet{furrer2020compositional}. Notably, no Transformer-based model has been shown to reliably solve these tasks, without relying on some symbolic decomposition of the task~\citep{zhou2023leasttomost} or training data augmented by a symbolic system~\citep{qiu2022improving}. Other successful solutions have also involved symbolic parsing of some form~\citep{shaw2021compositional,chen2020compositional,herzig2021span}. While prior work has studied the expressivity of various classes of Transformers with respect to formal languages, it has primarily focused on recognizing and generating Dyck languages~\citep{yao2021self,bhattamishra2020ability,hahn2020theoretical,weiss2021thinking,ebrahimi2020can}, and thus has not previously shown a constructive demonstration of how a Transformer can solve the SCAN task.

\paragraph{Program} Our approach follows~\citet{shaw2021compositional} in formalizing the SCAN task as translation given a quasi-synchronous context-free grammar. Notably, the SCAN grammar is unambiguous and can be parsed in linear time. First, the program executes a shift-reduce parse of the input sequence, representing the parse as a tree. Second, the ALTA program decodes the output sequence by traversing the parse tree. The program represents the necessary variable-length data structures (a stack, parse tree, and buffer) using a variable number of input tokens. We include additional ``memory'' tokens in the input to ensure there are a sufficient number of tokens to represent these structures. We give an example of shift-reduce parsing for SCAN in Table~\ref{tab:scan_shift_reduce}.

\begin{table*}[t!]
\centering
\caption{
Example of shift-reduce parsing for SCAN, for the input ``jump twice''. Our ALTA program represents the state of the stack and parse tree using a variable number of input tokens.
}
\scalebox{0.85}{
\def\arraystretch{1.3}
\begin{tabular}{cccc} 
\toprule
\bf Action & \bf Stack & \bf Parse Tree & \bf Input Buffer \\ 
\midrule
Initialize & $\langle$ $\rangle$ & $\langle$ $\rangle$ & 
$\langle$ \texttt{jump}, \texttt{twice} $\rangle$ \\

Shift & $\langle$ \texttt{jump} $\rangle$ & $\langle$ $\rangle$ & 
$\langle$  \texttt{twice} $\rangle$ \\

Reduce & $\langle$ \texttt{NT} $\rangle$ & $\langle$
\texttt{NT} $\rightarrow$ \texttt{jump}
$\rangle$ & 
$\langle$  \texttt{twice} $\rangle$ \\

Shift & $\langle$ \texttt{NT}, \texttt{twice} $\rangle$ & $\langle$
\texttt{NT} $\rightarrow$ \texttt{jump}
$\rangle$ & 
$\langle$  $\rangle$ \\

Reduce & $\langle$ \texttt{NT} $\rangle$ &  $\langle$ 
\texttt{NT} $\rightarrow$ \texttt{jump},
\texttt{NT} $\rightarrow$ 
\texttt{NT twice}
$\rangle$ & 
$\langle$  $\rangle$ \\
\bottomrule
\end{tabular}
} %
\label{tab:scan_shift_reduce}
\end{table*}

Similarly to our approach to the addition task, while SCAN has commonly been studied using encoder-decoder or decoder-only Transformers, we can also represent the task using an encoder-only Transformer. We ensure there is a sufficient number of ``memory'' tokens so that the entire output sequences can be represented in the encoder output, even if the output sequence exceeds the length of the input sequence.

In Figure~\ref{fig:program-scan} we show a program for parsing SCAN inputs using a shift-reduce parser. This program is a fragment of the overall SCAN program, which also decodes the output from the parsed representation, but is too long to include in this paper. We refer the reader to our open-source code for the full SCAN program.

\begin{figure}[th!]
\centering
\begin{lstlisting}[language=alta,numbers=none] 
def shift_stack_pointers(z, stack_pointer_offset):
  new_stack_pointer_0 = z["stack_pointer_0"] + stack_pointer_offset
  z["stack_pointer_0"] = new_stack_pointer_0
  z["stack_pointer_1"] = new_stack_pointer_0 - 1
  z["stack_pointer_2"] = new_stack_pointer_0 - 2
  z["stack_pointer_3"] = new_stack_pointer_0 - 3

def reduce(z, matched_rule):
  # Pop RHS elements and add LHS nonterminal to stack.
  if z["position"] == (z["stack_pointer_0"] - rule_len(matched_rule)):
    z["symbol_id"] = rule_lhs_id(matched_rule)
  shift_stack_pointers(z, 1 - rule_len(matched_rule))
  # Add rule to parse tree.
  if z["position"] == z["tree_pointer"]:
    # Use 1-indexing to reserve 0 for no rule.
    z["rule_id"] = rule_id(matched_rule)
  z["tree_pointer"] += 1

def shift(z):
  # Shift the next token to the stack.
  if z["position"] == z["stack_pointer_0"]:
    z["symbol_id"] = get_symbol_id(z["input_pointer_token_id"])
  shift_stack_pointers(z, 1)
  z["input_pointer"] += 1

def ffn_fn(z):
  if not z["done"]:
    # Check if top-3 stack symbols (and 1 lookahead token) match any rule.
    matched_rule = maybe_match_rule(
        z["input_pointer_token_id"],
        z["stack_symbol_1"],
        z["stack_symbol_2"],
        z["stack_symbol_3"],
    )
    if matched_rule is not None:
      reduce(z, matched_rule)
    else:
      # Check if parsing is complete.
      if z["input_pointer_token_id"] == EOS_ID:
        z["done"] = 1
      else:
        shift(z)
\end{lstlisting}
\caption{MLP function for parsing SCAN.}
\label{fig:program-scan-mlp}
\end{figure}

\begin{figure}[th!]
\centering
\begin{lstlisting}[language=alta,numbers=none] 

variables = {
    "token": var(NUM_INPUT_TOKENS, input_init_fn=lambda x: x),
    "position": var(NUM_POSITIONS, position_init_fn=lambda x: x),
    # Whether parsing is complete.
    "done": var(2),
    # Pointer to the next stack position, and then the top 3 elements on
    # the stack.
    "stack_pointer_0": var(NUM_POSITIONS, default=STACK_OFFSET),
    "stack_pointer_1": var(NUM_POSITIONS, default=STACK_OFFSET - 1),
    "stack_pointer_2": var(NUM_POSITIONS, default=STACK_OFFSET - 2),
    "stack_pointer_3": var(NUM_POSITIONS, default=STACK_OFFSET - 3),
    # Pointer to write the next rule to.
    "tree_pointer": var(NUM_POSITIONS, default=TREE_OFFSET),
    # Pointer to the next input token to process.
    "input_pointer": var(NUM_POSITIONS, default=INPUT_OFFSET),
    # Stores index of associated parsing rule.
    "rule_id": var(NUM_RULES),
    # Stores symbol ID associated with stack element.
    "symbol_id": var(NUM_SYMBOLS),
}

heads = {
    # Get token at input pointer.
    "input_pointer_token_id": qkv("input_pointer", "position", "token"),
    # Get top 3 symbols on stack.
    "stack_symbol_1": qkv("stack_pointer_1", "position", "symbol_id"),
    "stack_symbol_2": qkv("stack_pointer_2", "position", "symbol_id"),
    "stack_symbol_3": qkv("stack_pointer_3", "position", "symbol_id"),
}

return program_spec(
    variables=variables, heads=heads, ffn_fn=ffn_fn,
    output_name="rule_id",
    input_range=NUM_INPUT_TOKENS,
    position_range=NUM_POSITIONS,
)
\end{lstlisting}
\caption{Program for parsing SCAN. The MLP function is defined in Figure~\ref{fig:program-scan-mlp}.}
\label{fig:program-scan}
\end{figure}

\paragraph{Program Size} The minimal version of the SCAN program compiles to a Transformer encoder with fewer than 2,000 MLP dimensions and 13 attention heads. The number of layers required scales with the input length and number of tree traversal operations to decode the output from the parse tree. All examples in the dataset can be parsed with fewer than 512 layers. However, it is likely that more layer-efficient implementations exist.

\paragraph{End-to-end Training Details}

We trained \textit{encoder-decoder} Transformers end-to-end on the SCAN length and MCD splits. We varied the number of layers, whether weight sharing was used, and the type of positional encodings used in the encoder. When using weight sharing, we trained with up to 256 encoder layers and 256 decoder layers (as our ALTA program requires at most 512 total layers). Without weight sharing, we only trained with up to 64 encoder layers and 64 decoder layers due to memory constraints.

Constant in all experiments were the following hyperparameters: embeddings with dimension 128, hidden layer sizes of 512, 8 attention heads with dimension 128, an Adafactor optimization function, GeLU activation functions, a learning rate of 5e-4, 100,000 steps, and a SentencePiece vocabulary \citep{kudo2018sentencepiece}.

\paragraph{Results}

Figure~\ref{fig:scan_ws} plots test accuracy when using weight sharing and relative positional encodings. Accuracy does not improve as the number of layers increases. (If anything, there is a slight inverse relationship between number of layers and test accuracy.) This is the case in all experiments, regardless of the type of positional encoding used in the encoder and whether weight sharing is used. (See Table~\ref{tab:scan_results_t5x} for all results.)

The fact that increasing the number of layers does not improve generalization indicates that when trained with standard supervision, Transformers are unable to take advantage of extra layers to learn a function consistent with the sequential ALTA program that generalizes perfectly. Notably, all end-to-end experiments fit the training set, even with just two layers, so there exists some other algorithm that fits the training set that requires at most two layers. We speculate that in all cases, the end-to-end supervised Transformers are unable to learn the sequential algorithm with perfect generalization because of a bias towards learning algorithms that require fewer layers to execute. These results are consistent with our results on Parity, in which Transformers trained with end-to-end supervision do not learn the sequential algorithm with perfect generalization, instead seeming to mimic an algorithm that requires just one layer (see \S\ref{sec:appendix_parity}).

Similarly, prior work has generally not shown a consistent benefit from increasing model size or number of layers on SCAN or other similar compositional generalization tasks when fine-tuning. \citet{furrer2020compositional} evaluated models of various sizes on SCAN and did not find that increasing size improves generalization. \citet{ontanon2022making} showed some improvement on the SCAN length split when increasing the number of layers with weight sharing, but only evaluated up to six layers. \cite{qiu2022evaluating} and \citet{petty2024impact} systematically evaluated the impact of model size and depth, respectively, on compositional generalization (though not on SCAN) and found that when fine-tuning, scaling curves are flat or quickly saturate. However, these papers evaluated models with at most 118 layers, and only \citet{ontanon2022making} evaluated models with varying numbers of layers using weight sharing.

\begin{table*}[h!]
\centering
\caption{
Test accuracy for all Transformers trained with standard, end-to-end supervision on SCAN. Regardless of the configuration and split, increasing the number of layers does not improve test accuracy.
}
\scalebox{0.85}{
\def\arraystretch{1.3}
\begin{tabular}{cccccccccc} 
\hline
& & & \multicolumn{7}{c}{\bf Number of Layers} \\
\cmidrule(lr){4-10}
\bf Split & \bf Weight Sharing & \bf Encoder Positional Encoding & \textbf{2} & \textbf{6} & \textbf{16} & \textbf{32} & \textbf{64} & \textbf{128} & \textbf{256} \\ \hline
Length & Yes & Relative & 11\% & 10\% & 9\% & 7\% & 7\% & 7\% & 5\% \\
Length & Yes & Absolute & 8\% & 9\% & 7\% & 5\% & 5\% & 6\% & 4\% \\
Length & No & Relative & 9\% & 9\% & 11\% & 10\% & 10\% & -- & -- \\
Length & No & Absolute & 9\% & 6\% & 7\% & 6\% & 4\% & -- & -- \\
MCD1 & Yes & Relative & 5\% & 3\% & 2\% & 4\% & 3\% & 3\% & 2\% \\
MCD1 & Yes & Absolute & 4\% & 1\% & 3\% & 2\% & 2\% & 2\% & 1\% \\
MCD1 & No & Relative & 4\% & 4\% & 3\% & 2\% & 2\% & -- & -- \\
MCD1 & No & Absolute & 5\% & 2\% & 1\% & 1\% & 2\% & -- & -- \\
MCD2 & Yes & Relative & 9\% & 13\% & 9\% & 5\% & 4\% & 7\% & 3\% \\
MCD2 & Yes & Absolute & 5\% & 6\% & 2\% & 2\% & 2\% & 2\% & 2\% \\
MCD2 & No & Relative & 13\% & 7\% & 4\% & 5\% & 4\% & -- & -- \\
MCD2 & No & Absolute & 4\% & 4\% & 2\% & 2\% & 1\% & -- & -- \\
MCD3 & Yes & Relative & 12\% & 12\% & 6\% & 5\% & 7\% & 2\% & 3\% \\
MCD3 & Yes & Absolute & 3\% & 4\% & 3\% & 3\% & 3\% & 2\% & 2\% \\
MCD3 & No & Relative & 8\% & 8\% & 7\% & 3\% & 2\% & -- & -- \\
MCD3 & No & Absolute & 4\% & 5\% & 2\% & 3\% & 1\% & -- & -- \\
\hline
\end{tabular}
} %
\label{tab:scan_results_t5x}
\end{table*}

\begin{figure}[t!]
\includegraphics[width=1.0\textwidth]{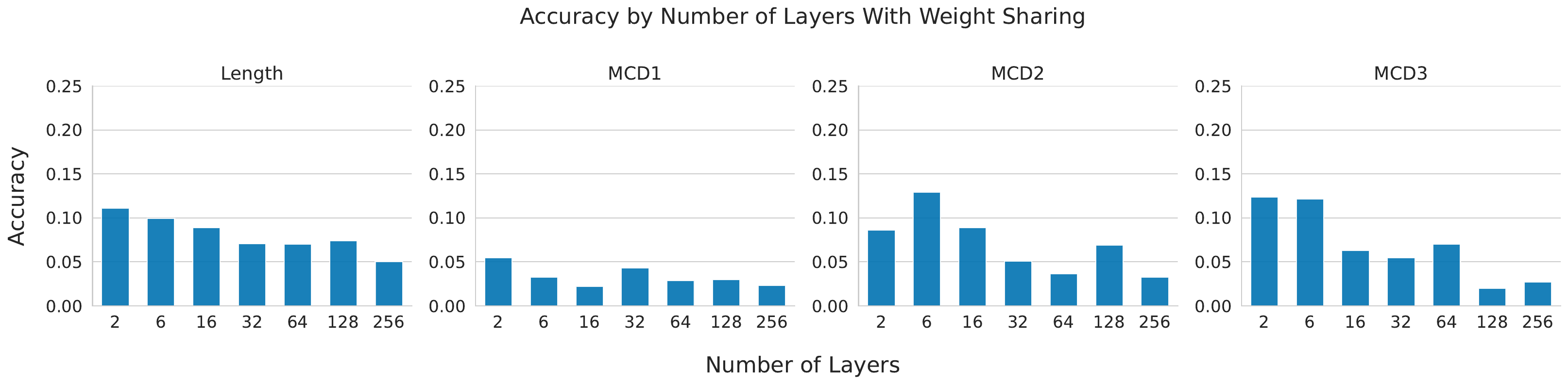} 
\caption{Test accuracy by number of layers on SCAN splits when trained using standard, end-to-end supervision with weight sharing and relative positional encodings. Test accuracy does not increase as the number of layers increases.}
\label{fig:scan_ws}
\end{figure}

\end{document}